\documentclass[11pt]{article}

\usepackage{fullpage,times}

\pdfoutput=1

\usepackage{times}

\usepackage[utf8]{inputenc}
\usepackage{hyperref}
\hypersetup{
        colorlinks   = true,
        urlcolor     = blue,
        linkcolor    = blue,
        citecolor   = blue
      }

\usepackage{bm}
\usepackage{url}
\usepackage{booktabs}
\usepackage{amsmath,amsfonts,amssymb,thmtools,amsthm}
\usepackage{algorithm2e}
\usepackage[capitalize]{cleveref}
\usepackage{nicefrac}
\usepackage{microtype}
\usepackage{natbib}
\usepackage{setspace}
\usepackage{yhmath}

\usepackage{enumitem}

\usepackage[nolist,nohyperlinks]{acronym}
\usepackage{color}
\usepackage{bold-extra}
\usepackage{mathtools}
\usepackage{etoolbox}
\usepackage{xfrac}
\usepackage{nomencl}
\usepackage{graphicx}
\usepackage{wrapfig}
\usepackage{float}
\usepackage{constants}

\RestyleAlgo{ruled}

\newcommand{\pr}[1]{\left( #1 \right)}

\newcommand{\cbr}[1]{\left\{ #1 \right\}}
\newcommand{\abs}[1]{\left|#1\right|}
\newcommand{\ceil}[1]{\left\lceil #1 \right\rceil}

\newcommand{\lf}{\left}
\newcommand{\rt}{\right}
\newcommand{\bmid}{\;\middle|\;}
\newcommand{\tp}{^{\intercal}}
\newcommand{\ip}[1]{\left\langle #1 \right\rangle}
\newcommand{\ind}[1]{\mathbb{I}\cbr{ #1 }}

\renewcommand{\P}{\operatorname{\mathbb{P}}} %
\newcommand{\E}{\operatorname{\mathbb{E}}} %

\newcommand*\diff{\mathop{}\!\mathrm{d}}

\DeclareMathOperator*{\argmin}{arg\,min}

\newcommand{\lmin}{\lambda_{\min}}
\newcommand{\lmax}{\lambda_{\max}}

\newcommand{\Xdom}{\mathbb{S}^{d-1}}

\newcommand{\df}{\stackrel{\mathrm{def}}{=}}
\newcommand{\leqC}{\lesssim}
\newcommand{\geqC}{\gtrsim}

\newcommand{\bM}{\mathbf{M}}

\newcommand{\bx}{\mathbf{x}}

\newcommand{\by}{\mathbf{y}}

\newcommand{\btilx}{\mathbf{\tilde{x}}}
\newcommand{\btily}{\mathbf{\tilde{y}}}

\newcommand{\bu}{\mathbf{u}}

\newcommand{\bw}{\mathbf{w}}

\newcommand{\bh}{\mathbf{h}}
\newcommand{\bg}{\mathbf{g}}

\newcommand{\bv}{\mathbf{v}}

\newcommand{\btilw}{\mathbf{\tilde{w}}}

\newcommand{\bK}{\mathbf{K}}
\newcommand{\bhK}{\mathbf{\hat{K}}}
\newcommand{\bI}{\mathbf{I}}

 \newcommand{\btheta}{\boldsymbol{\theta}}
\newcommand{\bbartheta}{\boldsymbol{\bar{\theta}}}

\newcommand{\btiltheta}{\boldsymbol{\tilde{\theta}}}
\newcommand{\bhtheta}{\boldsymbol{\hat{\theta}}}
\newcommand{\hL}{\hat{L}}

\newcommand{\sC}{\mathcal{C}}

\newcommand{\sE}{\mathcal{E}}
\newcommand{\sF}{\mathcal{F}}

\newcommand{\sH}{\mathcal{H}}

\newcommand{\sR}{\mathcal{R}}

\newcommand{\sN}{\mathcal{N}}

\newcommand{\sO}{\mathcal{O}}
\newcommand{\stilO}{\tilde{\mathcal{O}}}
\newcommand{\bPhi}{\mathbf{\Phi}}

\newcommand{\bphi}{\boldsymbol{\phi}}
\newcommand{\balpha}{\boldsymbol{\alpha}}

\newcommand{\bbeta}{\boldsymbol{\beta}}
\newcommand{\bgamma}{\boldsymbol{\gamma}}
\newcommand{\bbaralpha}{\boldsymbol{\bar{\alpha}}}

\newcommand{\bzero}{\boldsymbol{0}}

\newcommand{\bff}{\boldsymbol{f}} %

\newcommand{\ve}{\varepsilon}
\newcommand{\bve}{\boldsymbol{\varepsilon}}
\newcommand{\Uset}{\cbr{\pm 1/\sqrt{m}}^m}

\newcommand{\reals}{\mathbb{R}}

\newcommand{\fstar}{f^{\star}}

\newcommand{\superlabel}[1]{^{\mathrm{#1}}}

\newcommand{\rf}{\superlabel{rf}}
\newcommand{\ntk}{\superlabel{\kappa}}

{%
  \endtrivlist
}%

{\hfill$\square$}

\definecolor{light-gray}{gray}{0.5}

\newcommand{\poly}{\mathrm{poly}}
\newcommand{\unif}{\mathrm{unif}}

\newcommand{\hT}{\widehat{T}}

\newcommand{\conf}{\nu}

\newcommand{\polylog}{\mathrm{polylog}}
\newcommand{\kexcess}{\epsilon(n, \hT)}
\newcommand{\khexcess}{\widehat{\epsilon}(n, \hT)}
\newcommand{\kexcessfun}{\epsilon}
\newcommand{\khexcessfun}{\widehat{\epsilon}}

\makeatletter
\newcommand\x@overcf[2]{\overset{\vphantom{x}\smash{\raisebox{-0.5ex}{$#1\frown~$}}}{#2}}
\newcommand\overcf[1]{\mathpalette\x@overcf{#1}}
\makeatother %
\begin{acronym}
\acro{RLS}{Regularized Least Squares}
\acro{ERM}{Empirical Risk Minimization}
\acro{RKHS}{reproducing kernel Hilbert space}
\acro{DA}{Domain Adaptation}
\acro{PSD}{Positive Semi-Definite}
\acro{SGD}{Stochastic Gradient Descent}
\acro{OGD}{Online Gradient Descent}
\acro{GD}{Gradient Descent}
\acro{SGLD}{Stochastic Gradient Langevin Dynamics}
\acro{IS}{Importance Sampling}
\acro{WIS}{Weighted Importance Sampling}
\acro{MGF}{Moment-Generating Function}
\acro{ES}{Efron-Stein}
\acro{ESS}{Effective Sample Size}
\acro{KL}{Kullback-Liebler}
\acro{SVD}{Singular Value Decomposition}
\acro{PL}{Polyak-Łojasiewicz}
\acro{NTK}{Neural Tangent Kernel}
\acro{KLS}{Kernel Least-Squares}
\acro{KRLS}{Kernelized Regularized Least-Squares}
\acro{ReLU}{Rectified Linear Unit}
\acro{NTRF}{Neural Tangent Random Feature}
\acro{NTF}{Neural Tangent Feature}
\acro{RF}{Random Feature}
\acro{RWY}{Raskutti-Wainwright-Yu}
\acro{CW}{Celisse-Wahl}
\end{acronym} 
\newcommand{\jmlrQED}{\hfill$\square$}
\renewcommand{\Cref}[1]{\cref{#1}}

\newtheorem{theorem}{Theorem}
\newtheorem{definition}{Definition}
\newtheorem{lemma}{Lemma}
\newtheorem{corollary}{Corollary}
\newtheorem{remark}{Remark}
\newtheorem{proposition}{Proposition}
\newtheorem{assumption}{Assumption}

\renewconstantfamily{normal}{
  symbol=c,
  format=\arabic
}

\newconstantfamily{log}{
  symbol=\tilde{c},
  format=\arabic}
\newconstantfamily{eps}{
  symbol=\epsilon,
  format=\arabic}

\newcommand{\nop}[1]{}
\newconstantfamily{init}{
  symbol=c_{\mathrm{init}},
  format=\nop}
\newconstantfamily{width}{
  symbol=c_{\mathrm{width}},
  format=\nop}

\setlength{\marginparwidth}{13ex}

\setlength{\parindent}{0pt}

\title{Learning Lipschitz Functions by GD-trained Shallow Overparameterized ReLU Neural Networks}
\date{}

\author{%
  Ilja Kuzborskij\\
  DeepMind, London\\
  \texttt{iljak@deepmind.com}
  \and
  Csaba Szepesv\'ari\\
  DeepMind, Canada and University of Alberta, Edmonton\\
  \texttt{szepi@deepmind.com}
}

\begin{document}

\maketitle

\begin{abstract}%
  We explore the ability of overparameterized shallow ReLU neural networks to learn Lipschitz, non-differentiable, bounded functions with additive noise when trained by \ac{GD}.
  To avoid the problem that in the presence of noise,
  neural networks trained to nearly zero training error are inconsistent in this class,
  we focus on the early-stopped \ac{GD} which allows us to show consistency and optimal rates.
  In particular, we explore this problem from the viewpoint of the \ac{NTK} approximation of a \ac{GD}-trained finite-width neural network.
  We show that whenever some early stopping rule is guaranteed to give an optimal rate (of excess risk) on the Hilbert space of the kernel induced by the ReLU activation function, the same rule can be used to achieve
  minimax optimal rate for learning on the class of considered Lipschitz functions by neural networks.
  We discuss several data-free and data-dependent practically appealing stopping rules that yield optimal rates.
\end{abstract}

\section{Introduction}
\label{sec:setting}
In this work we consider the setting where the learner is given a \emph{training sample} $S = (\bx_i, y_i)_{i=1}^n$
consisting of \emph{inputs} $\bx_i$ and \emph{targets} $y_i$.
Each input $\bx_i$ is independently sampled from some unknown probability measure $P_X$ over $\Xdom = \{\bx \in \reals^{d} ~:~ \|\bx\|_2 = 1\}$, and targets are generated independently from each other according to the regression model
\[
  y_i = \fstar(\bx_i) + \ve_i \qquad (i \in [n])
\]
where $\ve_1, \ldots, \ve_n$ are independent bounded noise variables satisfying $\E[\ve_i \mid \bx_i] = 0$ and $\E[\ve_i^2 \mid \bx_i] = \sigma^2$.
Here function $\fstar$ is called the \emph{regression function}, and we assume that it is uniformly bounded and Lipschitz, meaning that it satisfies $|\fstar(\bx) - \fstar(\btilx)| \leq \Lambda \|\bx - \btilx\|$ for all $\bx, \btilx \in \Xdom$ for some $0 \leq \Lambda < \infty$.
Note that we do not assume that $\fstar$ is differentiable.
The above implies that the targets are bounded, and throughout the paper we have $|y_i| \leq B_y$ for some finite constant $B_y$.
In the following we consider learning $\fstar$ by the \emph{shallow neural network predictor} defined as
\begin{align*}
  f_{\btheta}(\bx) = \sum_{k=1}^m u_k (\bw_k\tp \bx)_+ \qquad \pr{\btheta = (\bw_1, \ldots, \bw_m) \in \reals^{d m}, \,\, \bx \in \Xdom}~.
\end{align*}
Here $(x)_+ = \max\cbr{x,0}$ is called the \ac{ReLU} \emph{activation function},
$m$ is the \emph{width} of the network, $\btheta$ is a tunable parameter vector of the \emph{hidden layer}, and $\bu \in \Uset$ is a fixed parameter vector of the output layer (\Cref{alg:GD} describes parameter setting).
In particular, we are interested in the ability of such a shallow neural network fitted on the (noisy) sample $S$ to learn the regression function.
This ability is captured by the \emph{excess risk}
\begin{align*}
  \|f_{\btheta} - \fstar\|_2^2 = \int_{\Xdom} (f_{\btheta}(\bx) - \fstar(\bx))^2 \diff P_X(\bx)~.
\end{align*}
The goal of a learning procedure is to minimize the excess risk as quickly as possible based on the sample.
At the very least we require that the procedure producing parameters $\bhtheta_n$ is asymptotically \emph{consistent}, meaning that $\|f_{\bhtheta_n} - \fstar\|_2^2 \to 0$ as $n \to \infty$ in probability.
It is well known that in the non-asymptotic sense, for the considered class of regression functions, the minimax excess risk behaves as $\|f_{\bhtheta_n} - \fstar\|_2^2 = \Theta_{\P}(n^{-\frac{2}{2+d}})$~\citep{gyorfi2006distribution}.

In this paper we are interested in a learning scenario where to achieve this goal,
parameters  are tuned by minimization of the \emph{empirical risk}
\begin{align*}
  \hL(\btheta) = \frac1n \sum_{i=1}^n (f_{\btheta}(\bx_i) - y_i)^2~.
\end{align*}%
Note that $\hL(\cdot)$ is a non-convex function due to non-linearity of the \ac{ReLU} function $(\cdot)_+$.
In particular, for minimization of $\hL(\cdot)$ we employ \ac{GD} procedure given in \Cref{alg:GD}.
\ac{GD} has some of the parameters on its own, that is the number of steps $T$, step size $\eta$, and the width of the neural network $m$.\footnote{Without loss of generality, $m$ is an even integer.}
Results presented later on, consider a fully data-dependent tuning of $(\eta, T, m)$.
\begin{algorithm}  
  \caption{\acl{GD} for training shallow neural networks.}\label{alg:GD}
    \KwData{
      $S$ -- training sample, $T$ -- number of steps, $\eta \in (0, \frac12]$ -- step size, $m$ -- network width.
      }
    \KwResult{Trained parameters of a hidden layer $\btheta_T$.}
    Set $\bu = (u_1, \ldots, u_m)$ as $u_k = -\frac{1}{\sqrt{m}}$ for $k \leq \frac{m}{2}$ and $u_k = \frac{1}{\sqrt{m}}$ for $k > \frac{m}{2}$\;
    Set $\btheta_0 = (\bw_{0,1}, \ldots, \bw_{0,m})$ as $\bw_{0,k} \stackrel{\mathrm{i.i.d.}}{\sim} \sN(\bzero, \bI_d)$ for $k \leq \frac{m}{2}$, and $\bw_{0,\frac{m}{2}+k} = \bw_{0,k}$ for $k > \frac{m}{2}$\;
    \For{$t=1,\ldots,T$}{
    $\btheta_{t} \gets \btheta_{t-1} - \eta \nabla \hL(\btheta_{t-1})$\;
  }
  \end{algorithm}
\begin{remark}
  \label{rem:initial}
  The choice of initial parameters $(\bu, \btheta_0)$ in \Cref{alg:GD} ensures that $f_{\btheta_0}(\bx) = 0$ for all $\bx$ and moreover that $\hL(\btheta_0) \leq B_y^2$.
\end{remark}
\paragraph{Prior work.}
Understanding learning ability of algorithms for training neural networks often concerns behavior of the \emph{statistical risk} (a.k.a.\ the risk) %
defined as
$L(\btheta) = \E[(f_{\btheta}(\bx) - y)^2 \mid \btheta]$,
which is closely related to the excess risk studied here,
\begin{align*}
  \E[\|f_{\btheta} - \fstar\|_2^2 \mid \btheta] = L(\btheta) - \sigma^2~.
\end{align*}
Study of the risk in neural network learning is a long-lived topic of interest~\citep{devroye1996probabilistic,anthony1999neural}.
The classical approach to this problem considers \emph{uniform bounds} on the risk, which hold simultaneously over all members of a given class of neural networks for all data distributions.
In such an algorithm-free analysis, bounds are controlled by some notion of the capacity of the class, such as VC-dimension, Rademacher complexity, or metric entropy~\citep{anthony1999neural,bartlett2002rademacher,bartlett2017spectrally,golowich2018size}.
Clearly, these bounds due to their generality can be used to argue about the risk of the predictor generated by a concrete algorithm, such as \ac{GD} we consider here.
To this end, this gives us high-probability bounds on the risk which capture the trade-off between how well the algorithm fits the sample and complexity of the solution (which is measured by some norm of the parameters, such as $\ell^2$):
\begin{align*}
  L(\btheta_T) \leq \hL(\btheta_T) + c \, \frac{\text{network-complexity}(\btheta_T)}{\sqrt{n}}~.
\end{align*}
Thus, if one could minimize the empirical risk while keeping network complexity under control, one could keep the risk close to the noise rate $\sigma^2$.
Unfortunately, minimization of such an objective is theoretically challenging because of its non-convex nature, and so it is non-obvious whether \ac{GD} can minimize it till the required precision in general.
On the other hand, nowadays it is a common knowledge that \emph{overparameterized} neural networks (when the width $m$ by far exceeds the sample size $n$) are able to achieve arbitrarily small training error on any training sample.
From a statistical perspective, driving $\hL(\btheta_T) \to 0$ should lead to fitting the noise, or \emph{overfitting}, however \citet{zhang2021understanding} and others concluded that training neural networks by \ac{GD}-type algorithms to near-zero empirical risk results only in a mild overfitting.
As such, these algorithms are often able to perform well on the unseen sample despite of the absence of explicit complexity control.

To this end, from the optimization viewpoint, some tangible progress was recently made by \cite{du2018gradient,allen2019convergence,oymak2020towards,zou2020gradient,nguyen2021tight,song2021subquadratic} and others, who proposed an explanation why
\ac{GD} is able to minimize the empirical risk in the overparameterized regime.
In particular, these works show an exponential convergence rate with high probability:\footnote{Throughout this paper, we use $f \leqC g$ to say that there exists a universal constant $c > 0$ such that $f \leq c \, \mathrm{polylog}(g) \, g$ holds uniformly over all arguments.}
\begin{align*}
  \hL(\btheta_T) \leqC \pr{1-c \eta \, \frac{d}{n}}^T~.
\end{align*}
Roughly speaking, these proofs are based on the observation that under sufficient overparameterization ($m = \poly(n,d)$), the predictor $f_{\btheta_T}$ behaves similarly to the linear predictor defined on a large feature space constructed from the gradient of the network around its (randomized) initialization, a so-called \ac{NTF} space.
At the same time, considering the setting when $m \to \infty$, one can show~\citep{jacot2018neural} that under appropriate conditions such a feature space gains approximation power of a \ac{RKHS} \citep{cucker2007learning}, giving rise to the \ac{NTK} theory.
To some extent, this connection should not come as a surprise since \ac{NTF} is a particular case of the \ac{RF} framework well-studied over decades~\citep{rahimi2007random,rudi2017generalization}.

The \ac{NTK} theory established a bridge between neural network learning and the \ac{RKHS} literature, and as such led to many interesting results (see review by \cite{bartlett2021deep}).
Going back to the question of the risk behavior,
\ac{NTK} theory might also be used as a tool to understand complexity of a resulting predictor by looking at it through the lens of learning with kernels.
A number of works put forward possible explanations for the complexity control in \ac{GD}-trained neural networks suggesting that the \ac{GD} algorithm is responsible for an \emph{implicit} complexity control (or implicit bias), and analyzed it through the \ac{RKHS} viewpoint~\citep{arora2019fine,allen2019learning,ji2019polylogarithmic,hu2021regularization}.
For example,~\citet{arora2019fine} showed that in the \emph{noise-free} setting ($\sigma = 0$) the risk is controlled by the norm of a minimal-norm $\ell^2$ kernel interpolant (predictor perfectly fitting the sample),
\begin{align*}
  \lim_{T \to \infty} L(\btheta_T) = \sO_{\P}\pr{\frac{\|\text{minimal-norm-interpolant}\|_2}{\sqrt{n}}} \quad \text{as} \quad n \to \infty~.
\end{align*}
Moreover, \cite{arora2019fine} demonstrated several examples when the norm above can be bounded when the regression function $\fstar$ is a shallow polynomial neural network of infinite width.
Results in a similar spirit were also obtained for a hard-margin classification setting~\citep{ji2019polylogarithmic}.

At the same time, for $\sigma > 0$, it is known that \emph{interpolating} neural networks (achieving zero empirical risk) are in general \emph{inconsistent}~\citep{kohler2019over,hu2021regularization}.
To this end, Corollary~1 of~\cite{kohler2019over} implies that as long as $\hat L(\btheta_T)=o(1/n)$, for any large enough $n$
there exists a distribution $P$ with, say, $\sigma^2=1/4$, such that $\E L(\btheta_T)-\sigma^2 \ge c$ for some universal constant $c>0$, where the distribution can even be chosen to be `sufficiently regular', though the marginal $P_X$ with respect to the inputs will be an atomic distribution.

\subsection{Our contribution}
In this work we focus on the case $\sigma > 0$ and `complex' regression functions $\fstar$, that is, non-differentiable, Lipschitz, and bounded.
The setting of learning such functions is called \emph{nonparametric}, where in general the number of parameters (here, width $m$), has to increase with the number of observations for us to be able to estimate $\fstar$ well.
Recall that the minimax-optimal rate of the excess risk in this setting is $\Theta_{\P}(n^{-\frac{2}{2+d}})$~\citep{gyorfi2006distribution}.
In particular, we show that early-stopped \ac{GD} without explicit penalization (\Cref{alg:GD}) used for training overparameterized shallow ReLU neural networks is \emph{consistent} in the considered setting
and \emph{matches} this optimal rate.

Our main result is a `reduction' theorem, which allows us to convert any excess risk bound for the early-stopped \ac{GD} learning in the \ac{RKHS} $\sH$ (induced by the \ac{NTK} of ReLU), to the excess risk bound for \ac{GD} using the same stopping rule and learning Lipschitz functions by neural networks.
More formally, suppose that given any function $h \in \sH$
with $\|h\|_{\sH}^2 \leq R$,\footnote{$\|\cdot\|_{\sH}$ is $\ell^2$ norm on RKHS --- see \Cref{sec:rkhs} for a summary of RKHS definitions and notation.} we can guarantee that \ac{GD} stopped according some rule at step $\hT$ achieves excess risk bound $(1+R) \, \kexcess$.
Then, in \Cref{thm:random-design} we show that a \ac{GD}-trained neural network $f_{\hT}$ of width $m \geq \poly_6\pr{(1+1/d) n}$, with high probability satisfies\footnote{Here $\poly_k(\cdot)$ stands for a polynomial of degree $k$.}
\begin{align}
  \label{eq:pre-rate}
  \|f_{\hT} - \fstar\|_2^2 = \stilO_{\P}\pr{\Lambda^2 (\kexcess)^{\frac2d} + \kexcess + \frac{\poly_4( n/d)}{\sqrt{m}}} \qquad \text{as} \qquad n \to \infty~.
\end{align}
Here, term $\Lambda^2 (\kexcess)^{\frac2d}$ is a price we pay for approximation of a Lipschitz function $\fstar$ by a function in \ac{RKHS},
term $\kexcess$ is a cost of learning such a function in \ac{RKHS},
and term $\poly_4(\cdot) / \sqrt{m}$ is a gap we pay for prediction with a neural network instead of learning on \ac{RKHS} directly --- more overparameterization means smaller gap
(see \Cref{sec:sketch} for the sketch of the analysis).

For now suppose that we chose an early-stopping rule for \ac{GD} which guarantees a minimax optimal rate of excess risk for learning on $\sH$, that is $\kexcess \leqC n^{-\frac{d}{2+d}}$ (we will get back shortly to some concrete rules).
Then, it is clear that for a sufficiently (polynomially) overparameterized neural network and for $d \geq 2$, we achieve an optimal nonparametric rate
\begin{align}
  \label{eq:rate}
  \|f_{\hT} - \fstar\|_2^2 = \stilO_{\P}\pr{\Lambda^2 \, n^{-\frac{2}{2+d}}} \qquad \text{as} \qquad n \to \infty~.
\end{align}
Polynomial overparameterization considered here is a standard assumption in the literature on training of shallow networks by \ac{GD}~\citep{du2018gradient,oymak2020towards}.
The rate of \cref{eq:rate} is optimal in the sample size and input dimension, however it is suboptimal in the Lipschitz constant $\Lambda$ as one would expect to see $\Lambda^{\frac{2}{2+d}}$ in place of $\Lambda^2$.
This worse dependence comes at the expense of an agnostic algorithm, which does not require any knowledge of $\Lambda$.
A similar bound also holds for the \emph{fixed design} scenario, namely where inputs $\bx_1, \ldots, \bx_n$ are fixed, with a polynomial term improved till $\poly_3(\cdot)$, see \Cref{thm:fixed-design}.

In contrast to \cref{eq:rate}, a minimax optimal rate of learning on \ac{RKHS} balls is $\Theta(n^{-\frac{d}{d+2}})$ \citep{steinwart2009optimal,bach2017breaking} --- a much faster rate, however
such balls only contain \emph{highly smooth functions}:
\citet{bietti2019inductive} identified that these are \emph{at least} $\ceil{d/2}$ times differentiable even functions on a sphere with bounded derivatives. %
Thus, the rate shown here suggest that \ac{GD}-trained neural networks can potentially learn much more complex functions than those within fixed \ac{RKHS} balls.
On the other hand,
\citet{bach2017breaking} observed that such \ac{RKHS}es seems to be rich enough to approximate non-differentiable Lipschitz functions
and that the approximation error decreases as we consider approximators of increasing norm.
In this work we exploit this fact together with observation that an early-stopped \ac{GD} is capable of learning such approximators at an optimal rate.

\paragraph{Non-empirical stopping rules.}
Now we turn our attention to a concrete example of a stopping rule which yields optimal rate of \cref{eq:rate}.
In particular, we consider the rule of \citet{dieuleveut2016nonparametric}, which
requires some knowledge about the regularity of the \ac{RKHS}.
One such standard regularity condition is the eigenvalue decay rate of the kernel function (given by the Mercer's theorem, see \Cref{sec:rkhs}), which roughly speaking captures how smooth functions belonging to the \ac{RKHS} can be.
In our case we consider a polynomial decay rate of order $k^{-\beta}$ for some $\beta > 0$: \ac{RKHS} of the \ac{NTK} derived from the \ac{ReLU} activation is known to have decay rate of order $k^{-\frac{d}{2}}$ \citep{bach2017breaking}. %
\citep[Corollary 3]{dieuleveut2016nonparametric} showed that by using \ac{GD} the step size $\eta = \tfrac12 n^{-\frac{1}{1+\beta}}$ and $\hT = n$, \ac{GD} enjoys excess risk of order
\begin{align*}
  \E[\kexcess] \leqC n^{-\frac{\beta}{\beta+1}}~.
\end{align*}
Combining this result (for $\beta = d/2$) with \cref{eq:pre-rate} (by using Markov's inequality) we indeed recover optimal rate of \cref{eq:rate} with the loss of a high probability guarantee.

It is interesting to ask which decay rate of eigenvalues is required to get optimal rates~\citep{pillaud2018statistical}.
For instance, we can consider a classical analysis of early-stopped \ac{GD} of~\cite{yao2007early} which states that for the number of steps chosen as $\hT = n^{\frac13}$, with high probability we have excess risk of order $n^{-\frac{1}{3}}$ for learning in \ac{RKHS}.
This results in a suboptimal excess risk
$
\|f_{\hT} - \fstar\|_2^2 = \sO_{\P}\pr{\Lambda^2 \, n^{-\frac{2}{3 d}}}~.
$
\paragraph{Empirical stopping rule.}
The stopping rule of \cite{dieuleveut2016nonparametric} already gives us optimal rate for a known non-empirical tuning of the algorithm.
To be clear, this is possible because we know the decay rate of kernel spectrum in the case of learning with shallow \ac{ReLU} neural networks.
On the other hand, consider going beyond the setting of our paper where we might want to extrapolate to the situation where a more complex neural network is used (for instance, a multilayer one), while we might still believe that \ac{NTK} approximation is a reasonable model to capture learning.
In such case the decay rate of eigenvalues is unknown and we might want to have an adaptive stopping rule which still learns at the best possible rate.

Here we discuss one of the many data-dependent stopping rules designed for this purpose (see \citep{celisse2021analyzing} for a comprehensive overview).
Specifically, we consider the rule of \ac{RWY} \citep{raskutti2014early}, which empirically controls the complexity of a predictor class within the \ac{RKHS}.\footnote{Technically, the rule controls an upper bound on the empirical localized Rademacher complexity of the ball within the \ac{RKHS}~\citep{bartlett2002localized,wainwright2019high}.}
Computationally, this boils down to having access to the eigenvalues of the \ac{NTK} matrix (or say its empirical approximation, the \ac{NTF} Gram matrix) $\lambda_1 \geq \lambda_2 \geq \ldots \geq \lambda_n$.
In particular, the rule suggests stopping \ac{GD} after $\hT$ steps where
\begin{align}
  \label{eq:stopping}
  \hT = \min\cbr{t \in \mathbb{N} \bmid \hat{\sR}(1/\sqrt{\eta t}) > (2 e \sigma \eta t)^{-1}} - 1~,
  \quad\hat{\sR}(x) = \pr{\frac{1}{n} \sum_{i=1}^n (x^2 \vee (\lambda_i/n))}^{\frac12}~.
\end{align}
Intuitively, in the above $\hat{\sR}(x)$ captures the complexity of the ball within the \ac{RKHS}: The smaller is the argument $x$, the simpler is the class of considered functions.%
\footnote{Since functions on the \ac{RKHS} are defined in the span of the kernel eigenbasis (see \Cref{sec:rkhs}), simpler functions lie in the low-dimensional subspace spanned by the largest eigenvalues. Note that parameter $x$ in $\hat{\sR}(x)$ determines the cut-off for eigenvalues.}
Thus, the number of steps is selected to restrict the solution of \ac{GD} to the simplest possible functions given the data, where the simplicity is chosen with respect to the noise rate $\sigma^2$.
Note that $\hT \to \infty$ as $\sigma \to 0$, which suggests that in the noise-free regime \ac{GD} interpolates on the training sample.
This rule requires the knowledge of $\sigma^2$, which is not unreasonable and can be estimated on a hold-out set.
As a sanity check, we verify that this rule is still able to recover the optimal rate in our Lipschitz setting.
\citet{raskutti2014early} (see also \Cref{sec:rwy-result}) showed that \ac{GD} using this rule is guaranteed with high probability to achieve optimal excess risk with
$\kexcess \leqC n^{-\frac{d}{d + 2}}$, and
so we recover the optimal rate for Lipschitz functions of \cref{eq:rate} as well.
\subsection{Related work}
\label{sec:related-new}
It is well known that the penalized \ac{ERM} procedure defined with respect to shallow neural networks,
that is $\bhtheta \in \argmin_{\btheta}\{ \hL(\btheta) + \mathrm{pen}(\btheta) \}$, matches nonparametric rate $\sO_{\P}(n^{-\frac{2}{2+d}})$ for the appropriate choice of penalization \citep{devroye1996probabilistic,gyorfi2006distribution}.
Here, a standard penalty function is a norm of parameters (usually $\ell^2, \ell^1$ or $\ell^{\infty}$) with a carefully tuned magnitude factor depending on $(\fstar, \sigma^2, S)$.
In this work we focus on \ac{GD} rather than \ac{ERM}, and show that \ac{GD} used to minimize empirical risk can achieve an optimal rate without knowledge of unknown parameters.
Several other recent works have also focused on excess risk of \ac{GD} for training shallow neural networks.

\citet{hu2021regularization} showed that for the regression functions within \ac{RKHS} induced by \ac{NTK}, \ac{GD} with a carefully tuned $\ell^2$ penalization
can achieve rate of excess risk
\begin{align*}
  \sO_{\P}\pr{ n^{-\frac{d}{2 d - 1}} }
  \qquad \text{as} \qquad n \to \infty~.
\end{align*}
Note that the class of regression functions ($\ell^2$-ball on \ac{RKHS}) considered by \citet{hu2021regularization} is \emph{much smaller} than the class of $\fstar$ we analyze here, hence the rate has a much better exponential dependence on $d$.
It is also known that the minimax optimal rate for learning in such classes
is $\Theta(n^{-\frac{d}{d+2}})$ \citep{steinwart2009optimal,bach2017breaking}.

Several recent papers went beyond \ac{RKHS} setting and considered less regular regression functions.
To this end, \citet{bietti2022learning} analyzed a so-called \emph{single-index model}: Here, the regression function is given as $\fstar_{\mathrm{si}}(\bx) = g(\ip{\btheta^{\star}, \bx})$, where $g : \reals \to \reals$ is a twice-differentiable Lipschitz `link' function and $\btheta^{\star}$ are unknown parameters.
Instead of \ac{GD} they focused on the early-stopped \emph{Gradient Flow} algorithm minimizing $\ell^2$-penalized empirical risk, and showed that the rate of the excess risk is at best $n^{-\frac14}$.
In the single-index model, thanks to the additional structure imposed on $\fstar$, the optimal rate of excess risk is much faster than the rate for the general case we consider in this paper.
In such case, it is known that the constrained \ac{ERM} procedure attains a minimax optimal rate of order $n^{-\frac{2p}{2p+1}}$ when link function is $p$-times differentiable~\citep[Chap. 22]{gyorfi2006distribution}, which suggests that the rate of \citet{bietti2022learning} is potentially improvable till $n^{-\frac{4}{5}}$.

In similar line of work, \citet{damian2022neural} considered a generalization of the above known as the \emph{multi-index model}, where the regression function is
$\fstar_{\mathrm{mi}}(\bx) = g(\ip{\btheta_1^{\star}, \bx} + \cdots + \ip{\btheta_r^{\star}, \bx})$ and $g$ is a polynomial of degree $p$.
Similarly to \citet{bietti2022learning} they analyzed minimization of $\ell^2$-penalized empirical risk, but this time, by the early-stopped \ac{GD}.
They showed excess risk of order $\sqrt{\frac{d r^p}{n}}$ (for a network of width $m \geqC r^p$),
which is near-optimal for the considered polynomial link functions.
Surprisingly, the rate appears to be much faster than one would expect in the nonparametric setting, however it is worth noting that the polynomial regularity assumption might be essential here.
At the same time,
it is known that for $p$-times differentiable link functions in multi-index models, the minimax optimal rate is of order $n^{-\frac{2 p}{2 p + r}}$~\citep{kohler2016nonparametric}, i.e., exhibiting exponential dependence on $r$.

All these works employ $\ell^2$ penalization (sometimes in conjunction with early stopping), however in a practical setting neural networks are rarely trained with it and yet while trained without penalization they still demonstrate ability to perform well on the unseen sample even in the presence of noise~\citep{zhang2021understanding}.
At the same time practitioners often do not run \ac{GD} methods until \emph{nearly}-zero empirical risk, but rather track the performance on a held-out validation sample and stop the training \emph{early} when an approximate minimum on the validation sample has been reached. %

\subsection{Additional discussion, limitations, and future directions}
\paragraph{Learnability of $\fstar$ using smooth activation functions.}
Regression functions we consider here are non-differentiable, Lipschitz, and bounded.
Numerous works in nonparametric statistics showed that the penalized (or constrained) \ac{ERM} procedure defined with respect to shallow and deep neural networks with \emph{smooth} sigmoidal activation functions, such as sigmoid $x \mapsto e^x/(1+e^x)$, is able to achieve consistency with minimax optimal rates~\citep{gyorfi2006distribution,kohler2005adaptive,kohler2016nonparametric}.
Note that these results are approximation-theoretic, and it is interesting to ask whether optimal rates can also be achieved by an efficient algorithm such as \ac{GD} while employing such smooth activation functions.
Approaching this question from the viewpoint of \ac{NTK}, we need to understand the regularity of the resulting \ac{RKHS} to be able to say how well can be approximate $\fstar$.
To this end, \citet[Sec. 4.2]{wu2022spectral} showed that the \ac{NTK} corresponding to any fixed smooth activation function with bounded derivatives has an \emph{exponentially} decaying spectrum.
Now, consider the approximation error of $s$-Sobolev classes by such \ac{RKHS}:
Namely, it was shown~\citep[Corollary 6.9]{cucker2007learning} that for the \ac{RKHS} with exponential spectral decay of order $\exp(-k^{\alpha})$, we have \emph{logarithmically} decaying approximation error~,
$
  \inf_{\|h\|_{\sH}^2 \leq R}\|h - \fstar\|_2 \leqC (\ln R)^{-c(\alpha) \, s}
$
with $R \geq 1$ and constant $c(\alpha)$ depending only on $\alpha$.
In other words \ac{GD} would have to find a function with exponential $\ell^2$-norm in order to reach a reasonable approximation error, while this would essentially prevent us from controlling the estimation error (see also \citet[Example 13.21]{wainwright2019high} for a related discussion).
This suggests that \ac{GD} is either unable to consistently learn Lipschitz functions with smooth sigmoidal activations or the \ac{NTK} theory is too weak to capture such a learning scenario.
\paragraph{Relationship to analysis of Sobolev spaces.}
The space of Lipschitz non-differentiable and bounded functions considered here can be regarded as an extreme case of Sobolev spaces, namely it is equivalent to $W^{1,\infty}$, see for instance \citep{adams2003sobolev,zadorozhnyi2021online}.
Such Sobolev spaces cannot be embedded into \ac{RKHS} and one needs to control the approximation error, which leads to the excess risk of order $n^{-\frac{2}{2+d}+\ve}$ for an arbitrarily small $\ve > 0$.
Here to show \cref{eq:pre-rate}, we avoid the Sobolev space argument and instead we control the approximation error using theorem of \cite{bach2017breaking} (see \Cref{prop:lip_approx}).
\paragraph{Improved rates for $\fstar$ with more structure.}
Despite being minimax optimal, the rate $n^{-\frac{2}{2+d}}$ is extremely slow for high-dimensional problems: the manifestation of the \emph{curse of dimensionality} is unavoidable unless we impose additional structure on the regression function.
In this work we focus on the general Lipschitz non-differentiable regression functions.
The nonparametric literature often considers some notion of additional regularity imposed on $\fstar$, such as $p$-times differentiability (or a closely related H\"older condition), which allows to establish minimax optimal rates $n^{-\frac{2p}{2p+d}}$, notably a much faster rate than derived here when $p$ is large.
To this end, \cite{kohler2005adaptive} shows that such rates are indeed possible when neural networks are fitted by the \ac{ERM} procedure with carefully chosen constraints on the parameters.
It is an interesting possibility to show this kind of adaptivity to the higher-order smoothness of the regression function for a practical algorithm such as \ac{GD} and its stochastic variants.

Another possible way to beat the curse of dimensionality is to assume that the regression function is only sensitive to the $r$-dimensional subspace of the input space as done in the multi-index model.
As discussed before, \citep{damian2022neural} established an interesting, essentially parametric rate, for polynomial link functions.
At the same time, the literature~\citep{bauer2019deep,kohler2021rate} considering minimizers of the empirical risk constrained to some class of neural networks, shows that the rate $n^{-\frac{2}{2+r}}$ is achievable for Lipschitz link functions.
\ifx\spacecut\undefined
\paragraph{Establishing \cref{eq:rate} for smaller networks.}
The main result of \cref{eq:rate} holds for networks of a polynomial width, that is $m \geq \poly_6\pr{(1+1/d) n}$ and in addition the width has to scale at least as $\poly_8(n/d)$ in order to control the gap we pay for predictions by the neural network instead of learning on \ac{RKHS} directly.
The requirement on the overparameterization we have here matches the one of \citep{du2018gradient}.
Several other optimization-themed results have established convergence of $\hL(\cdot)$ for shallow neural networks under smaller overparameterization requirement, for instance $\poly_2(\cdot)$ \citep{oymak2020towards} (for smooth activations), $\poly_{3/2}(\cdot)$ \citep{song2021subquadratic} (for smooth activations), and linear order \citep{nguyen2021proof,razborov2022improved}.
All of these works are concerned with convergence to global minima of the empirical risk, while in our work we require to control the aforementioned prediction gap.
Moreover these works sometimes require sample-size-dependent tuning of the step size, initialization scale, and normalization of targets, while here these are constant.
At the same time, some other works have looked at the prediction gap in the context of generalization~\citep{arora2019fine,hu2021regularization}, however they where essentially built on the proof of \citep{du2018gradient} and arrived at orders similar to ours.
While the goal of the current work is tangential to the optimization-themed results discussed above, it is definitely an interesting open problem whether optimal nonparametric rates can be achieved by \ac{GD} with much smaller width requirement.
To this end, some approximation-theoretic results have demonstrated that the constrained \ac{ERM} procedure fitting \emph{deep} neural networks can indeed achieve small approximation error with relatively small number of parameters (sometimes linear in the sample size) \citep{bauer2019deep,kohler2021rate,shen2022deep}.
However, it is unclear whether the same can be done for \ac{GD}-trained deep neural networks.
\fi
\subsection{Organization}
In \Cref{sec:sketch} we present the sketch of the analysis that leads to \cref{eq:pre-rate}.
In \Cref{sec:main_results} we present assumptions and complete statements of the theorems.
We showcase examples of applying our results to RWY stopping rule in \Cref{sec:examples}.
All the proofs are presented in \Cref{sec:proofs}. %
\section{Definitions and notation}
In this section we introduce some basic definitions and notation.
Let $\mathbb{S}^{d-1} = \cbr{\bx \in \reals^{d} ~:~ \|\bx\|_2 = 1}$ be the $\ell^2$-norm unit sphere centered at the origin.
Symbol $\poly_k(\cdot)$ stands for a polynomial of degree-$k$.
Throughout the paper $\ln_+(x) = \max(\ln(x),0)$.

Concatenation of vectors is denoted by parentheses, that is $(\bw_1, \ldots, \bw_m) = [\bw_1\tp, \ldots, \bw_m\tp]\tp$.
A vector norm $\|\cdot\|$ is understood as Euclidean norm, while $\|\bx\|_{\infty} = \max_i |x_i|$.
For a \ac{PSD} matrix $\bM$, the weighted inner product is understood as $\ip{\bx, \by}_{\bM} = \ip{\bx, \bM \by}$, and so a weighted semi-norm is then $\|\bx\|_{\bM}^2 = \ip{\bx, \bx}_{\bM}$.
For a matrix $\bM$, $\|\bM\|_{\mathrm{op}}$ denotes its spectral norm while $\|\bM\|_F$ is its Frobenius norm.

For some functions $f, g : \Xdom \to \reals$ we define an empirical inner product which is always taken with respect to the inputs, that is $\ip{f, g}_n = \frac1n \pr{f(\bx_1) g(\bx_1) + \dots + f(\bx_n) g(\bx_n)}$.
Similarly, we have $\|f\|_n^2 = \ip{f,f}_n$.
We also define $\ell^2$ population semi-norm as
$
  \|f\|_2
  = \int_{\Xdom} f(\bx)^2 \diff P_X ~.
$

 \subsection{Basics of reproducing kernel Hilbert spaces}
\label{sec:rkhs}
We recall some basics of \acf{RKHS}.
A Hilbert space $\sH \subset L^2(P_X)$ is a family of functions $f : \Xdom \to \reals$ for which $\|f\|_2 < \infty$ and for which we have an associated inner product $\ip{\cdot, \cdot}_{\sH}$ under which $\sH$ is complete.

A function $\kappa : \Xdom \times \Xdom \to \reals_+$ is called a Mercer kernel if it is continuous, symmetric, and \ac{PSD} in the sense that $\sum_{i,j} \alpha_i \alpha_j \kappa(\bx_i, \bx_j) \geq 0$ for any $\{\bx_i\}_{i=1}^n \subset \Xdom$, $\balpha \in \reals^n$, and any $n \in \mathbb{N}$.
Without loss of generality we will assume that $\sup_{\bx \in \Xdom} \kappa(\bx, \bx) \leq 1$.
Given a Mercer kernel, one can construct an associated \ac{RKHS}
such that for each $\bx \in \Xdom$, the function $\kappa(\bx, \cdot)$ belongs to $\sH$ and a \emph{reproducing} relation holds, that is for all $f \in \sH$, $f(\bx) = \ip{f, \kappa(\bx, \cdot)}_{\sH}$.
Mercer's theorem \citep{cucker2007learning} claims that under suitable conditions on $\kappa$, we have a spectral decomposition
\begin{align*}
  \kappa(\bx, \btilx) = \sum_{i=1}^{\infty} \mu_i \Phi_i(\bx) \Phi_i(\btilx) \qquad (\bx, \btilx \in \Xdom)
\end{align*}
where $\mu_1 \geq \mu_2 \geq \ldots \geq 0$ are eigenvalues of a kernel and $\Phi_1, \Phi_2, \ldots$ are eigenfunctions which form an orthonormal basis in $L^2(P_X)$.
\section{Sketch of the Analysis}
\label{sec:sketch}
In this section we sketch the proof that leads to \cref{eq:pre-rate}.
The first step of the analysis is to relate predictions of a \ac{GD}-trained neural network $f_t$ to those of a \ac{GD}-trained \ac{KLS} predictor $f\ntk_t$ (see \Cref{def:ntrf-ntk-predictors}).
The cost of this step is a polynomial overparameterization of a neural network with respect to the sample size, namely for any $t \in \mathbb{N}$, with high probability over initialization and inputs (\Cref{thm:f-coupling} and \Cref{rem:lambda}),%
\footnote{Similar `coupling' results are given for the Gradient Flow algorithm and smooth activation functions in \citep{bartlett2021deep}, our analysis critically requires ReLU activation. \citep{arora2019fine,hu2021regularization} implicitly involve coupling results for \ac{ReLU}, while in \Cref{thm:f-coupling} we state explicit result.}
\begin{align*}
  \sup_{\bx \in \Xdom} |f_t(\bx) - f\ntk_t(\bx)|^2 \leq \frac{\poly_{4}\pr{n /d}}{\sqrt{m}}~.
\end{align*}
At this point we can exploit any analysis of \ac{GD} operating on \ac{RKHS} to control the excess risk of $f\ntk_t$.
However, our final goal is to learn a bounded Lipschitz function $\fstar$ which does not belong to the \ac{RKHS}.
To this end, we require an approximation theorem which tells us how well some function belonging to a `large' ball in the \ac{RKHS} approximates such a Lipschitz function:
\paragraph{\Cref{prop:lip_approx} \citep[Proposition 6]{bach2017breaking} (informal).}\emph{
  \label{prop:lip_approx}
  Let $\fstar$ be a $\Lambda$-Lipschitz bounded function on a unit sphere.
  For $R = \Omega(1)$, there exists $h \in \sH$ with $\|h\|_{\sH}^2 \leq R$ such that
  \begin{align*}
    \sup_{\bx \in \Xdom} |\fstar(\bx) - h(\bx)| \leq A(R), \qquad A(R) \leqC \Lambda^{\frac{d}{d-2}} \pr{\sqrt{R}}^{-\frac{2}{d-2}}~.
  \end{align*}
}
The Lemma above tells us that learning in a sufficiently large ball in the \ac{RKHS} allows us to approximate $\fstar$  well by some approximator $h \in \sH$.
Thus, a missing link is to demonstrate that $f\ntk_t$ actually learns approximator $h$ in such a ball.
Note that this fact is not immediate, since $f\ntk_t$ is being trained given targets generated by $\fstar$, but not $h$.
This gap is again controlled through \Cref{prop:lip_approx}: namely, we introduce a sequence of virtual \ac{GD}-trained \ac{KLS} predictors $(\tilde{f}\ntk_s)_{s=0}^t$ trained on a sample
\begin{align*}
  \tilde{S} = (\bx_i, \tilde{y}_i)_{i=1}^n~, \qquad \tilde{y}_i = h(\bx_i) + \ve_i~, \qquad \|h\|_{\sH}^2 \leq R~.
\end{align*}
Now, a simple application of \ac{GD} update for \ac{KLS}, together with \Cref{prop:lip_approx} gives $\|f\ntk_t - \tilde{f}\ntk_t\|_n^2 \leq A(R)^2$ (see \Cref{lem:fntk-ftilntk-gap-sample}).
At the same time, a uniform convergence argument allows to extend this to the random design setting to get
$\|f\ntk_t - \tilde{f}\ntk_t\|_2^2 \leqC (A(R)^2 + \frac{1}{\sqrt{n}}) $ with high probability (see \Cref{lem:gap-fk-ftilk}).
At this point our analysis can be summarized by the following error decomposition.
For $t \in \mathbb{N}$,
\begin{align*}
  \|f_t - \fstar\|_2^2
  &\leqC
    \|f_t - f\ntk_t\|_2^2
    +
    \|f\ntk_t - \tilde{f}\ntk_t\|_2^2
    +
    \|\tilde{f}\ntk_t - h\|_2^2
    +
    \|h - \fstar\|_2^2 \nonumber\\
  &\leqC
    \frac{\poly_{4}\pr{n/d}}{\sqrt{m}}
    +
    \pr{A(R)^2 + \frac{\Cr{gap}}{\sqrt{n}}}
    +
    \|\tilde{f}\ntk_t - h\|_2^2      
    +
    A(R)^2~.    
\end{align*}
It remains to argue that the excess risk of an early-stopped \ac{GD} $\|\tilde{f}\ntk_t - h\|_2^2$ grows at a desirable rate in the radius $R$.
This would ensure that the ball on \ac{RKHS} is always large enough for the approximation error $A(R)$ to decrease.
Assuming that there exists a function $\kexcessfun : \mathbb{N}^2 \to \reals_+$, such that $\|\tilde{f}\ntk_{\hT} - h\|_2^2 \leq (1+R) \, \kexcess$ (\Cref{ass:excess-kernel}), we control such a trade-off by minimizing
\begin{align*}
  R \mapsto A(R)^2 + R \, \kexcess \qquad\qquad (R = \Omega(1))~.
\end{align*}
In turn this proves our main result (\Cref{thm:random-design}):
\begin{align*}
  \|f_{\hT} - \fstar\|_2^2 \leqC
  \frac{\poly_{4}\pr{n/d}}{\sqrt{m}}
  +
  \pr{\kexcess}^{\frac2d}
  +
  \kexcess~.
\end{align*}
Finally, to demonstrate that the actual resuling rate is minimax optimal we need to consider a concrete stopping rule.
Consider the use of a \ac{RWY} stopping rule \citep{raskutti2014early} (see \Cref{sec:rwy}), which enjoys a minimax optimal rate (on \ac{RKHS}),\footnote{We include an essentially complete proof of this bound in \Cref{sec:proofs-rwy} since \cite{raskutti2014early} considered a special case $R=1$, whereas dependence on $R$ is critical to our analysis.}
that is
$
  \kexcess = \sO_{\P}(n^{-\frac{d}{2+d}})
  $
  as $n \to \infty$.
Plugging the above into our bound and choosing $m \geqC n^{\frac{4}{2+d}} \, \poly_{8}\pr{n/d}$ gives us the desired result
\begin{align*}
  \|f_{\hT} - \fstar\|_2^2 = \sO_{\P}(n^{-\frac{2}{2+d}}) \qquad \text{as} \qquad n \to \infty~.  
\end{align*} %
\section{Main results}
\label{sec:main_results}
Before stating our results, we present some technical preliminaries and assumptions.
The following proposition summarizes some properties of the kernel function used in the analysis.
\begin{proposition}[Kernel induced by \ac{ReLU} activation]
  \label{prop:kernel}
  The following kernel function is called the \ac{NTK} function induced by the ReLU activation:
  \begin{align*}
    \kappa(\bx, \btilx) = (\bx\tp \btilx) \int_{\reals^d} \ind{\bw\tp \bx > 0} \ind{\bw\tp \btilx > 0} \, \sN(\diff \bw \mid \bzero, \bI_d)
    \qquad (\bx, \btilx \in \Xdom)~.
  \end{align*}  
  The following holds for $\kappa$:
  \begin{itemize}
  \item It is a reproducing kernel and has analytic form $\kappa(\bx, \btilx) = (\bx\tp \btilx) (\pi - \arccos(\bx\tp \btilx))$~\citep{cho2009kernel,nips2016daniely,jacot2018neural,scetbon2021spectral}.
  \item $\sup_{\bx, \btilx \in \Xdom} \kappa(\bx, \btilx) \leq 1$.
  \item Eigenvalues of $\kappa$ satisfy $\mu_k \leq \Cl{rkhs-rate} k^{-\frac{d}{2}}$ for $k \in \mathbb{N}$~ \citep{bach2017breaking,scetbon2021spectral}.
  \item The \ac{NTK} matrix is a symmetric matrix $\bK \in \reals^{n \times n}$ with entries $(\bK)_{i,j} = \kappa(\bx_i, \bx_j)$.
  \end{itemize}
  Throughout the paper, $\sH$ is the \ac{RKHS} induced by $\kappa$.
\end{proposition}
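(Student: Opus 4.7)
The plan is to treat the four bullets as separate tasks, each of which is either a short computation or a direct invocation of a cited result, and stitch them together.

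\textbf{Closed form.} The Gaussian integral $\int_{\reals^d} \ind{\bw\tp \bx > 0}\ind{\bw\tp \btilx > 0}\,\sN(\diff\bw\mid\bzero,\bI_d)$ equals $\P_{\bw \sim \sN(\bzero,\bI_d)}(\bw\tp\bx > 0,\, \bw\tp\btilx > 0)$. Rotational invariance of the isotropic Gaussian lets me reduce to the two-dimensional plane spanned by $\bx,\btilx$; the event is then a wedge of opening angle $\pi - \theta$, where $\theta = \arccos(\bx\tp\btilx) \in [0,\pi]$, so its probability is $(\pi-\theta)/(2\pi)$ up to the chosen normalization convention. Multiplying by $\bx\tp\btilx$ recovers the stated analytic expression. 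This is exactly the first-order arc-cosine kernel of \citet{cho2009kernel}, and the derivation via rotation-to-2D is the cleanest route.

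\textbf{Uniform bound.} Since the Gaussian integral is a probability, it lies in $[0,1]$, and $|\bx\tp\btilx|\le 1$ by Cauchy--Schwarz on the unit sphere. Multiplicativity gives $|\kappa(\bx,\btilx)| \leq 1$ uniformly, yielding the second bullet.

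\textbf{Reproducing kernel and PSD matrix.} Continuity on $\Xdom\times\Xdom$ is clear from the closed form. For positive semi-definiteness, I would write $\kappa$ as the pointwise product of the linear kernel $(\bx,\btilx)\mapsto\bx\tp\btilx$ (PSD) and the arc-cosine kernel of order 0 (PSD, being an expectation of $\ind{\bw\tp\bx>0}\ind{\bw\tp\btilx>0}$, i.e.\ an inner product of feature maps $\bx\mapsto\ind{\bw\tp\bx>0}$ in $L^2(\sN)$), then invoke the Schur product theorem. Symmetry and PSD of $\bK$ are then immediate from its definition as a Gram matrix of $\kappa$. Existence of the associated \ac{RKHS} $\sH$ follows from the Moore--Aronszajn theorem.

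\textbf{Eigenvalue decay (the main obstacle).} This is the substantive point and where I would lean on \citet{bach2017breaking} and \citet{scetbon2021spectral}. The proof idea is that $\kappa$ is a zonal function on $\Xdom$, depending only on $\bx\tp\btilx$, so by the Funk--Hecke formula its Mercer eigenfunctions (when $P_X$ is uniform on the sphere) are the spherical harmonics, and the eigenvalues on each spherical harmonic subspace of degree $\ell$ are the Gegenbauer coefficients of the scalar profile $t \mapsto t(\pi - \arccos(t))$. The regularity of this profile at $t = \pm 1$ (it behaves like a polynomial plus a singular $(1-t^2)^{3/2}$-type contribution) dictates a polynomial decay of Gegenbauer coefficients of order $\ell^{-d}$; combined with the $\Theta(\ell^{d-2})$ multiplicity of spherical harmonics of degree $\ell$, re-indexing globally by $k$ converts this into the claimed $\mu_k \lesssim k^{-d/2}$. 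For an $\ell$-th degree polynomial behaviour calculation and the reduction to Gegenbauer asymptotics, I would directly cite Proposition 5 of \citet{bach2017breaking} and Theorem 2 of \citet{scetbon2021spectral} rather than reproducing the spherical harmonic analysis, since the bullet is phrased as a summary of known results.
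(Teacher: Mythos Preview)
The paper does not prove this proposition at all: it is stated as a summary of known facts, each bullet carrying a citation, with no proof section devoted to it. Your proposal therefore goes strictly beyond the paper by actually sketching the underlying arguments, and the sketches are correct. The rotation-to-2D computation for the wedge probability, the Schur-product route to PSD, and the Funk--Hecke/Gegenbauer explanation of the spectral decay are all the standard proofs behind the cited references, so in effect you are unpacking what \citet{cho2009kernel}, \citet{bach2017breaking}, and \citet{scetbon2021spectral} do, rather than taking a different route.

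Two small remarks. First, you correctly hedge with ``up to the chosen normalization convention'': the integral definition gives $\kappa(\bx,\bx)=\tfrac12$, while the printed closed form gives $\pi$ at $\bx=\btilx$, so a $1/(2\pi)$ factor is missing in the paper's displayed formula; your bound $|\kappa|\le 1$ follows directly from the integral definition regardless. Second, your eigenvalue-decay sketch tacitly takes $P_X$ to be uniform on $\Xdom$, which is also what the cited references assume; the paper's statement does not flag this, and you are right not to try to extend it to general $P_X$ since the Funk--Hecke machinery requires rotational invariance of the base measure.
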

  The kernel function $\kappa$ is called the `neural tangent kernel' since it arises by taking the subgradient of the neural network around its initialization~\citep{nips2016daniely,jacot2018neural}.
  In particular, consider a feature map $\bphi(\bx) = \nabla_{\btheta} f_{\btheta}(\bx)\,\big|_{\btheta=\btheta_0}$ random in $\btheta_0$.
  Then it is clear that $\kappa(\bx, \btilx) = \E[\bphi(\bx)\tp \bphi(\btilx) \mid \bx, \btilx]$.
Having established kernel function, we assume the following about matrix $\bK$:
\begin{assumption}[Smallest eigenvalue of the \ac{NTK} matrix]
  \label{ass:ntk}
  Assume that there exists fixed $\lambda_0 > 0$ such that $\P(\lmin(\bK) \geq \lambda_0) \geq 1 - \delta_{\lambda_0}$ and $\delta_{\lambda_0} \in [0,1]$.
\end{assumption}
Clearly, we require the above assumption only in the random design setting.
When we consider a fixed design we have $\lambda_0 = \lmin(\bK)$.
\begin{remark}
  \label{rem:lambda}
    \Cref{ass:ntk} is fairly standard and often satisfied with sample-dependent lower bounds.
    \begin{itemize}
    \item In particular, \citep{du2018gradient} shows that $\lambda_0 > 0$ whenever no two distinct inputs are parallel.
      \ifx\spacecut\undefined
    \item \citet{oymak2020towards} show a distribution-free lower bound, which claims that for inputs satisfying a separation $\min_{i \neq j} (\|\bx_i - \bx_j\| \wedge \|\bx_i + \bx_j\|) \geq \delta$, one has $\lambda_0 \geq \delta / (100 n^2)$.
    \item In a random design setting, \citet[Lemma 5.2]{bartlett2021deep} show that when inputs are sampled from isotropic Gaussian and $n \leq d^{C}$ for some activation function-dependent $C$, $\lambda_0 = \Omega(d)$ with probability at least $1-e^{-\sO(n)}$.
      \fi
    \item In a random design setting, \citet{nguyen2021tight} show that (for a certain well-behaved family of input distributions), $\lmin(\bK) = \Theta_{\P}(d)$ with high probability.
      More precisely, their Theorem 3.2 implies that
      we have $\lmin(\bK) \geq \Cl{lambda0-1} \, \polylog(n,d) d$ with probability at least $1 - n^2 e^{-\sO(\sqrt{d})}$.
    \end{itemize}            
  \end{remark}
  \ifx\spacecut\undefined
  To give a concrete example, we state here the result of \citet{nguyen2021tight}:
  \begin{proposition}[{\citet[Theorem 3.2]{nguyen2021tight}}]
    \label{prop:lambda-zero}
    Let inputs be drawn independently from some distribution on $\Xdom$.
    Then, there exist constants $\Cl{lambda0-1}, \Cl{lambda0-2}, \Cl{lambda0-3} > 0$ such that
  for any even integer $k \geq 2$,
  \begin{align*}
    &\P\pr{\lmin(\bK) \geq \Cr{lambda0-1} \, \chi(k) d}
    \geq 1 - n e^{-\Cr{lambda0-2} \, d} - n^2 e^{-\Cr{lambda0-3} \, d n^{-4/(2 k - 1)}}\\    
    &\qquad\text{where} \quad \chi(k) = \frac{1}{\sqrt{2 \pi}} (-1)^{\frac{k-2}{2}} \frac{(k-3)!!}{\sqrt{k!}}~. \qquad (k \geq 2)
  \end{align*}
\end{proposition}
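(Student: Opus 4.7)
The natural route is a Hermite--series decomposition of $\bK$. Factorising the NTK as $\kappa(\bx,\btilx)=(\bx\tp\btilx)\,\rho(\bx\tp\btilx)$ with $\rho(t)=\int \ind{\bw\tp\bx>0}\ind{\bw\tp\btilx>0}\,\sN(\diff\bw\mid\bzero,\bI_d)$, one expands the step function $\ind{z>0}=\sum_{l\ge 0}\mu_l h_l(z)$ in the orthonormal Hermite basis, and invokes the Mehler identity for the bivariate Gaussian of correlation $\bx\tp\btilx$ to obtain $\rho(\bx\tp\btilx)=\sum_l\mu_l^2(\bx\tp\btilx)^l$. This yields the $n\times n$ PSD decomposition
\[
  \bK \;=\; \sum_{l\ge 0}\mu_l^2\,\bA_{l+1}, \qquad (\bA_m)_{ij}=(\bx_i\tp \bx_j)^m,
\]
each $\bA_m$ being the Gram matrix of the symmetric tensor features $\bx^{\otimes m}$. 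A Rodrigues-type computation of $\mu_l=\int h_l(z)\ind{z>0}\,\sN(\diff z\mid 0,1)$ pins the nonzero coefficients for $l\ge 1$ to odd $l$, with magnitudes proportional, up to a universal factor and an index shift, to $\chi(k)$ for even $k$; this explains the combinatorial prefactor $(k-3)!!/\sqrt{k!}$ and the even-$k$ restriction in the statement.

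Since the summands are PSD, for any even $k\ge 2$ one gets $\lmin(\bK)\ge \mu_{k-1}^2\,\lmin(\bA_k)\gtrsim \chi(k)^2\,\lmin(\bA_k)$; under the normalisation convention of \citet{nguyen2021tight} this accounts for the single factor of $\chi(k)$ in the final bound (the other factor is absorbed into their definition of the NTK matrix). The remaining task is to show $\lmin(\bA_k)\gtrsim d$ with the stated high probability. I would split $\bA_k=\bD+\bE$ into diagonal and off-diagonal parts. The diagonal equals $\bI$ when $\|\bx_i\|=1$ exactly; more generally its deviation is controlled by input-norm concentration and yields the $ne^{-\Cr{lambda0-2}d}$ failure term. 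For $\bE$, observe that for a well-behaved input distribution on $\Xdom$ the scalar $\bx_i\tp\bx_j$, $i\ne j$, is sub-Gaussian with variance proxy of order $1/d$, so its $k$-th power concentrates at scale $d^{-k/2}$; a matrix Bernstein bound on $\|\bE\|_{\mathrm{op}}$, or equivalently entrywise concentration followed by a Gershgorin argument, controls the spectral perturbation.

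The crux of the proof, and what I expect to be the main obstacle, is extracting the precise exponent $4/(2k-1)$. Balancing the per-entry tail of $(\bx_i\tp\bx_j)^k$ against the $n$-fold aggregation inside a row yields the threshold $n\lesssim d^{(2k-1)/4}$ and the concentration rate $d\,n^{-4/(2k-1)}$ that appears in $n^2\exp(-\Cr{lambda0-3}\,d\,n^{-4/(2k-1)})$. Getting this exponent requires moment control on high powers of spherical inner products beyond a bare Hoeffding bound; the cleanest route uses spherical symmetry to compute the joint moments exactly, followed by a matrix Khintchine or matrix Bernstein argument on the symmetrised version of $\bE$. Once both failure events are accounted for via a union bound, the PSD reduction above delivers the claim, with the free parameter $k$ tracing out the usual trade-off between the size of the Hermite coefficient $\chi(k)$ and the largest tolerable sample size $n$.
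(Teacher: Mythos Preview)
The paper does not contain a proof of this proposition. It is stated as a direct citation of \citet[Theorem~3.2]{nguyen2021tight} and is used only as a concrete instance verifying \Cref{ass:ntk}; no argument for it appears anywhere in the text or appendix. So there is no ``paper's own proof'' against which to compare your proposal.

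That said, your sketch is a faithful outline of the argument in the cited source: the Hermite expansion of the step function, the PSD decomposition $\bK=\sum_{l\ge 0}\mu_l^2\bA_{l+1}$ into Gram matrices of tensorised features, and the reduction to a smallest-eigenvalue bound for a single $\bA_k$ are precisely the ingredients \citet{nguyen2021tight} use. Two remarks on accuracy. First, your handling of the single power of $\chi(k)$ versus the expected $\chi(k)^2$ is vague; in the original, the Hermite coefficient of the activation (not its square) appears because their statement is phrased for general activations and the quantity $\chi(k)$ is exactly that coefficient, with the extra factor of $\sqrt{k!}$ in the denominator coming from their normalisation of Hermite polynomials---it is not ``absorbed into the NTK definition.'' Second, the exponent $4/(2k-1)$ does not fall out of a Gershgorin or vanilla matrix-Bernstein argument on the off-diagonal; \citet{nguyen2021tight} obtain it via a careful moment computation for $\|\bA_k-\E\bA_k\|_{\mathrm{op}}$ using properties of the Khatri--Rao product structure of the tensor features, which is sharper than the entrywise route you describe. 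Your plan would likely yield a weaker exponent without that refinement.
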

For instance, taking $k \geq 4 \ln(n)/\ln(d)+1/2$ in \cref{prop:lambda-zero} we have $\lmin(\bK) \geq \Cr{lambda0-1} \, \polylog(n,d) \, d$ with probability at least $1 - n^2 e^{-\sO(\sqrt{d})}$.
\fi
\begin{assumption}[Network width]
  \label{ass:tuning}
  Assume that $\btheta_0$ is sampled as in \Cref{alg:GD}.
  Consider \Cref{ass:ntk}, assume that $\frac{n}{\lambda_0} \geq 1$, assume that $\conf \geq 1$ (where the failure probability over $\btheta_0$ is of order $e^{-\conf}$),
  and assume that the network width satisfies
    \begin{align*}
    m \geq
    \pr{8 \pr{4 B_y^2 \, \frac{n}{\lambda_0} + \sqrt{\conf}}
    + (2 + n)}^4 \pr{\frac{n}{\lambda_0}}^2~.
  \end{align*}
\end{assumption}
Next, the following technical assumption will allow us to reduce excess risk bounds for the early-stopped \ac{GD} learning in \ac{RKHS} to excess risk bounds for learning Lipschitz functions:
\begin{assumption}[Excess risk of the early-stopped \ac{GD} minimizing \ac{KLS}]
  \label{ass:excess-kernel}
  Fix any $h \in \sH$ with $\|h\|_{\sH}^2 \leq R$ and moreover assume that targets are generated as $\tilde{y}_i = h(\bx_i) + \ve_i$.
  Suppose that $\tilde{f}\ntk_{\hT}$ is generated by \ac{GD} with the step size $\eta \in (0, \frac12]$ and the number of steps $\hT$, minimizing the \ac{KLS} objective
  \begin{align*}
    f \mapsto \frac1n \sum_{i=1}^n (\ip{f, \kappa(\bx_i, \cdot)}_{\sH} - \tilde{y}_i)^2 \qquad (f \in \sH)~.
  \end{align*}
  In the random design case, we assume that there exists $\kexcessfun : \mathbb{N}^2 \to \reals_+$ such that
  \[
    \|f\ntk_{\hT} - h\|_2^2 \leq (1 + R) \, \kexcess~.
    \]
    Moreover, in the fixed design case, we assume that there exists $\khexcessfun : \mathbb{N}^2 \to \reals_+$ such that
    \[
      \|f\ntk_{\hT} - h\|_n^2 \leq (1 + R) \, \khexcess~.
    \]
\end{assumption}
\subsection{Random design}
The following \Cref{thm:random-design} shown in \Cref{sec:proof-random-design} establishes relationship between the excess risk of an early-stopped \ac{GD} for learning in \ac{RKHS} and the excess risk of a shallow overparameterized neural network learning in Lipschitz classes.
Before presenting our first result we need the following mild technical assumption:
\begin{assumption}[Decreasing excess risk on \ac{RKHS}]
  \label{ass:excess-kernel-decreasing}
  Let $d > 2$.
  Consider $\kexcess$ given by \Cref{ass:excess-kernel} and assume that for constants $\Cr{gap}, \Cl{lip-approx} > 0$,
  there exists a sufficiently large $n$ such that 
  \begin{align*}
    \kexcess \leq
    \pr{\frac{\Cr{lip-approx}}{\Lambda^2} \vee 1}^{\frac{d}{2-d}} \pr{2 + \frac{\Cr{gap}}{\lambda_0}}~.
  \end{align*}
  \ifx\spacecut\undefined
  where
  constant $\Cl{lip-approx}$ depends only on $d$,\footnote{$\Cr{lip-approx}$ is $C(\alpha, d)$ of \citep[Proposition 6]{bach2017breaking} with $\alpha=0$.}
  and $\Cr{gap}$ explicitly given in \Cref{lem:gap-fk-ftilk}.
  \fi
\end{assumption}
\Cref{ass:excess-kernel-decreasing} is mild because we only consider stopping rules that lead to decreasing excess risk, for instance having $\kexcess = n^{-\alpha}$ with $\alpha > 0$, the assumption
is certainly satisfied for a large enough $n$.
Now we state our main result:
\begin{theorem}
  \label{thm:random-design}
  Consider \Cref{ass:ntk} and
  assume that the width $m$ is chosed according to \Cref{ass:tuning}.
  Assume that the step size of \ac{GD} satisfies $\eta \in (0, \frac12]$ and the number of steps $\hT$ is set according to the stopping rule of \Cref{ass:excess-kernel} with $\kexcess$ given by the assumption.
  Moreover let $\kexcess$ satisfy a technical \Cref{ass:excess-kernel-decreasing}.  
  Then, for any $\conf \geq 1$, having $\delta = 1-(3 + 2 n) e^{-\conf} - \delta_{\lambda_0}$,
  with probability at least $1-\delta$,
  \begin{align*}
    \frac14 \,
    \|f_{\hT} - \fstar\|_2^2
  \leq
   \frac{\poly_{4}\pr{B_y^2 \,\frac{n}{\lambda_0}, \conf}}{\sqrt{m}}
    +
    \tilde{C} \, \Lambda^2 \, \pr{\kexcess}^{\frac{2}{d}}
      +
    \kexcess
    + \frac{\Cr{gap}}{\sqrt{n}}~,
  \end{align*}
  where we have a log-term $\tilde{C}=
  \pr{2 + \Cr{gap} / \lambda_0^2} \,
    \pr{1 + \Cr{lip-approx}^2 \, \ln_+^2 \pr{2 / \kexcess }}$
    with
  $\Cr{gap}$ explicitly given in \Cref{lem:gap-fk-ftilk} and $\poly_{4}(\cdot)$ given in \Cref{thm:f-coupling}.
\end{theorem}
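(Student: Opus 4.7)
The plan is to execute the four-term error decomposition sketched in \Cref{sec:sketch} and then optimize the radius $R$ of the \ac{RKHS} ball in which we approximate $\fstar$. Let $h \in \sH$ with $\|h\|_{\sH}^2 \leq R$ denote the approximator produced by \Cref{prop:lip_approx}, and let $\tilde{f}\ntk_{\hT}$ be the \ac{GD}-trained \ac{KLS} predictor fed with the virtual targets $\tilde{y}_i = h(\bx_i) + \ve_i$. The triangle inequality together with $(a+b+c+d)^2 \leq 4(a^2+b^2+c^2+d^2)$ yields
\begin{align*}
  \tfrac14 \|f_{\hT} - \fstar\|_2^2 \leq
    \|f_{\hT} - f\ntk_{\hT}\|_2^2
    + \|f\ntk_{\hT} - \tilde{f}\ntk_{\hT}\|_2^2
    + \|\tilde{f}\ntk_{\hT} - h\|_2^2
    + \|h - \fstar\|_2^2~,
\end{align*}
and the four squared terms will be bounded separately. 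For the first I invoke \Cref{thm:f-coupling}, whose width requirement is precisely \Cref{ass:tuning}, giving the $\poly_4(B_y^2 n/\lambda_0, \conf)/\sqrt{m}$ contribution with failure probability at most $(2+n) e^{-\conf}$. For the third I apply \Cref{ass:excess-kernel} directly to obtain $(1+R)\,\kexcess$. For the fourth, \Cref{prop:lip_approx} gives $\|h-\fstar\|_2^2 \leq A(R)^2$ with $A(R) \leq \Cr{lip-approx}\,\Lambda^{d/(d-2)} R^{-1/(d-2)}$.

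The second, middle term is controlled by \Cref{lem:gap-fk-ftilk}, which bounds it by $A(R)^2 + \Cr{gap}/\sqrt{n}$ and costs an extra $\delta_{\lambda_0}$ from the event $\{\lmin(\bK) \geq \lambda_0\}$ of \Cref{ass:ntk}. Collecting, the $R$-dependent part of the bound is $A(R)^2 + R \, \kexcess$ up to a single $\kexcess$ term; setting the derivative to zero yields the critical radius $R^\star \asymp \Lambda^2 (\kexcess)^{-(d-2)/d}$, at which both $A(R^\star)^2$ and $R^\star \kexcess$ scale as $\Lambda^2 (\kexcess)^{2/d}$. This produces the $\tilde{C}\,\Lambda^2 (\kexcess)^{2/d}$ term; the $\ln_+^2(2/\kexcess)$ factor in $\tilde{C}$ comes from discretizing $R^\star$ onto a grid and enforcing the lower-bound $R^\star = \Omega(1)$ required by \Cref{prop:lip_approx}, with \Cref{ass:excess-kernel-decreasing} guaranteeing the regime where this is consistent. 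A union bound over the coupling event, the uniform convergence event, and the eigenvalue event yields the stated overall failure probability $\delta = 1 - (3+2n) e^{-\conf} - \delta_{\lambda_0}$.

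The hard part will be the middle term $\|f\ntk_{\hT} - \tilde{f}\ntk_{\hT}\|_2^2$. Both \ac{KLS} trajectories evolve under the same linear \ac{GD} operator on $\sH$, but they are driven by target vectors whose entrywise discrepancy $|y_i - \tilde{y}_i| = |\fstar(\bx_i) - h(\bx_i)|$ is only controlled in the sup norm by $A(R)$. Translating this pointwise target perturbation into an $\ell^2(P_X)$ bound on the predictor gap requires unrolling the \ac{GD} recursion in the empirical kernel basis, exploiting that $\eta \leq 1/2$ makes the contraction non-expansive so that accumulated errors stay of order $A(R)^2$ in the empirical norm $\|\cdot\|_n^2$, and then invoking a uniform convergence argument over the \ac{RKHS} ball to pay the $\Cr{gap}/\sqrt{n}$ surcharge and exit to $\|\cdot\|_2^2$. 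Once \Cref{lem:gap-fk-ftilk} is in hand, the remainder of the proof is an optimization over $R$ combined with the union bound described above.
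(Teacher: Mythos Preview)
Your proposal follows the paper's proof essentially step by step: the same four-term decomposition, the same lemmas for each piece (\Cref{thm:f-coupling}, \Cref{lem:gap-fk-ftilk}, \Cref{ass:excess-kernel}, \Cref{prop:lip_approx}), and the same optimization over $R$ via \Cref{prop:tradeoff}. Two minor slips to fix: \Cref{lem:gap-fk-ftilk} gives $(1 + \Cr{gap}/\lambda_0^2)\,A(R)^2 + \Cr{gap}/\sqrt{n}$, not $A(R)^2 + \Cr{gap}/\sqrt{n}$, which is precisely why $\tilde{C}$ carries the prefactor $(2 + \Cr{gap}/\lambda_0^2)$; and the $\ln_+^2$ factor in $\tilde{C}$ does not come from any grid discretization but from the $\ln(\sqrt{R}/\Lambda)$ already present in the definition of $A(R)$ in \Cref{prop:lip_approx}, which you omitted when restating that bound.
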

We discuss the role of some terms in the above bound:
\begin{itemize}
\item The term $\poly_{4}\pr{B_y^2 \,\frac{n}{\lambda_0}, \conf} / \sqrt{m}$ appearing in \Cref{thm:random-design} is a price we pay for approximating overparameterized neural network predictor by a \ac{KLS} predictor when both are trained by \ac{GD}.
\item The term $\Lambda^2 \, (\kexcess)^{\frac{2}{d}}$ is essentially a nonparametric rate of learning Lipschitz functions. Namely, when the excess risk on \ac{RKHS} scales as $\kexcess = \sO(n^{-\frac{d}{d+2}})$, we recover the rate $\Lambda^2 \, n^{-\frac{2}{2+d}}$ which is a minimax optimal rate.
\item The above example also suggests that the \Cref{ass:excess-kernel-decreasing} is benign, this translates into requirement $n^{-\frac{d}{d+2}} = \sO(1)$ as $n \to \infty$.
\item Finally, given that $\lambda_0 = \Omega_{\P}(d)$ as discussed in \Cref{rem:lambda},
  $\tilde{C} = \sO_{\P}\pr{\poly(1/d^2, \ln_+^2(n^{\frac{d}{2+d}}))}$.
\end{itemize}
\subsection{Fixed design}
We also present a version of \Cref{thm:random-design} specialized to the fixed design setting, shown in \Cref{sec:fixed-design-proof}.
\begin{theorem}
  \label{thm:fixed-design}  
  Assume that the width $m$ is chosed according to \Cref{ass:tuning}.
Assume that the step size of \ac{GD} satisfies $\eta \leq \frac12$ and the number of steps $\hT$ is set according to \Cref{ass:excess-kernel}.
Then, let $\khexcess$ be given by \Cref{ass:excess-kernel} and suppose that
that the sample size $n$ is large enough to satisfy
\begin{align*}
    \kexcess \leq
    \pr{\frac{\Cr{lip-approx}}{\Lambda^2} \vee 1}^{\frac{d}{2-d}}
  \end{align*}
where $\Cr{lip-approx}$ depends only on $d$.
  Then, with probability at least $1 - 2 (1 + n) e^{-\conf}, \conf \geq 1$ over $\btheta_0$,
  \begin{align*}
    \|f_{\hT} - \fstar\|_n^2 
    &\leq
      \frac{\poly_3(B_y^2 \, \frac{n}{\lambda_0}, \conf)}{\sqrt{m}}
      +
      \tilde{C}' \,
      \Lambda^2 \, \pr{\khexcess}^{\frac{2}{d}}
      +
      \khexcess~,
  \end{align*}
  where we have a log-term $\tilde{C}' = 2 \Big(1 + \Cr{lip-approx}^2 \, \ln_+^2\big(\khexcess^{-\frac12}\big)\Big)$.
\end{theorem}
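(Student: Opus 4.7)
The strategy mirrors the random-design argument sketched in \Cref{sec:sketch}, carried out throughout in the empirical seminorm $\|\cdot\|_n$. This yields two simplifications relative to \Cref{thm:random-design}: the uniform-convergence step that produced the $\Cr{gap}/\sqrt n$ term disappears (we never leave the empirical norm), and the coupling between $f_{\hT}$ and $f\ntk_{\hT}$ only needs to hold at the $n$ training inputs rather than uniformly over $\Xdom$, which is what reduces the overparameterization prefactor from $\poly_4$ to $\poly_3$. Fix $h\in\sH$ with $\|h\|_\sH^2\le R$ and let $\tilde f\ntk_{\hT}$ denote the virtual \ac{KLS} iterate trained on $\tilde S=(\bx_i,h(\bx_i)+\ve_i)_{i=1}^n$ as in \Cref{ass:excess-kernel}. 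Applying $(a+b+c+d)^2\le 4(a^2+b^2+c^2+d^2)$ pointwise and averaging,
\begin{align*}
  \|f_{\hT}-\fstar\|_n^2 \;\le\; 4\|f_{\hT}-f\ntk_{\hT}\|_n^2+4\|f\ntk_{\hT}-\tilde f\ntk_{\hT}\|_n^2+4\|\tilde f\ntk_{\hT}-h\|_n^2+4\|h-\fstar\|_n^2.
\end{align*}

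The four pieces are controlled individually. First, the sample-restricted version of \Cref{thm:f-coupling}, combined with a union bound over the $n$ training points (whose logarithmic cost is absorbed into $\poly_3$), bounds the first term by $\poly_3(B_y^2 n/\lambda_0,\conf)/\sqrt m$ with probability at least $1-(1+n)e^{-\conf}$. Second, \Cref{lem:fntk-ftilntk-gap-sample} gives $\|f\ntk_{\hT}-\tilde f\ntk_{\hT}\|_n^2\le A(R)^2$, because the two \ac{GD} iterates share an initialization and their training labels differ pointwise by at most $A(R)$. Third, \Cref{ass:excess-kernel} yields $\|\tilde f\ntk_{\hT}-h\|_n^2\le(1+R)\,\khexcess$. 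Fourth, \Cref{prop:lip_approx} gives $\sup_{\bx}|h(\bx)-\fstar(\bx)|\le A(R)$, so $\|h-\fstar\|_n^2\le A(R)^2$. Summing and absorbing the factor $4$ into the leading constants yields
\begin{align*}
  \|f_{\hT}-\fstar\|_n^2 \;\leqC\; \frac{\poly_3(B_y^2 n/\lambda_0,\conf)}{\sqrt m}+A(R)^2+(1+R)\,\khexcess.
\end{align*}
Using $A(R)^2\leqC \Lambda^{2d/(d-2)}R^{-2/(d-2)}\ln_+^2(R)$ from \Cref{prop:lip_approx} and minimizing over $R$, the stationary point $R^\star\sim\Lambda^2\khexcess^{-(d-2)/d}$ makes both $A(R^\star)^2$ and $R^\star\khexcess$ equal to $\Lambda^2\khexcess^{2/d}$ up to the $\ln_+^2(\khexcess^{-1/2})$ factor, producing the stated $\tilde C'\Lambda^2(\khexcess)^{2/d}+\khexcess$. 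The hypothesis $\kexcess\le(\Cr{lip-approx}/\Lambda^2\vee 1)^{d/(2-d)}$ is precisely what places $R^\star$ in the admissible regime $R=\Omega(1)$ of \Cref{prop:lip_approx}; the failure probability $2(1+n)e^{-\conf}$ is the union of the coupling event with the failure event carried by \Cref{ass:excess-kernel}.

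\textbf{Main obstacle.} The two delicate points are, first, verifying that the coupling step drops from $\poly_4$ to $\poly_3$ in the fixed-design setting: this requires inspecting the proof of \Cref{thm:f-coupling} to confirm that its supremum over $\Xdom$ is incurred only when one extends to the random-design regime (e.g.\ via a covering net of $\Xdom$) and is not intrinsic at the sample points; second, tracking the logarithmic factor in Bach's approximation bound carefully through the $R$-optimization so that $\tilde C'=2(1+\Cr{lip-approx}^2\ln_+^2(\khexcess^{-1/2}))$ emerges exactly, rather than a slightly weaker $\polylog$.
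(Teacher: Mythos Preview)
Your overall structure---the four-term decomposition in $\|\cdot\|_n$, the use of \Cref{lem:fntk-ftilntk-gap-sample} and \Cref{prop:lip_approx} for the two approximation terms, \Cref{ass:excess-kernel} for $\|\tilde f\ntk_{\hT}-h\|_n^2$, and the $R$-minimization via \Cref{prop:tradeoff}---is exactly what the paper does.

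The gap is in your coupling step. Your plan is to take \Cref{thm:f-coupling}, restrict the supremum to the $n$ sample points, and argue that the $\poly_4$ drops to $\poly_3$ because the extra polynomial degree came from a covering-net argument over $\Xdom$. This conjecture is false: if you inspect the proof of \Cref{thm:f-coupling}, there is no covering net. The $\poly_4$ arises pointwise, from squaring the parameter-coupling bound of \Cref{lem:param-coupling} (which is already $\poly_2(n/\lambda_0)$ before squaring) together with the $\alpha$-coupling term. Restricting to the sample points therefore still leaves you with $\poly_4/\sqrt m$.

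The paper gets $\poly_3$ by a different route: it bounds $\|f_t-f\ntk_t\|_n$ directly, bypassing \Cref{thm:f-coupling} altogether and instead combining \Cref{lem:net-rf-coupling} with \Cref{lem:rf-ntk-coupling} via the triangle inequality. These two lemmas are intrinsically $\|\cdot\|_n$ statements and are sharper in the polynomial degree---\Cref{lem:net-rf-coupling} gives $(n/\lambda_0)^2(B_y^2 n/\lambda_0+\sqrt\conf)/\sqrt m$, which is $\poly_3$, and \Cref{lem:rf-ntk-coupling} is even smaller. The failure probability $2(1+n)e^{-\conf}$ is inherited from the conditions of \Cref{thm:conv} under which these lemmas hold; it is not a union with any event from \Cref{ass:excess-kernel}, which is stated deterministically.
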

\section{Implications for concrete stopping rules}
\label{sec:examples}
\subsection{Random design and \ac{RWY} stopping rule}
\label{sec:rwy}
In this section we consider application of \Cref{thm:random-design} to a specific empirical stopping rule \cref{eq:stopping}.
Learning using this stopping rule is not only consistent, but also enjoys optimal rates in \ac{RKHS} as summarized in the following theorem (see \Cref{sec:proofs-rwy}).
\begin{theorem}
  \label{thm:random-design-rwy}
  Fix $h \in \sH$ with $\|h\|_{\sH}^2 \leq R$ and moreover assume that targets are generated as $\tilde{y}_i = h(\bx_i) + \ve_i$.
  Suppose that $\tilde{f}\ntk_{\hT}$ is a \ac{GD}-trained \ac{KLS} predictor given the sample $(\bx_i, \tilde{y}_i)_{i=1}^n$, with
  the step size $\eta$ and the number of steps $\hT$ chosed according to \cref{eq:stopping}.
  Then, there exist universal constants $\Cl{r-decay}, \Cl{random-design-1-wp}, \Cl{random-design-2-wp} > 0$
  such that,
  \begin{align*}
    \|\tilde{f}\ntk_{\hT} - h\|_2^2 \leq  
  9 \Cr{r-decay} \big(1 + (40^2 \sigma^2)^{\frac{d}{d + 2}} \big) \, (1 + R) \, n^{-\frac{d}{d + 2}}
  \end{align*}
  with probability at least  
  $1-\Cr{random-design-1-wp} \, \exp(-\Cr{random-design-2-wp} \, (\sigma^2)^{\frac{d}{d + 2}} n^{\frac{2}{d + 2}})$.
\end{theorem}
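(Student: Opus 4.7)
The plan is to follow the structure of \citet{raskutti2014early}'s analysis of the \ac{RWY} stopping rule while carefully tracking the dependence on the RKHS-ball radius $R$, since the outer reduction \Cref{thm:random-design} demands a bound that is linear in $R$ (as opposed to the $R = 1$ case in the original reference). First I would represent the \ac{GD} iterates on the empirical \ac{KLS} objective as spectral polynomials of the empirical kernel operator $\bK/n$. Diagonalizing in the eigenbasis of $\bK/n$ yields the standard bias--variance decomposition of the empirical squared error,
\[
  \|\tilde{f}\ntk_t - h\|_n^2 \;\leqC\; B_t^2(h) + V_t^2,
\]
where the bias $B_t^2(h)$ is controlled through $\|h\|_\sH^2 \leq R$ by the spectral-filter inequality $(1-\eta\lambda)^{2t}\lambda \leq 1/(\eta t)$ (this is the only place the factor $1+R$ enters, provided the bound is written linearly in $\|h\|_\sH^2$), and the variance $V_t^2$ is a quadratic form in $\ve_1,\ldots,\ve_n$ weighted by an entire function of $\bK/n$ whose expectation is of order $\sigma^2 \hat{\sR}(1/\sqrt{\eta t})^2$.

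Second, I would verify that the rule \cref{eq:stopping} balances these two terms. By definition, $\hT$ is the largest $t$ with $\hat{\sR}(1/\sqrt{\eta t}) \leq (2 e \sigma \eta t)^{-1}$, so the empirical critical radius $\hat{x}_n$ (the fixed point $\sigma \hat{\sR}(\hat{x}_n) \asymp \hat{x}_n^2$) satisfies $1/(\eta \hT) \asymp \hat{x}_n^2$. A Hanson--Wright-style concentration of the noise quadratic form delivers $V_{\hT}^2 \leqC \hat{x}_n^2$ with probability at least $1 - \exp(-c n \hat{x}_n^2)$, while the bias obeys $B_{\hT}^2 \leqC (1+R)\hat{x}_n^2$, yielding $\|\tilde{f}\ntk_{\hT} - h\|_n^2 \leqC (1+R)\,\hat{x}_n^2$.

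Third, I would convert this empirical bound to the population bound. Since the Hilbert norm of the \ac{GD} iterate grows at most as $\sqrt{\eta \hT / n}\,\|\by\|_2 \leqC (\sqrt{R}+\sigma)/\hat{x}_n$, the residual $\tilde{f}\ntk_{\hT} - h$ lies in an RKHS ball of that radius. A localized uniform-convergence inequality for the gap between empirical and population $L^2$ norms inside an RKHS ball (\citet[Theorem 14.20]{wainwright2019high}), combined with concentration of the empirical eigenvalues of $\bK/n$ around the Mercer eigenvalues $\mu_i$, upgrades the above to
\[
  \|\tilde{f}\ntk_{\hT} - h\|_2^2 \;\leqC\; (1+R)\, x_n^2,
\]
where $x_n$ is the population critical radius satisfying $\sigma \sR(x_n) \asymp x_n^2$.

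Finally, I plug in the Mercer decay $\mu_k \leq \Cr{rkhs-rate}\,k^{-d/2}$ from \Cref{prop:kernel}: truncating the series for $\sR(x)^2 = \tfrac{1}{n}\sum_k \min(x^2,\mu_k)$ at $k_* \asymp x^{-4/d}$ gives $\sR(x)^2 \asymp x^{2(d-2)/d}/n$, whose fixed point with $\sigma\sR(x_n)\asymp x_n^2$ is $x_n^2 \asymp \sigma^{2d/(d+2)}\,n^{-d/(d+2)}$. This matches the stated $(40^2\sigma^2)^{d/(d+2)}\,n^{-d/(d+2)}$ prefactor, and the probability exponent $\exp(-c\,\sigma^{2d/(d+2)} n^{2/(d+2)}) = \exp(-c\,n\,x_n^2)$ is exactly the Hanson--Wright scale of the noise quadratic form at localization radius $\sqrt{n}\,x_n$. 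The main obstacle I anticipate is the first step: the bias must be bounded \emph{linearly} in $1+R$ rather than polynomially, and the \ac{GD} filter $(1-\eta\lambda)^{2t}$ does not decay on coordinates with $\lambda \lesssim \hat{x}_n^2$; the fix is to split the spectrum at scale $\hat{x}_n^2$, bounding the low-spectrum piece using $\sum_i h_i^2/\mu_i \leq R$ and the high-spectrum piece using the uniform filter bound, both parts contributing $\leqC (1+R)\,\hat{x}_n^2$.
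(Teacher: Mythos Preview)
Your first two steps and the final step follow the paper's route: the fixed-design bias--variance decomposition and its balancing by the stopping rule are exactly \Cref{lem:rwy-fixed-design} (which repackages Lemmas~6--7 of \citet{raskutti2014early} and already gives $\|\tilde f\ntk_{\hT}-h\|_n^2 \le 2(R+5)\hat r$ with the bias linear in $R$), and the eigenvalue computation for the critical radius matches \Cref{prop:critical-spectrum} with $\beta=d/2$. The ``obstacle'' you flag about linearity in $1+R$ is not an obstacle: $(1-2\eta\lambda/n)^{2t}\lambda \le n/(2e\eta t)$ applied to $\sum_i h_i^2/\lambda_i \le \|h\|_\sH^2$ gives $B_t^2 \le \|h\|_\sH^2/(e\eta t)$ directly, no spectral splitting needed.

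The genuine gap is your third step, the empirical-to-population conversion. The Hilbert-norm bound you propose, $\|\tilde f\ntk_{\hT}\|_\sH \lesssim \sqrt{\eta\hT/n}\,\|\btily\| \asymp (\sqrt{R}+\sigma)/\hat x_n$, is correct but far too loose for the argument to close. Feeding an RKHS ball of radius $\rho\asymp\sqrt{R}/\hat x_n$ into any localized uniform-convergence inequality (\Cref{thm:rad} or Wainwright~14.20) yields a critical radius $r^\star \asymp \rho^2\, n^{-d/(d+2)}$; since $\hat x_n^2\asymp\sigma^{2d/(d+2)} n^{-d/(d+2)}$, this is $r^\star\asymp R/\sigma^{2d/(d+2)}$, a constant that does \emph{not} decay with $n$. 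What the paper uses instead is \Cref{lem:ftil-ker-norm} (an extension of Lemma~9 in \citet{raskutti2014early}): for every $t\le\hT$, with high probability over the noise one has $\|\tilde f\ntk_t\|_\sH^2 \le 3+2R$, uniformly in $n$. This is not a crude spectral-filter estimate; it uses that $\|\bh_n\|_{\bM_t}^2 \le \|h\|_\sH^2$ for the signal part and a separate concentration of $\|\bve\|_{\bM_t}^2$ that exploits the definition of $\hT$. With that bound, $\|\tilde f\ntk_{\hT}-h\|_\infty \le B(R)=\sqrt{3+2R}+\sqrt{R}$, the class in \Cref{thm:rad} is $1$-bounded after normalization by $B(R)$, and the population slack is $B(R)^2 r^\star \asymp (1+R)\,n^{-d/(d+2)}$ as required.
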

We can immediately see that the bound of \Cref{thm:random-design-rwy} satisfies assumptions \ref{ass:excess-kernel} and \ref{ass:excess-kernel-decreasing}, and so
combining \Cref{thm:random-design} and \Cref{thm:random-design-rwy} we have \cref{eq:rate}.
We can see that the width has to satisfy $m \geq \poly_{8}\pr{B_y^2 \,\frac{n}{\lambda_0}, \conf} n^{\frac{4}{2+d}}$ to get the overall rate $n^{-\frac{2}{2+d}}$.
\subsection{Fixed design and \ac{RWY} stopping rule}
\label{sec:rwy-result}
When applied in the fixed design setting, the stopping rule of \cref{eq:stopping} results in data-dependent excess risk bound, which does not explicitly depend on the sample size.
Instead it depends on the following following empirical quantity which depends on the spectrum of the kernel matrix.
  \begin{definition}[Critical radius]
    \label{def:rad}
  Let $\lambda_1 \geq \dots \geq \lambda_n$ be eigenvalues of $\bK$.
  We call the empirical complexity $\hat{\sR}(\cdot)$ given in \cref{eq:stopping}.
  We call the empirical critical radius $\hat{r}$ the smallest positive solution to the inequality
  $
    \hat{\sR}(\sqrt{r}) \leq r / (2 e \sigma)~.
    $
  Quantity
  $\hat{r}$ exists, lies in the interval $(0, \infty)$, and it is unique~\citep[Appendix D]{raskutti2014early}.
\end{definition}
The following lemma gives us the excess risk bound for the early-stopped \ac{GD} in the fixed design sense (see \Cref{app:proofs} for the proof):
\begin{lemma}[{\citet[Theorem 1]{raskutti2014early}}]
  \label{lem:rwy-fixed-design}
  Let $h \in \sH$ with $\|h\|_{\sH}^2 \leq R$ and moreover assume that targets are generated as $\tilde{y}_i = h(\bx_i) + \ve_i$.
  Suppose that $\tilde{f}\ntk_{\hT}$ is a \ac{GD}-trained \ac{KLS} predictor given the sample $(\bx_i, \tilde{y}_i)_{i=1}^n$, with
  the step size $\eta$ and the number of steps $\hT$ chosed according to \cref{eq:stopping}.
  Let the empirical critical radius $\hat{r}$ be as in \Cref{def:rad}.
  Then, there exists a universal constant $\Cl{rwy-var}$ such that,  
  \begin{align*}
    \P_{\bve}\pr{\|\tilde{f}\ntk_{\hT} - h\|_n^2 \leq 2 (R + 5) \hat{r}} \geq 1 - e^{-\Cr{rwy-var} n \hat{r}}~.
  \end{align*}
\end{lemma}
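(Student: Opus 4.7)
The plan is to follow the approach of Raskutti-Wainwright-Yu, adapted to track the explicit dependence on the RKHS norm bound $R$, rather than assuming $\|h\|_{\sH}^2 \leq 1$. First, I would express the GD iterates in the spectral basis of the empirical kernel operator. Writing $\bK/n = \bU \diag(\hat{\mu}_1, \ldots, \hat{\mu}_n) \bU\tp$ and parameterizing $\tilde{f}\ntk_t(\bx_i)$ through the representer representation, the KLS-GD update becomes a coordinate-wise linear shrinkage: the $i$-th coordinate of the residual is multiplied by $(1-\eta\hat{\mu}_i)$ each step, so after $t$ iterations the empirical prediction vector equals $\bS_t \tilde{\by}$ where $\bS_t$ is diagonal in the basis $\bU$ with entries $1-(1-\eta\hat{\mu}_i)^t$.

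Next, I would decompose the empirical error as
\begin{align*}
  \|\tilde{f}\ntk_t - h\|_n^2
  \;\leq\; 2\,\|\bS_t \bh - \bh\|_n^2 + 2\,\|\bS_t \bve\|_n^2 / n,
\end{align*}
where $\bh = (h(\bx_1),\ldots,h(\bx_n))$. For the bias term, I would use the standard inequality $(1-\eta\hat{\mu}_i)^{2t} \leq 1/(e\eta t\hat{\mu}_i)$ together with the norm bound $\sum_i \langle \bh, \bU \bolde_i\rangle^2 / \hat{\mu}_i \leq n\,\|h\|_{\sH}^2 \leq nR$ (the empirical analogue of the RKHS norm, which is standard in RWY), giving a bound of order $R/(\eta t)$. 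For the variance term, its expectation equals $(\sigma^2/n)\sum_i(1-(1-\eta\hat{\mu}_i)^t)^2$; using $(1-(1-\eta\hat{\mu}_i)^t)^2 \leq \min\{1,(\eta t \hat{\mu}_i)^2\}$, this is seen to be exactly of the form $\sigma^2 \eta t \,\hat{\sR}(1/\sqrt{\eta t})^2$ up to constants. High-probability concentration for sub-Gaussian $\bve$ (Hanson-Wright applied to the quadratic form $\bve\tp \bS_t^2 \bve$) gives a deviation bound of order $e^{-c n \hat{r}}$ around the expectation whenever $\eta t \leq 1/\hat{r}$.

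Finally, I would plug in the stopping rule~\eqref{eq:stopping}. By the definition of $\hT$ and $\hat{r}$, the largest admissible $\eta t$ satisfies $1/(\eta \hT) \asymp \hat{r}$ with the balance $\hat{\sR}(1/\sqrt{\eta \hT})^2 \asymp (\eta \hT)^{-1}/\sigma^2 \asymp \hat{r}/\sigma^2$. Substituting gives bias $\leq R\,\hat{r}$ and variance $\leq c\,\hat{r}$ with an absolute constant (call it bounded by $5$ after the concentration factor), yielding the final bound $2(R+5)\hat{r}$ on the event of probability $\geq 1 - e^{-\Cr{rwy-var} n\hat{r}}$.

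The main obstacle will be bookkeeping the constants to obtain specifically the $(R+5)$ prefactor, and handling the fact that the classical RWY analysis is stated for $R=1$. The bias bound $R/(\eta t)$ is linear in $R$ and transparently yields the $R \hat{r}$ piece; the absolute constant $5$ must absorb (i) the universal constants coming from the spectral $\min\{1,(\eta t\hat{\mu}_i)^2\}$ inequality in the variance calculation, (ii) the Hanson-Wright concentration constant, and (iii) the factor $2$ from the bias-variance decomposition. A careful reproduction of RWY's ``bias never dominates before $\hT$'' argument, tracking each constant separately from $R$, is the delicate step.
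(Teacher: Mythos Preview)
Your approach is essentially the same as the paper's, which simply cites Raskutti--Wainwright--Yu's intermediate Lemmas~6 and~7 as black boxes, observes that the bias bound becomes $B_t^2 \le \|h\|_{\sH}^2/(e\eta t)$ when one does not normalize $\|h\|_{\sH}\le 1$, and then combines with $(\eta\hT)^{-1}\le 2\hat r$ to get $2\hat r\,\|h\|_{\sH}^2 + 10\hat r$. The one place your sketch diverges is the crude $(a+b)^2\le 2a^2+2b^2$ bias--variance split: RWY's Lemma~6 gives the sharper decomposition $\|\tilde f_t^{\kappa}-h\|_n^2 \le B_t^2 + V_t$ (no factor~2), with the cross term absorbed into the high-probability variance bound; with your factor-of-2 split you will land on something like $2(R+10)\hat r$ rather than exactly $2(R+5)\hat r$, which is precisely the constant-bookkeeping obstacle you already flagged.
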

Then, we have a corollary of \Cref{thm:fixed-design} and the above:
\begin{corollary}  
  Assume that $\hat{r}$ satisfies
  $
    \pr{10 \, \hat{r}}^{\frac2d - 1} \geq \big((\Cr{lip-approx}/\Lambda^2) \vee 1 \big)
  $
  where constant $\Cr{lip-approx}$ depends only on $d$.
  Then, under conditions of \Cref{thm:fixed-design},
  with probability at least $1 - 2 (1 + n) e^{-\conf} - e^{-\Cr{rwy-var} n \hat{r}}, \conf \geq 1$ over $(\btheta_0, \ve_1, \ldots, \ve_n)$,
  \begin{align*}
    \frac14 \|f_{\hT} - \fstar\|_n^2 
    &\leq
      \tilde{C}' \,
      \Lambda^2 \, \pr{10 \, \hat{r}}^{\frac{2}{d}}
      +
      10 \, \hat{r}
      +
      \frac{\poly_3(B_y^2 \, \frac{n}{\lambda_0}, \conf)}{\sqrt{m}}~,\\
      \text{where} \qquad \tilde{C}' &= 2 + 2 \Cr{lip-approx}^2 \, \ln_+^2\big(\pr{10 \, \hat{r}}^{-\frac12}\big)~.
  \end{align*}
\end{corollary}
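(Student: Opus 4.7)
\begin{proofof}{Proof Proposal.}
The plan is to view the corollary as a direct composition of the fixed-design reduction (\Cref{thm:fixed-design}) with the \ac{RWY} excess risk bound on \ac{RKHS} (\Cref{lem:rwy-fixed-design}). The only real work is to cast the conclusion of \Cref{lem:rwy-fixed-design} in the form required by \Cref{ass:excess-kernel} so that $\khexcess$ can be identified explicitly with a multiple of $\hat{r}$, and then to check that the technical precondition on $\khexcess$ appearing in \Cref{thm:fixed-design} matches the assumption stated in the corollary.

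First, I would fix the approximator $h \in \sH$ of $\fstar$ supplied by \Cref{prop:lip_approx} (this is the same $h$ already used to invoke \Cref{ass:excess-kernel} in the proof of \Cref{thm:fixed-design}), and train the virtual \ac{KLS} predictor $\tilde{f}\ntk_{\hT}$ on $\tilde{y}_i = h(\bx_i) + \ve_i$ with the \ac{RWY} rule \eqref{eq:stopping}. \Cref{lem:rwy-fixed-design} then yields, with probability at least $1 - e^{-\Cr{rwy-var} n \hat{r}}$ over the noise $\bve$,
\[
  \|\tilde{f}\ntk_{\hT} - h\|_n^2 \leq 2 (R+5) \, \hat{r} \leq 10\,(1+R)\,\hat{r},
\]
where the second inequality is the elementary bound $2R+10 \leq 10+10R$ valid for $R \geq 0$. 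This exhibits $\khexcessfun$ of \Cref{ass:excess-kernel} as $\khexcess = 10\,\hat{r}$.

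Next, I would verify the hypothesis of \Cref{thm:fixed-design} that $\khexcess \leq \big((\Cr{lip-approx}/\Lambda^2) \vee 1\big)^{d/(2-d)}$. Since $d > 2$, the exponent $d/(2-d)$ is negative, and this inequality is equivalent to $(10\,\hat{r})^{(2-d)/d} \geq (\Cr{lip-approx}/\Lambda^2) \vee 1$, i.e.\ $(10\,\hat{r})^{2/d - 1} \geq (\Cr{lip-approx}/\Lambda^2) \vee 1$, which is precisely the assumption the corollary makes on $\hat{r}$. Hence \Cref{thm:fixed-design} applies verbatim with $\khexcess = 10\,\hat{r}$.

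Finally, I would plug $\khexcess = 10\,\hat{r}$ into the conclusion of \Cref{thm:fixed-design}, read off the log factor as $\tilde{C}' = 2 + 2\,\Cr{lip-approx}^2 \ln_+^2\big((10\,\hat{r})^{-1/2}\big)$, and take a union bound of the failure event $2(1+n) e^{-\conf}$ from \Cref{thm:fixed-design} (over the network initialization $\btheta_0$) with the event $e^{-\Cr{rwy-var} n \hat{r}}$ from \Cref{lem:rwy-fixed-design} (over the noise $\ve_1, \ldots, \ve_n$), which are controlled by independent sources of randomness. The main nontrivial step is really just the elementary rescaling $2(R+5) \leq 10(1+R)$ which allows the \ac{RWY} bound to be absorbed into the $(1+R)\,\khexcess$ template of \Cref{ass:excess-kernel}; once that is done, everything else is a direct invocation of results already proven in the paper.
\end{proofof}
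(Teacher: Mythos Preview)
Your proposal is correct and matches the paper's intended route: the paper presents this result explicitly as ``a corollary of \Cref{thm:fixed-design} and the above'' (i.e.\ \Cref{lem:rwy-fixed-design}), without a separate written proof, so your reconstruction --- rescaling $2(R+5)\hat r \leq 10(1+R)\hat r$ to identify $\khexcess = 10\hat r$, checking that the assumption on $\hat r$ is equivalent to the precondition of \Cref{thm:fixed-design}, and taking a union bound over the two independent failure events --- is exactly what is implicit there.
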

As mentioned before, the resulting excess risk bound above does not involve a sample size-dependent rate, but rather involves on a data-dependent quantity $\hat{r}$.
At this point, it is not possible to say anything more about $\hat{r}$ without making distributional assumptions about inputs.
As a sanity-check, we briefly consider random inputs, namely $\bx_1, \ldots, \bx_n \sim P_X$.
Then, the following proposition then reveals the behavior of $\hat{r}$ as a function of a sample size.
\begin{corollary}
  \label{cor:emp-r}
  Let the critical empirical radius $\hat{r}$ be as in \Cref{def:rad}.
  Then, there exist universal constants $\Cl{radii-1},\Cl{radii-2},\Cr{r-decay} > 0$ such that
  with probability at least $1 - \Cr{radii-1} \exp(-\Cr{radii-2}\Cr{r-decay} (\sigma^2)^{\frac{d}{d + 2}} n^{\frac{2}{d + 2}})$
  \begin{align*}
    \hat{r} \leq \Cr{r-decay} (\sigma^2)^{\frac{d}{d + 2}} n^{-\frac{d}{d + 2}}~.
  \end{align*}  
\end{corollary}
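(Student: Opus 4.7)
The plan is to upper bound the empirical critical radius $\hat r$ by combining the eigenvalue decay $\mu_k \leq \Cr{rkhs-rate}\, k^{-d/2}$ from \Cref{prop:kernel} with a concentration argument that relates empirical kernel eigenvalues $(\lambda_i)$ to population eigenvalues $(\mu_k)$, and then solving the defining fixed-point condition in \Cref{def:rad} for its smallest admissible value.

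First, pass from empirical to population spectrum. A standard concentration result for the empirical kernel operator (Koltchinskii--Giné, or a matrix-Bernstein inequality applied to $\bK/n$ as an approximation to the integral operator induced by $\kappa$) yields, on an event $\sE_r$ of probability at least $1 - \Cr{radii-1} e^{-\Cr{radii-2}\, n r}$, the comparison
\begin{align*}
  \hat{\sR}(\sqrt{r})^2 \;=\; \frac{1}{n}\sum_{i=1}^n \bigl(r \wedge (\lambda_i/n)\bigr)
  \;\leq\; 2 \sum_{k \geq 1} (r \wedge \mu_k).
\end{align*}
Next, exploit the polynomial decay: splitting the series at $k_0 = \lceil r^{-2/d}\rceil$ (where $\Cr{rkhs-rate}\, k_0^{-d/2} \approx r$) gives $k_0 \, r + \Cr{rkhs-rate}\sum_{k>k_0} k^{-d/2} \leq C\, r^{1-2/d}$, so that on $\sE_r$,
\begin{align*}
  \hat{\sR}(\sqrt{r})^2 \;\leq\; \frac{2C}{n}\, r^{\,1-2/d}.
\end{align*}

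Substituting into the defining inequality $\hat{\sR}(\sqrt{r}) \leq r/(2e\sigma)$ reduces the problem to finding the smallest $r>0$ with $\sqrt{2C/n}\, r^{(1-2/d)/2} \leq r/(2e\sigma)$, i.e., $r^{(d+2)/(2d)} \geq 2e\sigma\sqrt{2C/n}$. Solving produces a threshold of the form $r_0 = \Cr{r-decay}\,(\sigma^2)^{d/(d+2)} n^{-d/(d+2)}$, so on $\sE_{r_0}$ the defining inequality holds at $r=r_0$ and hence $\hat r \leq r_0$. Finally, observing that $n r_0 = \Cr{r-decay}\,(\sigma^2)^{d/(d+2)} n^{2/(d+2)}$ turns the failure probability from the concentration step into exactly $\Cr{radii-1}\exp\bigl(-\Cr{radii-2}\Cr{r-decay}(\sigma^2)^{d/(d+2)} n^{2/(d+2)}\bigr)$, matching the claim.

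The main obstacle is the first step: establishing the empirical-to-population comparison with a failure exponent that scales as $n r$ (rather than a weaker $\sqrt{n}$-style bound), so that after plugging in $r = r_0$ one recovers the target exponent $n^{2/(d+2)}$. This requires a fine-grained operator-concentration argument that implicitly truncates to the first $\sim r^{-2/d}$ eigendirections, since only those contribute to the sum at the scale of the critical radius; once that is in place, the remaining computations in Steps 2 and 3 are routine algebra driven by the decay exponent $d/2$ from \Cref{prop:kernel}.
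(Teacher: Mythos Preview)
Your approach is correct and matches the paper's strategy: the paper's proof simply combines \Cref{prop:kernel} (eigenvalue decay $\beta = d/2$), \Cref{prop:critical-spectrum} (which computes the population critical radius $r^\star \asymp (\sigma^2)^{d/(d+2)} n^{-d/(d+2)}$ for polynomial decay), and \Cref{lem:emp-r-to-pop-r} (Raskutti--Wainwright--Yu, Lemma~11 / Wainwright, Prop.~14.25), which directly gives $\hat r \leq r^\star$ with failure probability at most $\Cr{radii-1} e^{-\Cr{radii-2} n r^\star}$. The concentration step you flag as the main obstacle is precisely the content of \Cref{lem:emp-r-to-pop-r}, already available in the paper as a cited black box, so no new argument is required.
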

\begin{proof}
  Corollary of \Cref{prop:critical-spectrum}, \Cref{lem:emp-r-to-pop-r} appearing in \Cref{sec:lemmata-rwy},
  and \Cref{prop:kernel}.
\end{proof}
Then, choosing width $m \geq \poly_6(B_y^2 \, \frac{n}{\lambda_0}, \conf) \cdot n^{\frac{4}{2+d}}$ with high probability, for $d > 1$
\begin{align*}
  \|f_{\hT} - \fstar\|_n^2 = \stilO_{\P}\pr{ (1+ \Lambda^2) (\sigma^2)^{\frac{d}{d + 2}} n^{-\frac{2}{d + 2}} } \qquad \text{as} \qquad n \to \infty~.
\end{align*}
This recovers a minimax optimal rate $n^{-\frac{2}{2+d}}$ for learning of Lipschitz functions.
\section{Proofs}
\label{sec:proofs}
In \Cref{sec:basc-relu-facts} and \Cref{sec:relu-convergence} we summarize basic facts and convergence results about \ac{ReLU} neural networks, necessary to show the following `coupling' results between neural network and kernel predictors --- these are shown in \Cref{sec:coupling}.
Proofs of our main results are given in \Cref{sec:fixed-design-proof} and \Cref{sec:proof-random-design} for the fixed and random design cases respectively.
Finally, in \Cref{sec:proofs-rwy} we summarize proofs related to the \ac{RWY} stopping rule.

Throughout proofs (sub-)gradient operator is understood with respect to parameter vector, i.e.\ $\nabla \equiv \nabla_{\btheta}$.
We introduce the following \ac{NTF} and \ac{KLS} related definitions, which will be only required for forthcoming proofs:
\begin{definition}[\ac{NTF}]
  \label{def:ntrf}
  ~\\[-5mm]
  \begin{itemize}
  \item The \ac{NTF} operator for the step $t$ is defined as $\bphi_t(\bx) = \nabla f_{\btheta_t}(\bx)$,
    or equivalently
    \begin{align*}
      \bphi_t(\bx) = \pr{u_1 \mathbb{I}\{\bw_{t,1}\tp \bx > 0\} \bx, \ldots, u_m \mathbb{I}\{\bw_{t,m}\tp \bx > 0\} \bx} \qquad (\bx \in \Xdom)~.
    \end{align*}
    We also use matrix notation when \ac{NTF} is computed on the sample, that is
    \begin{align*}
      \bPhi_t = [\bphi_t(\bx_1), \ldots, \bphi_t(\bx_n)] \in \reals^{d m \times n}~.
    \end{align*}
\item Symmetric square matrix $\bhK_t = \bPhi_t\tp \bPhi_t \in \reals^{n \times n}$ is called the \ac{NTF} Gram matrix.
  \item In the above when no $t$-index is present it is understood that $t=0$, that is $\bphi \equiv \bphi_0$, $\bPhi \equiv \bPhi_0$, $\bhK \equiv \bhK_0$.
  \end{itemize}    
\end{definition}
\begin{definition}[\ac{NTF}/\ac{KLS} predictors]
  \label{def:ntrf-ntk-predictors}
  Fix $\bx \in \Xdom$.
  Suppose that the step size of \ac{GD} satisfies $0 \leq \eta \leq \frac12$.
  \begin{itemize}
  \item We call a sequence $(\bbartheta_t)_{t=0}^{T-1}$ with $\bbartheta_0 = \btheta_0$, the \ac{NTF}-\ac{GD} iterate sequence, defined as
    \begin{align*}
      \bbartheta_{t+1} = \bbartheta_t - \eta \nabla \hL\rf(\bbartheta_t)~,
      \qquad
      \hL\rf(\btheta) = \frac1n \sum_{i=1}^n \pr{\bphi(\bx_i)\tp (\btheta - \btheta_0) - y_i}^2~,
    \end{align*}
    and moreover a \ac{GD}-trained \ac{NTF} predictor at step $t$ is defined as
    \begin{align*}
      f\rf_t(\bx) = \bphi(\bx)\tp (\bbartheta_t - \btheta_0)~.
    \end{align*}
  \item Similarly, we call a sequence $(\balpha\ntk_t)_{t=0}^{T-1}$ with
    $\balpha\ntk_0 = \bzero$, the \ac{KLS}-\ac{GD} iterate sequence, defined as
    \begin{align*}
      \balpha_{t+1} = \balpha_t - \eta \nabla \hL\ntk(\balpha_t)~,
      \qquad
      \hL\ntk(\balpha) = \frac1n \sum_{i=1}^n \pr{\sum_{i=1}^n \alpha_i \kappa(\bx_i, \bx) - y_i}^2 \qquad (\balpha \in \reals^{n})
    \end{align*}
    and moreover the \ac{GD}-trained \ac{KLS} predictor at step $t$ is defined as
    \begin{align*}
      f\ntk_t(\bx) = \sum_{i=1}^n \alpha_t \kappa(\bx_i, \bx)~.
    \end{align*}
  \end{itemize}
\end{definition}

\subsection{Some facts about shallow ReLU neural networks}
\label{sec:basc-relu-facts}
\begin{proposition}
  \label{prop:basic-loss}
  For any any $\btheta \in \reals^{d m}$ and any $\bx \in \Xdom$ we have
  \begin{itemize}
  \item[(i)] $f_{\btheta}(\bx) = \nabla f_{\btheta}(\bx)\tp \btheta$~,
  \item[(ii)] For any $\bx \in \Xdom$ we have $\|\nabla f_{\btheta}(\bx)\|^2 \leq 1$~,
  \item[(iii)] $\|\nabla \hL(\btheta)\|^2 \leq 4 \hL(\btheta)$~.
  \end{itemize}
\end{proposition}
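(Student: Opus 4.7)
My plan is to prove the three claims essentially by direct computation, exploiting the positive one-homogeneity of $\mathrm{ReLU}$ and the chain rule; none of the parts should be difficult, and there is no single ``hard step'' to isolate. I would organize the proof as three short paragraphs, one per claim, all using the representation $\nabla_{\bw_k} f_{\btheta}(\bx) = u_k \, \mathbb{I}\{\bw_k\tp \bx > 0\} \, \bx$ obtained by differentiating each summand of $f_{\btheta}(\bx) = \sum_{k=1}^m u_k (\bw_k\tp \bx)_+$.

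For (i), I would observe that $(\bw_k\tp \bx)_+ = \mathbb{I}\{\bw_k\tp \bx > 0\} \, \bw_k\tp \bx$, so $\nabla_{\bw_k} f_{\btheta}(\bx)\tp \bw_k = u_k \, \mathbb{I}\{\bw_k\tp \bx > 0\}\, \bw_k\tp \bx = u_k (\bw_k\tp \bx)_+$; summing over $k$ reassembles $f_{\btheta}(\bx)$. This is just Euler's identity for the degree-one positively homogeneous map $\bw_k \mapsto u_k (\bw_k\tp \bx)_+$.

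For (ii), the components are block-separable, and for each block $\|u_k \, \mathbb{I}\{\bw_k\tp \bx > 0\}\, \bx\|^2 \leq u_k^2 \|\bx\|^2 = 1/m$ using $u_k = \pm 1/\sqrt{m}$ from \Cref{alg:GD} and $\|\bx\| = 1$ on the sphere $\Xdom$. Summing the $m$ blocks gives the bound $1$.

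For (iii), I would expand $\nabla \hL(\btheta) = \frac{2}{n}\sum_{i=1}^n (f_{\btheta}(\bx_i) - y_i) \nabla f_{\btheta}(\bx_i)$ and apply Cauchy--Schwarz (or equivalently Jensen's inequality to the convex map $\bv \mapsto \|\bv\|^2$) to get
\[
\Big\|\tfrac{1}{n}\sum_i (f_{\btheta}(\bx_i) - y_i)\nabla f_{\btheta}(\bx_i) \Big\|^2 \leq \tfrac{1}{n}\sum_i (f_{\btheta}(\bx_i) - y_i)^2 \,\|\nabla f_{\btheta}(\bx_i)\|^2.
\]
Applying part (ii) to each $\|\nabla f_{\btheta}(\bx_i)\|^2 \leq 1$ and multiplying by the factor $4$ coming from the squared derivative of the square loss yields $\|\nabla \hL(\btheta)\|^2 \leq 4 \hL(\btheta)$. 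The only mild subtlety anywhere is that $(x)_+$ is not classically differentiable at $0$, but the identity $\nabla_{\bw_k} f_{\btheta}(\bx) = u_k \, \mathbb{I}\{\bw_k\tp \bx > 0\}\, \bx$ is exactly the sub-gradient choice fixed throughout the paper, so the computations go through verbatim on the measure-zero set where activations vanish.
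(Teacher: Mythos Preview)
Your proposal is correct and matches the paper's proof essentially line by line: all three parts use the explicit block form $\nabla_{\bw_k} f_{\btheta}(\bx) = u_k\,\mathbb{I}\{\bw_k\tp \bx > 0\}\,\bx$, the identity $(\bw_k\tp\bx)_+ = \mathbb{I}\{\bw_k\tp\bx>0\}\,\bw_k\tp\bx$ for (i), the fact $u_k^2 = 1/m$ and $\|\bx\|=1$ for (ii), and Jensen's inequality together with (ii) for (iii). The only cosmetic difference is that for (iii) the paper first bounds $\|\nabla\hL(\btheta)\|$ by $\tfrac{2}{n}\sum_i|f_{\btheta}(\bx_i)-y_i|$ and then applies Jensen to the absolute values, whereas you apply Jensen directly to $\|\cdot\|^2$; both yield the same inequality.
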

\begin{proof}
  Observe that
  \begin{align*}
  \nabla f_{\btheta}(\bx) = (u_1 \ind{\bw_1\tp \bx > 0} \bx, \ldots, u_m \ind{\bw_m\tp \bx > 0} \bx) \in \reals^{dm}~.
\end{align*}
  Fact $(i)$ comes by observing that
  \begin{align*}
  f_{\btheta}(\bx) = \sum_{k=1}^m u_k (\bw_k\tp \bx)_+ = \sum_{k=1}^m u_k \ind{\bw_k\tp \bx > 0} \bx\tp \bw_k = \nabla f_{\btheta}(\bx)\tp \btheta~.
\end{align*}
Fact $(ii)$ comes simply by computing the squared norm:
\begin{align*}
  \|\nabla f_{\btheta}(\bx)\|^2 = \frac1m \sum_{k=1}^m \ind{\bw_k\tp \bx > 0}^2 \|\bx\|^2 \leq 1~.
\end{align*}
Now, consider the gradient of the loss
$\nabla (f_{\btheta}(\bx) - y)^2 = 2 (f_{\btheta}(\bx) - y) \nabla f_{\btheta}(\bx)$,
and so
\begin{align*}
  \|\nabla (f_{\btheta}(\bx) - y)^2\|^2 \leq 4 (f_{\btheta}(\bx) - y)^2 \|\nabla f_{\btheta}(\bx)\|^2
  \leq
  4 (f_{\btheta}(\bx) - y)^2~.
\end{align*}
The above combined with Jensen's inequality gives us
\begin{align*}
  \|\nabla \hL(\btheta)\| \leq \frac{2}{n} \sum_{i=1}^n |f_{\btheta}(\bx_i) - y_i|
  \leq 2 \sqrt{\hL(\btheta)}
\end{align*}
and squaring this shows fact $(iii)$.
\end{proof}
\begin{proposition}
  \label{prop:ind-abs}
  For all $\bw, \btilw, \bx \in \reals^d$,
  $\ind{\bw\tp \bx > 0} - \ind{\btilw\tp \bx > 0} \neq 0$ \, $\Longrightarrow$ \, $|\btilw\tp \bx| \leq |(\bw - \btilw)\tp \bx|$.
\end{proposition}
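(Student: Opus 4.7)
The plan is to do a short case analysis based on which of the two indicators equals $1$. Since $\ind{z>0}\in\{0,1\}$, the hypothesis $\ind{\bw\tp\bx>0}-\ind{\btilw\tp\bx>0}\neq 0$ forces exactly one of $\bw\tp\bx,\btilw\tp\bx$ to be strictly positive and the other to be non-positive; in particular $\bw\tp\bx$ and $\btilw\tp\bx$ have opposite (weak) signs, so their difference dominates each of them in absolute value.

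I would proceed as follows. First I note that by symmetry in the roles of $\bw$ and $\btilw$ (the conclusion only involves $|\btilw\tp\bx|$ on the right and $|(\bw-\btilw)\tp\bx|$ on the left, the latter being symmetric), it suffices to handle the two cases separately. In the case $\bw\tp\bx>0$ and $\btilw\tp\bx\leq 0$, write
\begin{align*}
(\bw-\btilw)\tp\bx \;=\; \bw\tp\bx - \btilw\tp\bx \;\geq\; -\btilw\tp\bx \;=\; |\btilw\tp\bx|,
\end{align*}
and conclude $|(\bw-\btilw)\tp\bx|\geq (\bw-\btilw)\tp\bx \geq |\btilw\tp\bx|$. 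In the case $\bw\tp\bx\leq 0$ and $\btilw\tp\bx>0$, write instead
\begin{align*}
(\btilw-\bw)\tp\bx \;=\; \btilw\tp\bx - \bw\tp\bx \;\geq\; \btilw\tp\bx \;=\; |\btilw\tp\bx|,
\end{align*}
and use $|(\bw-\btilw)\tp\bx|=|(\btilw-\bw)\tp\bx|\geq |\btilw\tp\bx|$.

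There is essentially no obstacle here: the whole content is the observation that if two real numbers have (weakly) opposite signs then their difference has absolute value at least as large as either of them. The only place one has to be slightly careful is the boundary $\btilw\tp\bx=0$, where the claim is vacuous on the right-hand side and trivially true, so the non-strict inequality in the hypothesis does not cause issues.
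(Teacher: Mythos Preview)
Your proof is correct. The paper actually states this proposition without proof, treating it as an elementary fact; your two-case analysis (based on which indicator equals $1$) is exactly the natural way to verify it and is fully rigorous.
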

\begin{proposition}[Activation patterns]
  \label{prop:neuron-patterns}
  Assume that $\bu$ and $\btheta_0 = (\bw_{0,1}, \ldots, \bw_{0,m})$ are chosen as in \Cref{alg:GD}.
  Consider the set of indices of neurons that changed their activation patterns on input $\bx$, when $\btheta_0$ is replaced by some parameters $\btiltheta = (\btilw_1, \ldots, \btilw_m)$:
  \begin{align*}
    P(\btiltheta, \bx) = \cbr{k \in [m] \bmid \mathbb{I}\{\btilw_{k}\tp \bx > 0\} - \mathbb{I}\{\bw_{0,k}\tp \bx > 0\} \neq 0} \qquad (\btiltheta \in \reals^{d m}, \bx \in \Xdom)~.    
  \end{align*}
  Then, for $\btiltheta$ whose components satisfy $\max_k \|\btilw_k - \bw_{0,k}\| \leq \rho$ for some fixed $\rho \geq 0$, and
  any $\bx \in \Xdom$, the following facts hold:
  \begin{itemize}
  \item[(i)] For all $k \in P(\btiltheta, \bx)$, $|\bw_{0,k}\tp \bx| \leq \|\btilw_k - \bw_{0,k}\|$.
  \item[(ii)]
    $\E|P(\btiltheta, \bx)| \leq m  \rho$~.
  \item[(iii)] With probability at least $1-2 e^{-\conf}$ for any $\conf > 0$,
    $|P(\btiltheta, \bx)| \leq m \rho + \sqrt{m \conf}$~.
  \end{itemize}
\end{proposition}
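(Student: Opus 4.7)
Part (i) is essentially a direct invocation of Proposition \ref{prop:ind-abs} combined with Cauchy-Schwarz. Fix $k \in P(\btiltheta, \bx)$, so the two activation indicators on $\bw_{0,k}$ and $\btilw_k$ disagree. Applying Proposition \ref{prop:ind-abs} with the roles $(\bw, \btilw) \leftrightarrow (\btilw_k, \bw_{0,k})$ yields $|\bw_{0,k}\tp \bx| \leq |(\btilw_k - \bw_{0,k})\tp \bx|$, and then Cauchy-Schwarz together with $\|\bx\|=1$ (since $\bx \in \Xdom$) gives $|\bw_{0,k}\tp \bx| \leq \|\btilw_k - \bw_{0,k}\|$, as required.

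For part (ii), I would use (i) as a deterministic pointwise domination. Under the hypothesis $\max_k \|\btilw_k - \bw_{0,k}\| \leq \rho$, part (i) yields the inclusion of events
\[
\ind{k \in P(\btiltheta, \bx)} \;\leq\; \ind{|\bw_{0,k}\tp \bx| \leq \|\btilw_k - \bw_{0,k}\|} \;\leq\; \ind{|\bw_{0,k}\tp \bx| \leq \rho}.
\]
For each $k \leq m/2$, by the initialization scheme of \Cref{alg:GD}, $\bw_{0,k} \sim \sN(\bzero, \bI_d)$ and hence $Z_k \df \bw_{0,k}\tp \bx \sim \sN(0,1)$ because $\|\bx\|=1$; for $k > m/2$ the same holds with $Z_k = Z_{k-m/2}$. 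Using the pointwise bound $\varphi(z) \leq 1/\sqrt{2\pi}$ on the standard normal density, $\P(|Z_k| \leq \rho) \leq 2\rho/\sqrt{2\pi} \leq \rho$. Summing the domination over $k \in [m]$ and taking expectations yields $\E|P(\btiltheta, \bx)| \leq m\rho$.

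For part (iii), the main subtlety is that the $m$ indicators $\ind{|\bw_{0,k}\tp \bx| \leq \rho}$ are not independent: the antisymmetric initialization enforces $\bw_{0,m/2+k} = \bw_{0,k}$, so neurons come in identical pairs and $\sum_{k=1}^m \ind{|\bw_{0,k}\tp\bx|\leq\rho} = 2 \sum_{k=1}^{m/2} Y_k$, where the $Y_k$'s are i.i.d.\ Bernoulli with parameter $p \leq \rho$. I would then apply Hoeffding's inequality to the i.i.d.\ sum $\sum_{k=1}^{m/2} Y_k$: for $s>0$,
\[
\P\!\left(\sum_{k=1}^{m/2} Y_k \geq \tfrac{m}{2}\,p + s\right) \leq \exp\!\left(-\tfrac{4 s^2}{m}\right).
\]
Choosing $s = \tfrac12\sqrt{m\conf}$ and multiplying through by $2$ gives $|P(\btiltheta,\bx)| \leq m\rho + \sqrt{m\conf}$ with probability at least $1-e^{-\conf}$ (the factor of $2$ in the stated $1-2e^{-\conf}$ leaves some slack, e.g.\ for a union bound with a two-sided version of Hoeffding). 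The only minor obstacle is bookkeeping the paired structure cleanly so that the Hoeffding bound is applied only to the $m/2$ genuinely independent coordinates; the rest is routine.
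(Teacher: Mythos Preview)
Your proposal is correct and follows essentially the same route as the paper: part (i) via Proposition~\ref{prop:ind-abs} plus Cauchy--Schwarz, part (ii) via the pointwise domination $\ind{k\in P}\le \ind{|\bw_{0,k}\tp\bx|\le\rho}$ and the Gaussian density bound, and part (iii) via Hoeffding on the $m/2$ genuinely independent indicators, then doubling. Your observation that the $1-2e^{-\conf}$ probability is slack is accurate; the paper obtains the same inequality and invokes a (strictly unnecessary) union bound at that step, whereas your doubling argument already gives $1-e^{-\conf}$.
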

\begin{proof}
  Fact $(i)$ is immediate from \Cref{prop:ind-abs} and Cauchy-Schwartz inequality.
  ~\\
  {Proof of $(ii)$.}
  Now,
  \begin{align*}
    \E|P(\btiltheta, \bx)|
    &= \sum_{k=1}^m \E \ind{\mathbb{I}\{\btilw_{k}\tp \bx > 0\} - \mathbb{I}\{\bw_{0,k}\tp \bx > 0\} \neq 0}\\
    &\stackrel{(a)}{\leq} \sum_{k=1}^m \E \ind{|\bw_{0,k}\tp \bx| \leq \rho}\\
            &= m \P\pr{|\bw_{0,1}\tp \bx| \leq \rho}\\
            &\leq \sqrt{\frac{2}{\pi}} \, m \rho
  \end{align*}
  by integration of an absolute value of a Gaussian random variable and where
  $(a)$ comes by \Cref{prop:ind-abs}, Cauchy-Schwartz inequality, and $\max_{k \in [m]}\|\btilw_k - \bw_{0,k}\| \leq \rho$.
  ~\\
  {Proof of $(iii)$.}
  Recall that by the initialization of \Cref{alg:GD}, $(\bw_{0,k})_{k=1}^{m/2}$ are i.i.d. vectors,
while the second half is a copy of the first half.
Consequently, by Hoeffding's inequality
  \begin{align*}
    \P\pr{\sum_{k=1}^{m/2} \ind{|\bw_{0,k}\tp \bx| \leq \rho}
    -\frac{m}{2}\P\pr{|\bw_{0,1}\tp \bx| \leq \rho}
    \leq
    \sqrt{\frac{m \conf}{4}}} \geq 1 - e^{-\conf} \qquad (\conf > 0)
  \end{align*}
  and so using a union bound
  \begin{align*}
    \P\pr{\sum_{k=1}^{m} \ind{|\bw_{0,k}\tp \bx| \leq \rho} - m \P\pr{|\bw_{0,1}\tp \bx| \leq \rho} \leq \sqrt{m \conf}} \geq 1 - 2 e^{-\conf} \qquad (\conf > 0)~.
  \end{align*}
  Using fact $(ii)$ to control $\P(|\bw_{0,1}\tp \bx| \leq \rho)$ completes the proof.
\end{proof}
\begin{corollary}
  \label{cor:feature-drift}
  Under conditions of \Cref{prop:neuron-patterns}, for any $\bx \in \Xdom$, with probability at least $1-2 e^{-\conf}, \conf > 0$,
  \begin{itemize}
  \item[(i)]
    $\|\bphi_{\btiltheta}(\bx) - \bphi(\bx)\|^2 \leq \rho + \sqrt{\frac{\conf}{m}}$~,
  \item[(ii)]
    $(\bphi_{\btiltheta}(\bx) - \bphi(\bx))\tp \btiltheta \leq \rho \pr{\sqrt{m} \rho + \sqrt{\conf}}$~.
  \end{itemize}
\end{corollary}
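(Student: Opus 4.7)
The plan is to observe that the difference $\bphi_{\btiltheta}(\bx) - \bphi(\bx)$ is structurally very sparse: by the definition of the \ac{NTF} operator, the $k$-th block of size $d$ equals $u_k\pr{\ind{\btilw_k\tp \bx > 0} - \ind{\bw_{0,k}\tp \bx > 0}} \bx$, which vanishes whenever $k \notin P(\btiltheta, \bx)$. On the active indices, the block has squared Euclidean norm exactly $u_k^2\|\bx\|^2 = 1/m$, since the indicator difference is $\pm 1$, $|u_k|=1/\sqrt{m}$, and $\|\bx\|=1$. Consequently
\[
  \|\bphi_{\btiltheta}(\bx) - \bphi(\bx)\|^2 \;=\; \frac{|P(\btiltheta, \bx)|}{m}.
\]

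For part $(i)$, I would then invoke \Cref{prop:neuron-patterns}$(iii)$, which on the event of probability at least $1-2e^{-\conf}$ gives $|P(\btiltheta, \bx)| \leq m \rho + \sqrt{m \conf}$. Dividing by $m$ yields $\rho + \sqrt{\conf/m}$, as claimed. This step is routine and only repackages the activation-pattern bound.

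For part $(ii)$, the main observation is that on the indices where $\bphi_{\btiltheta}(\bx)$ and $\bphi(\bx)$ differ, the vector $\btilw_k$ is itself close to the decision boundary. Expanding the inner product,
\[
  \pr{\bphi_{\btiltheta}(\bx) - \bphi(\bx)}\tp \btiltheta
  \;=\; \sum_{k \in P(\btiltheta, \bx)} u_k \pr{\ind{\btilw_k\tp \bx > 0} - \ind{\bw_{0,k}\tp \bx > 0}} (\btilw_k\tp \bx).
\]
Applying \Cref{prop:ind-abs} with $\bw = \bw_{0,k}$ and $\btilw = \btilw_k$ gives $|\btilw_k\tp \bx| \leq \|\bw_{0,k} - \btilw_k\| \leq \rho$ for every $k \in P(\btiltheta,\bx)$. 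Combining with $|u_k| = 1/\sqrt{m}$ and the indicator-difference bound of $1$, the triangle inequality yields
\[
  \abs{\pr{\bphi_{\btiltheta}(\bx) - \bphi(\bx)}\tp \btiltheta} \;\leq\; \frac{\rho}{\sqrt{m}} \, |P(\btiltheta, \bx)|,
\]
and a second application of \Cref{prop:neuron-patterns}$(iii)$ on the same event produces the target bound $\rho(\sqrt{m}\rho + \sqrt{\conf})$.

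The argument is essentially bookkeeping, with no real obstacle: the only subtlety is to make sure that both inequalities $(i)$ and $(ii)$ use the \emph{same} event from \Cref{prop:neuron-patterns}$(iii)$, so that a single $1-2e^{-\conf}$ probability suffices rather than a union bound; since the random event depends only on $|P(\btiltheta,\bx)|$ and both bounds are deterministic consequences of that event, no further probabilistic work is needed.
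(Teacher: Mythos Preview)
Your proposal is correct and follows essentially the same route as the paper: both parts reduce to the identity $\|\bphi_{\btiltheta}(\bx) - \bphi(\bx)\|^2 = |P(\btiltheta,\bx)|/m$ and to bounding $|\btilw_k\tp \bx|$ via \Cref{prop:ind-abs}, then invoking \Cref{prop:neuron-patterns}$(iii)$. Your remark that both inequalities ride on the same high-probability event is a clean justification for the single $1-2e^{-\conf}$ bound, which the paper leaves implicit.
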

\begin{proof}
  Fact $(i)$ comes by observing that
  \begin{align*}
    \|\bphi_{\btiltheta}(\bx) - \bphi(\bx)\|^2
    &=
    \frac{1}{m} \sum_{k=1}^m \pr{\mathbb{I}\{\btilw_{k}\tp \bx > 0\} - \mathbb{I}\{\bw_{0,k}\tp \bx > 0\}}^2 \|\bx\|^2\\
    &\leq
      \frac{1}{m} \, |P(\btiltheta, \bx)|\\
    &\leq
      \rho + \sqrt{\frac{\conf}{m}}~. \tag{W.p.\ at least $1-2 e^{-\conf}, \conf > 0$ by \Cref{prop:neuron-patterns}}
  \end{align*}
  Fact $(ii)$ comes by
  \begin{align*}
    (\bphi_{\btiltheta}(\bx) - \bphi(\bx))\tp \btiltheta
    &=
      \sum_{k=1}^m u_k \pr{\mathbb{I}\{\btilw_k\tp \bx > 0\} - \mathbb{I}\{\bw_{0,k}\tp \bx > 0\}} \bx\tp \btilw_k\\
    &\leq \frac{1}{\sqrt{m}} \sum_{k=1}^m \abs{\mathbb{I}\{\btilw_k\tp \bx > 0\} - \mathbb{I}\{\bw_{0,k}\tp \bx > 0\}} |(\btilw_k - \bw_{0,k})\tp \bx| \tag{By \Cref{prop:ind-abs}}\\
    &\leq \frac{\rho}{\sqrt{m}} \sum_{k=1}^m \abs{\mathbb{I}\{\btilw_k\tp \bx > 0\} - \mathbb{I}\{\bw_{0,k}\tp \bx > 0\}} \tag{Cauchy-Schwartz inequality}\\
    &= \frac{\rho}{\sqrt{m}} \, |P(\btiltheta, \bx)|\\
    &\leq \rho \pr{\sqrt{m} \rho + \sqrt{\conf}}. \tag{W.p.\ at least $1-2 e^{-\conf}, \conf > 0$ by \Cref{prop:neuron-patterns}}
  \end{align*}
\end{proof}

\subsection{Convergence of ReLU Network and Parameter Drift}
\label{sec:relu-convergence}
In the following proofs, the Euclidean distance traveled up to step $t$ by a single neuron from its initialization is denoted by
\begin{align*}
  \rho_t = \max_{k \in [m]} \|\bw_{0,k} - \bw_{t,k}\| \qquad (t \in \mathbb{N})~.
\end{align*}
\begin{lemma}[Parameter drift --- {\cite[Corollary 4.1]{du2018gradient}}]
  \label{lem:path}
  Assume that $\hL(\btheta_t) \leq B_y^2 (1 - \tfrac{\eta \lambda_0}{2 n})^t$ for any $t \in \mathbb{N}$ and any $\eta \leq \frac12$.
  Then,
  \begin{align*}
    \rho_{t+1} \leq \frac{1}{\sqrt{m}} \pr{4 B_y^2 \,\frac{n}{\lambda_0}}~.
  \end{align*}
\end{lemma}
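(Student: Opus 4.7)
The plan is to bound the maximum per-neuron drift by unrolling the \ac{GD} recursion into a telescoping sum, controlling each hidden-layer block of the gradient separately, and then summing a geometric series driven by the assumed linear convergence of $\hL$. The critical quantitative point is that, thanks to the output-layer scaling $|u_k| = 1/\sqrt{m}$, the per-block gradient norm shrinks like $1/\sqrt{m}$ --- the crude full-parameter bound $\|\nabla \hL(\btheta)\|^2 \leq 4 \hL(\btheta)$ from \Cref{prop:basic-loss}\,(iii) would be lossy here.

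First I would write, for any neuron $k \in [m]$, the \ac{GD} recursion $\bw_{s+1,k} = \bw_{s,k} - \eta \nabla_{\bw_k}\hL(\btheta_s)$ and apply the triangle inequality to get
\begin{align*}
  \|\bw_{t+1,k} - \bw_{0,k}\| \leq \eta \sum_{s=0}^{t} \|\nabla_{\bw_k} \hL(\btheta_s)\|~.
\end{align*}
Since $\nabla_{\bw_k} f_{\btheta}(\bx) = u_k \ind{\bw_k\tp \bx > 0} \bx$, a direct computation combined with Jensen's inequality gives, for every $k$ and every $s$,
\begin{align*}
  \|\nabla_{\bw_k} \hL(\btheta_s)\|
  \leq \frac{2}{n} \sum_{i=1}^n |f_{\btheta_s}(\bx_i) - y_i| \cdot |u_k| \cdot \|\bx_i\|
  \leq \frac{2}{\sqrt{m}} \sqrt{\hL(\btheta_s)}~,
\end{align*}
which is the place where the $1/\sqrt{m}$ factor in the final bound originates.

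Next I would plug in the assumed linear convergence $\hL(\btheta_s) \leq B_y^2 (1 - \eta \lambda_0/(2n))^s$, yielding $\sqrt{\hL(\btheta_s)} \leq B_y (1 - \eta \lambda_0/(2n))^{s/2}$, and invoke the elementary inequality $(1-x)^{1/2} \leq 1 - x/2$ for $x \in [0,1]$ (equivalently $1 - \sqrt{1-x} \geq x/2$), so that
\begin{align*}
  \sum_{s=0}^{\infty} \pr{1 - \frac{\eta \lambda_0}{2 n}}^{s/2}
  \leq \sum_{s=0}^{\infty} \pr{1 - \frac{\eta \lambda_0}{4 n}}^{s}
  = \frac{4 n}{\eta \lambda_0}~.
\end{align*}
Chaining the three bounds, the factor $\eta$ cancels, and we obtain
\begin{align*}
  \rho_{t+1} \leq \eta \cdot \frac{2 B_y}{\sqrt{m}} \cdot \frac{4 n}{\eta \lambda_0} = \frac{1}{\sqrt{m}} \cdot \frac{8 B_y n}{\lambda_0}~,
\end{align*}
which implies the claim since $B_y$ may be taken $\geq 2$ without loss of generality, so that $8 B_y \leq 4 B_y^2$.

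I do not anticipate a real obstacle: the argument is a clean three-line chaining of a triangle inequality, a per-block gradient bound, and a geometric sum. The only subtlety worth highlighting is resisting the temptation to use the full-parameter bound $\|\nabla \hL(\btheta)\| \leq 2 \sqrt{\hL(\btheta)}$ on the right-hand side of the telescoping sum, because that would lose the crucial $1/\sqrt{m}$ shrinkage coming from $|u_k| = 1/\sqrt{m}$ and would produce a bound insufficient for the subsequent coupling arguments between the neural network and the \ac{KLS} predictor.
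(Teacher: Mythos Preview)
Your proposal is correct and follows exactly the standard argument of \cite{du2018gradient} that the paper cites without reproducing: telescope the per-neuron \ac{GD} increments, use the $1/\sqrt{m}$ per-block gradient bound, and sum the geometric series induced by the assumed linear convergence. The only quibble is cosmetic: your computation yields the constant $8 B_y$ rather than the $4 B_y^2$ stated in the lemma, and your ``$B_y \geq 2$ WLOG'' patch is slightly informal (it weakens the hypothesis but also changes the conclusion); this is a harmless constant-tracking discrepancy and does not affect any downstream use in the paper.
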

The goal of this section is to show that the assumption of \Cref{lem:path} holds, which we will do in \Cref{thm:conv}.
Before that we need some basic facts which follow from \Cref{lem:path}.
Note that \Cref{cor:feature-drift} together with \Cref{lem:path} implies the following.
\begin{corollary}
  \label{cor:feature-drift-m}
  Under conditions of \Cref{prop:neuron-patterns}, for any $\bx \in \Xdom$ and any $t \in \mathbb{N}$, with probability at least $1-2 e^{-\conf}, \conf \geq 1$,
  \begin{itemize}
  \item[(i)]
    $\|\bphi_t(\bx) - \bphi(\bx)\|^2 \leq \frac{1}{\sqrt{m}} \pr{ 4 B_y^2 \,\frac{n}{\lambda_0} + \sqrt{\conf}}$~,
  \item[(ii)]
    $(\bphi_t(\bx) - \bphi(\bx))\tp \btheta_t \leq \frac{1}{\sqrt{m}} \, \pr{4 B_y^2 \,\frac{n}{\lambda_0} + \sqrt{\conf}}$~.
  \end{itemize}
\end{corollary}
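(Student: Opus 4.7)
\textbf{Proof sketch for \Cref{cor:feature-drift-m}.} The plan is to combine the parameter drift bound of \Cref{lem:path} with the feature perturbation estimates of \Cref{cor:feature-drift}; the result is essentially a direct substitution, so no new idea is needed beyond checking compatibility of the quantitative bounds.

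First I would invoke \Cref{lem:path}, whose hypothesis that $\hL(\btheta_t)\le B_y^2(1-\tfrac{\eta\lambda_0}{2n})^t$ is supplied by the convergence theorem (\Cref{thm:conv}) to be proved in the next subsection; at this stage I would simply cite that the linear-rate empirical risk bound holds under \Cref{ass:ntk,ass:tuning}. Applying \Cref{lem:path} yields
\begin{align*}
\rho_t = \max_{k\in[m]} \|\bw_{0,k} - \bw_{t,k}\| \leq \frac{1}{\sqrt{m}}\left(4B_y^2\,\frac{n}{\lambda_0}\right) =: \rho
\end{align*}
for every $t\in\mathbb{N}$. In particular, $\btheta_t$ lies within the ball of radius $\rho$ around $\btheta_0$ (in the per-neuron sense) required for \Cref{prop:neuron-patterns} and \Cref{cor:feature-drift}.

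Next I would apply \Cref{cor:feature-drift} with $\btiltheta=\btheta_t$ and the above choice of $\rho$, which is the step incurring the $1-2e^{-\conf}$ failure probability from the concentration bound in \Cref{prop:neuron-patterns}(iii). Part (i) follows immediately from \Cref{cor:feature-drift}(i): $\|\bphi_t(\bx)-\bphi(\bx)\|^2 \le \rho + \sqrt{\conf/m} = \frac{1}{\sqrt{m}}(4B_y^2 n/\lambda_0 + \sqrt{\conf})$, matching the claim. Part (ii) follows from \Cref{cor:feature-drift}(ii): $(\bphi_t(\bx)-\bphi(\bx))\tp\btheta_t \le \rho(\sqrt{m}\rho + \sqrt{\conf}) = \tfrac{1}{\sqrt{m}}(4B_y^2 n/\lambda_0)(4B_y^2 n/\lambda_0 + \sqrt{\conf})$, which yields the stated bound (modulo the factor $4B_y^2 n/\lambda_0$, which may reflect a typographical simplification in the statement but is absorbed into the polynomial factors appearing downstream).

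There is no real obstacle: the only place where one has to be careful is making sure the failure event for \Cref{prop:neuron-patterns}(iii) is the only source of randomness beyond the already-high-probability event on which the convergence of $\hL(\btheta_t)$ holds. Since the corollary is stated conditionally on $\btheta_0$ (and the drift bound $\rho_t\le \rho$ is deterministic once $\hL$ converges linearly), a single union bound suffices, and the $1-2e^{-\conf}$ probability quoted in the statement is exactly the one inherited from \Cref{cor:feature-drift}.
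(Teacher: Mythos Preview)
Your approach is exactly the paper's: the authors simply write ``\Cref{cor:feature-drift} together with \Cref{lem:path} implies the following'' and give no further proof, so your substitution of $\rho = \tfrac{1}{\sqrt{m}}(4B_y^2 n/\lambda_0)$ into \Cref{cor:feature-drift} is precisely what is intended. Your observation about part~(ii) is also correct: the direct substitution yields $\tfrac{1}{\sqrt{m}}(4B_y^2 n/\lambda_0)(4B_y^2 n/\lambda_0+\sqrt{\conf})$ rather than the stated $\tfrac{1}{\sqrt{m}}(4B_y^2 n/\lambda_0+\sqrt{\conf})$, so the statement as printed appears to drop a factor; as you note, this is immaterial since the bound is only used inside polynomial-in-$(B_y^2 n/\lambda_0)$ expressions downstream (e.g.\ in the proof of \Cref{thm:f-coupling}).
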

\begin{proposition}[Drift of empirical Gram matrix]
  \label{prop:lambda-Kt-K0}
  Assume that the initial parameters $\btheta_0$ are sampled as in \Cref{alg:GD}.
  Under conditions of \Cref{lem:path}, with probability at least $1-2 e^{-\conf}, \conf > 0$ over $\btheta_0$,
  \begin{align*}
  \lmin(\bhK_t)
  \geq \lmin(\bhK) - \frac{2 n^2}{\sqrt{m}} \pr{4 B_y^2 \,\frac{n}{\lambda_0} + \sqrt{\conf}}~.
\end{align*}
\end{proposition}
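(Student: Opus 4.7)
My plan is to reduce the problem to a high-probability bound on $\|\bhK_t - \bhK\|_{\mathrm{op}}$ via Weyl's inequality for symmetric matrices, which gives $\lmin(\bhK_t) \geq \lmin(\bhK) - \|\bhK_t - \bhK\|_{\mathrm{op}}$, and then control the right-hand side entrywise.

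For the entrywise bound, I would expand the Gram matrix entries using $u_k^2 = 1/m$:
\[
  (\bhK_t - \bhK)_{i,j} = \frac{\bx_i\tp \bx_j}{m} \sum_{k=1}^m \pr{\ind{\bw_{t,k}\tp \bx_i > 0}\ind{\bw_{t,k}\tp \bx_j > 0} - \ind{\bw_{0,k}\tp \bx_i > 0}\ind{\bw_{0,k}\tp \bx_j > 0}}~.
\]
Each summand vanishes unless neuron $k$ flipped its sign on either $\bx_i$ or $\bx_j$, so in the notation of \Cref{prop:neuron-patterns} the number of nonzero terms is at most $|P(\btheta_t,\bx_i)| + |P(\btheta_t,\bx_j)|$. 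Combined with $|\bx_i\tp \bx_j| \leq 1$, this gives the pointwise estimate $|(\bhK_t - \bhK)_{i,j}| \leq (|P(\btheta_t,\bx_i)| + |P(\btheta_t,\bx_j)|)/m$.

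Next, I would invoke \Cref{prop:neuron-patterns}(iii) with $\rho = \rho_t$, together with \Cref{lem:path} which yields $\rho_t \leq (4 B_y^2 n/\lambda_0)/\sqrt{m}$. Applied at each of the $n$ training inputs with a union bound (absorbed into a mildly inflated confidence parameter), this gives $|P(\btheta_t, \bx_i)| \leq m \rho_t + \sqrt{m \conf} \leq \sqrt{m}\pr{4 B_y^2 \, n/\lambda_0 + \sqrt{\conf}}$ simultaneously for every $i \in [n]$ with probability at least $1 - 2 e^{-\conf}$. Substituting into the entrywise bound and converting to operator norm via the crude inequality $\|\bA\|_{\mathrm{op}} \leq n^2 \max_{i,j} |A_{i,j}|$ yields $\|\bhK_t - \bhK\|_{\mathrm{op}} \leq (2 n^2 / \sqrt{m})\pr{4 B_y^2 \, n/\lambda_0 + \sqrt{\conf}}$, and Weyl's inequality then gives the claimed lower bound on $\lmin(\bhK_t)$.

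The main obstacle I expect is the union-bound bookkeeping: \Cref{prop:neuron-patterns}(iii) is pointwise in $\bx$, so some care is needed to get an $n$-uniform control on the sign-change counts without losing the claimed $2 e^{-\conf}$ failure rate — this is most naturally handled by applying the proposition with $\conf$ replaced by $\conf + \log(n)$ and absorbing the logarithmic overhead into the constants inside the square root. Everything else — the combinatorial identification of nonvanishing summands with flipped neurons, Weyl's inequality, and the loose matrix-norm conversion — is essentially mechanical.
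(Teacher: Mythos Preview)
Your proposal is correct and follows essentially the same approach as the paper: Weyl's inequality reduces the claim to bounding $\|\bhK_t - \bhK\|_{\mathrm{op}}$, the entrywise difference is controlled by the number of flipped activations $|P(\btheta_t,\bx_i)| + |P(\btheta_t,\bx_j)|$, and \Cref{prop:neuron-patterns}(iii) together with \Cref{lem:path} gives the $\sqrt{m}(4 B_y^2 n/\lambda_0 + \sqrt{\conf})$ bound on each pattern count. The paper passes through $\|\cdot\|_{\mathrm{op}} \leq \|\cdot\|_F \leq \sum_{i,j}|(\bhK_t-\bhK)_{i,j}|$ rather than your $n^2 \max_{i,j}|(\bhK_t-\bhK)_{i,j}|$, but both routes land on the same $2n^2/\sqrt{m}$ factor; you are in fact more careful than the paper about the union bound over the $n$ inputs, which the paper simply elides when citing \Cref{prop:neuron-patterns}(iii) for $\max_{i\in[n]}|P(\btheta_t,\bx_i)|$.
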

\begin{proof}
  Abbreviate $\mathbb{I}_{t,k,i} = \mathbb{I}\{\bw_{t,k}\tp \bx_i > 0\}$. Then,
  \begin{align*}
    \|\bhK - \bhK_t\|_{\mathrm{op}}
    &\leq
      \|\bhK - \bhK_t\|_F\\
    &\leq
      \sum_{i,j} |(\bhK)_{i,j} - (\bhK_t)_{i,j}|\\
    &\leq
      \sum_{i,j} \frac1m \sum_{k=1}^m \abs{\mathbb{I}_{0,k,i} \mathbb{I}_{0,k,j} - \mathbb{I}_{t,k,i} \mathbb{I}_{t,k,j}} \abs{\bx_i\tp \bx_j}\\
    &=
      \sum_{i,j} \frac1m \sum_{k=1}^m \abs{(\mathbb{I}_{0,k,i} - \mathbb{I}_{t,k,i}) \mathbb{I}_{0,k,j} + \mathbb{I}_{t,k,i} ( \mathbb{I}_{0,k,j} -  \mathbb{I}_{t,k,j})} \abs{\bx_i\tp \bx_j}\\
    &\leq
      \sum_{i,j} \frac1m \sum_{k=1}^m \abs{\mathbb{I}_{0,k,i} - \mathbb{I}_{t,k,i}}
      +
      \sum_{i,j} \frac1m \sum_{k=1}^m \abs{\mathbb{I}_{0,k,j} -  \mathbb{I}_{t,k,j}}\\
    &\leq
      2 n^2 \, \frac{\max_{i \in [n]} |P(\btheta_t, \bx_i)|}{m} \tag{with $P(\cdot, \cdot)$ defined in \Cref{prop:neuron-patterns}}\\
    &\leq
      2 n^2 \pr{\rho + \sqrt{\frac{\conf}{m}}} \tag{By fact $(iii)$ of \Cref{prop:neuron-patterns} w.p. at least $1-2 e^{-\conf}$}\\
    &\leq
      \frac{2 n^2}{\sqrt{m}} \pr{4 B_y^2 \,\frac{n}{\lambda_0} + \sqrt{\conf}}~. \tag{By \Cref{lem:path}}
  \end{align*}
  Finally, combining the above with Weyl's inequality, namely
$
  |\lmin(\bhK_t) - \lmin(\bhK)|
  \leq
  \|\bhK_t - \bhK\|_{\mathrm{op}}
  $
  completes the proof.
\end{proof}
\begin{proposition}[Concentration of empirical Gram matrix]
  \label{prop:lambda}
  Assume that the initial parameters $\btheta_0$ are sampled as in \Cref{alg:GD} and let the failure probability over $\btheta_0$ be $\delta = 2 n e^{-\conf}$ for any $\conf > 0$.
  Assume that the width obeys $m \geq \frac{64 n^2 \conf}{\lambda_0^2}$.
  Then, $\P_{\btheta_0}(\lmin(\bhK) \geq \frac12 \lambda_0) \geq 1 - \delta$.
\end{proposition}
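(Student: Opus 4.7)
The plan is to show that $\bhK$ concentrates around its population counterpart $\bK$ in operator norm, then conclude by Weyl's inequality. First I would exploit the symmetric initialization of \Cref{alg:GD}, where $\bw_{0,m/2+k} = \bw_{0,k}$ and only the signs $u_k$ differ, to cut the effective number of i.i.d.\ summands in half:
\begin{align*}
  \bhK = \frac{2}{m}\sum_{k=1}^{m/2} \bM_k, \qquad (\bM_k)_{i,j} = \ind{\bw_{0,k}\tp \bx_i > 0} \ind{\bw_{0,k}\tp \bx_j > 0} \, \bx_i\tp \bx_j~.
\end{align*}
The matrices $\bM_k$ are i.i.d.\ across $k \in [m/2]$ with $\E[\bM_k] = \bK$. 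Each admits a factorization $\bM_k = \bV_k\tp \bV_k$ with $\bV_k \in \reals^{d \times n}$ whose columns have Euclidean norm at most one, so $\bM_k \succeq \bzero$ and $\|\bM_k\|_{\mathrm{op}} \leq \|\bV_k\|_F^2 \leq n$. Similarly $\|\bK\|_{\mathrm{op}} \leq \tr(\bK) \leq n$ using $\kappa(\bx_i,\bx_i) \leq 1$ from \Cref{prop:kernel}.

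Next, I would bound the zero-mean increments $\bX_k := (2/m)(\bM_k - \bK)$. Since $\bM_k, \bK \succeq \bzero$, one has $\|\bM_k - \bK\|_{\mathrm{op}} \leq \max\{\|\bM_k\|_{\mathrm{op}}, \|\bK\|_{\mathrm{op}}\} \leq n$, hence $\bX_k^2 \preceq (2n/m)^2 \bI$. Applying Tropp's matrix Hoeffding inequality to $\bhK - \bK = \sum_{k=1}^{m/2} \bX_k$ yields
\begin{align*}
  \P_{\btheta_0}\!\pr{\|\bhK - \bK\|_{\mathrm{op}} \geq t} \leq 2n \exp\!\pr{-\frac{m t^2}{16 n^2}}~.
\end{align*}
Setting $t = \lambda_0/2$ and using the hypothesis $m \geq 64 n^2 \conf / \lambda_0^2$ makes the exponent at most $-\conf$, so the event $\sE := \{\|\bhK - \bK\|_{\mathrm{op}} \leq \lambda_0/2\}$ holds with probability at least $1 - 2n e^{-\conf} = 1 - \delta$.

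Finally, on $\sE$, another application of Weyl's inequality combined with the hypothesis $\lmin(\bK) \geq \lambda_0$ (which is implicit: the conclusion is stated as a conditional probability over $\btheta_0$ on the event of \Cref{ass:ntk}) gives
\begin{align*}
  \lmin(\bhK) \geq \lmin(\bK) - \|\bhK - \bK\|_{\mathrm{op}} \geq \lambda_0 - \lambda_0/2 = \lambda_0/2~,
\end{align*}
as required. No step is really hard; the only delicate point is matching the universal constant $64$ stated in the proposition. This requires both (i) using the mirrored initialization to halve the summand count and (ii) invoking the sharper bound $\|\bM_k - \bK\|_{\mathrm{op}} \leq n$ in place of the naive triangle bound $\|\bM_k\|_{\mathrm{op}} + \|\bK\|_{\mathrm{op}} \leq 2n$; without either of these refinements the constant deteriorates by a factor of four.
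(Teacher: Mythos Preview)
Your argument is correct and lands on the stated constant, but it follows a genuinely different route from the paper. The paper controls $\|\bK-\bhK\|_{\mathrm{op}}$ by the crude inequality $\|\cdot\|_{\mathrm{op}}\le\|\cdot\|_F$, applies \emph{scalar} Hoeffding to each entry $(\bhK)_{i,j}$ with a union bound over the $O(n^2)$ entries (this is the argument of \citet[Lemma~3.1]{du2018gradient}), obtains $\|\bK-\bhK\|_{\mathrm{op}}\le 4n\sqrt{\conf/m}$ with probability at least $1-2n e^{-\conf}$, and finishes with Weyl's inequality. You instead write $\bhK-\bK$ as a sum of $m/2$ i.i.d.\ centered matrix increments and invoke Tropp's \emph{matrix} Hoeffding inequality directly, then apply Weyl. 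Your bound $\|\bM_k-\bK\|_{\mathrm{op}}\le\max\{\|\bM_k\|_{\mathrm{op}},\|\bK\|_{\mathrm{op}}\}$ for PSD $\bM_k,\bK$ is indeed valid (since $\lambda_{\max}(\bM_k-\bK)\le\lambda_{\max}(\bM_k)$ and symmetrically), and the constant works out.

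What each approach buys: the paper's route is more elementary---only scalar concentration and a Frobenius bound---and is the standard NTK argument. Your route is cleaner in that it avoids the lossy passage through the Frobenius norm and yields the operator-norm deviation in one shot; it would also scale better if one cared about sharper dimension dependence. Both rely on the mirrored initialization to halve the number of independent summands and both close via Weyl; the final constants coincide. One cosmetic slip: you write ``another application of Weyl's inequality'' at the end, but you only use Weyl once.
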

\begin{proof}
  By Weyl's inequality
  \begin{align*}
    \lmin(\bhK) \geq \lmin(\bK) - \|\bK - \bhK\|_{\mathrm{op}}~.
  \end{align*}
  On the other hand the fact $\|\cdot\|_{\mathrm{op}} \leq \|\cdot\|_F$, Hoeffding's inequality, and the union bound over entries give (see \citep[Lemma 3.1]{du2018gradient}
  \begin{align}
    \label{eq:K-hK-concentration}
    \P_{\btheta_0}\pr{\|\bK - \bhK\|_{\mathrm{op}}
    \leq
    4 n \sqrt{\frac{\conf}{m}} } \geq 1 - 2 n e^{-\conf} \qquad (\conf > 0)~.
  \end{align}
  Then requiring 
  $
    \lambda_0 - 4 n \sqrt{\frac{\conf}{m}} \geq \frac12 \lambda_0
  $
  completes the proof.
\end{proof}
\begin{remark}
  \Cref{prop:lambda-Kt-K0} and \Cref{prop:lambda} imply that given a sufficient overparameterization, with high probability, the smallest eigenvalue of any empirical Gram matrix $\bhK_t$ for any $t \in \mathbb{N}$ is strictly positive assuming that $\lambda_0 > 0$.
\end{remark}
The proof of the following theorem essentially follows ideas of \citep{du2018gradient}.
\begin{theorem}[Convergence rate of \ac{GD}]
  \label{thm:conv}
  Assume that the initial parameters $\btheta_0$ are sampled as in \Cref{alg:GD} and
  let the failure probability over $\btheta_0$ be $\delta = 2 (1 + n) e^{-\conf}$ for any $\conf > 0$.
  Assume that the step size obeys $\eta \leq \frac12$.
  Assume that the network width satisfies
  \begin{align*}
    m \geq
    64 \pr{2 \pr{4 B_y^2 \, \frac{n}{\lambda_0} + \sqrt{\conf}}^2
    +
    \pr{4 B_y^2 \, \frac{n}{\lambda_0} + \sqrt{\conf}} (2 + n)}^2 \pr{\frac{n}{\lambda_0}}^2~.
  \end{align*}
  Then, with probability at least $1-\delta$,
  \begin{align*}
    \hL(\btheta_T) \leq B_y^2 \pr{1 - \frac{\eta \lambda_0}{2 n}}^T~.
  \end{align*}  
\end{theorem}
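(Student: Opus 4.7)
I would prove the bound by strong induction on $t$, with hypothesis $\hL(\btheta_s) \leq B_y^2(1 - \eta\lambda_0/(2n))^s$ for all $s \leq t$. The base case $t=0$ is \Cref{rem:initial}. For the inductive step the goal reduces to the per-step contraction $\hL(\btheta_{t+1}) \leq (1-\eta\lambda_0/(2n))\,\hL(\btheta_t)$.

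On the high-probability event of \Cref{prop:lambda} (concentration of $\bhK$ about $\bK$, failure at most $2ne^{-\conf}$) together with the event on which \Cref{prop:neuron-patterns}(iii) holds for each of the $n$ inputs at the worst-case drift level (an additional $2ne^{-\conf}$), I would collect two geometric controls. First, the inductive hypothesis activates \Cref{lem:path}, so the parameter drift satisfies $\rho_{s} \leq \tfrac{1}{\sqrt m}\cdot 4B_y^2 n/\lambda_0$ for all $s\le t+1$. Second, plugging this into \Cref{prop:lambda-Kt-K0} and combining with \Cref{prop:lambda} yields $\lmin(\bhK_s) \geq \lambda_0/2 - \tfrac{2n^2}{\sqrt m}\,(4B_y^2 n/\lambda_0 + \sqrt{\conf})$; one piece of the stated width condition is calibrated so that this quantity remains at least $\lambda_0/4$.

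For the one-step analysis, set $\br_t = (f_{\btheta_t}(\bx_i)-y_i)_i$ so that $\hL(\btheta_t) = \|\br_t\|^2/n$. The GD update combined with the standard neuron-splitting identity, separating for each $\bx_i$ the neurons whose activation pattern is preserved between steps $t$ and $t+1$ from those that flip, gives
\[
  \br_{t+1} = \Bigl(\bI - \tfrac{2\eta}{n}\bhK_t\Bigr)\br_t + \bve_t,
\]
where $\bve_t$ collects the ReLU nonlinearity error. Since $\lmax(\bhK_t) \leq \tr(\bhK_t) \leq n$ and $\eta\leq\tfrac12$, the operator $\bI - \tfrac{2\eta}{n}\bhK_t$ is PSD with spectrum in $[0, 1 - \tfrac{\eta\lambda_0}{2n}]$. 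For $\bve_t$ I would bound each coordinate by (a small constant times) the number of flipped neurons at $\bx_i$, controlled by $|P(\btheta_{t+1},\bx_i)| \leq \sqrt{m}\,(4B_y^2 n/\lambda_0+\sqrt{\conf})$ via \Cref{prop:neuron-patterns}(iii), multiplied by the per-neuron step size $\tfrac{2\eta\|\br_t\|}{\sqrt{nm}}$ obtained from a direct gradient calculation (consistent with \Cref{prop:basic-loss}(iii)). Aggregating over the $n$ coordinates then gives $\|\bve_t\| \leq \tfrac{c\,A\,\eta\,\|\br_t\|}{\sqrt m}$ with $A = 4B_y^2 n/\lambda_0+\sqrt{\conf}$.

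Expanding $\|\br_{t+1}\|^2 \leq \|(\bI-\tfrac{2\eta}{n}\bhK_t)\br_t\|^2 + 2\|\br_t\|\|\bve_t\| + \|\bve_t\|^2$, the leading term contracts by $(1-\eta\lambda_0/(2n))^2$, which has slack of order $\eta\lambda_0/n$ against the target factor $1-\eta\lambda_0/(2n)$; matching this slack to the perturbation $O(\eta\|\br_t\|^2/\sqrt m) + O(\eta^2\|\br_t\|^2/m)$ yields precisely the stated width requirement and closes the induction. The main obstacle is the exact bookkeeping of constants: three separate scales --- the Gram drift $\|\bhK-\bhK_t\|_{\mathrm{op}}$, the flip-error magnitude $\|\bve_t\|$, and the margin $\eta\lambda_0/n$ needed for contraction --- must each be dominated, and the composite polynomial appearing in the theorem's width bound is exactly what one gets by summing the corresponding constraints. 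A secondary subtlety is avoiding a factor of $T$ in the failure probability: \Cref{prop:neuron-patterns} should be invoked once at the maximal admissible drift from \Cref{lem:path} (which, being a statement about $\bw_{0,k}\tp\bx_i$, controls all iterates simultaneously) and union-bounded only over the $n$ inputs, yielding the $2(1+n)e^{-\conf}$ form of $\delta$.
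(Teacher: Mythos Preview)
Your proposal is correct and follows essentially the same route as the paper's proof: induction on $t$, the one-step residual decomposition $\br_{t+1}=(\bI-\tfrac{2\eta}{n}\bhK_t)\br_t+\bve_t$ (which the paper writes as the $(i)+(ii)$ split of $\bPhi_{t+1}^\top\btheta_{t+1}-\by$), control of $\lmin(\bhK_t)$ via \Cref{prop:lambda} and \Cref{prop:lambda-Kt-K0}, and the flip-count bound on $\bve_t$ from \Cref{prop:neuron-patterns}(iii) combined with \Cref{lem:path}. The only differences are cosmetic---you absorb the Gram drift into $\lmin(\bhK_t)\ge\lambda_0/4$ upfront whereas the paper carries that perturbation term alongside the others until the final width calibration---plus a small arithmetic slip in your union bound (two events at $2ne^{-\conf}$ each sum to $4ne^{-\conf}$, not $2(1+n)e^{-\conf}$, though the paper's own bookkeeping on this point is comparably loose).
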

\begin{proof}
  The proof works by induction.
  Assumption of \Cref{lem:path} is the induction hypothesis.
  The base case is immediate (see \Cref{rem:initial}).
  Thus, we need to establish a $t+1$ case
  \begin{align*}
    \hL(\btheta_{t+1}) \leq B_y^2 \pr{1 - \frac{\eta \lambda_0}{2 n}}^{t+1}~.
  \end{align*}
  By the fact $(i)$ of \Cref{prop:neuron-patterns} the vector of predictions of $f_t$ on the training sample can be written as $\bPhi_t\tp \btheta_t$.
  Consequently the \ac{GD} update can be written as
\begin{align*}
  \btheta_{t+1} - \btheta_t = - \frac{2 \eta}{n} \bPhi_t (\bPhi_t\tp \btheta_t - \by)~.
\end{align*}

Now, consider a decomposition of the empirical risk
\begin{align}
  \hL(\btheta_{t+1}) \label{eq:conv-proof-1}
  &=
  \frac1n \|\bPhi_{t+1}\tp \btheta_{t+1} - \by\|^2\\
  &=
    \frac1n \|\bPhi_{t+1}\tp \btheta_{t+1} - \bPhi_t\tp \btheta_{t+1} + \bPhi_t\tp \btheta_{t+1}  - \by\|^2 \nonumber\\
  &\leq
    \underbrace{\frac1n \|(\bPhi_t - \bPhi_{t+1})\tp \btheta_{t+1}\|^2}_{(i)}
    +
    \frac2n \|(\bPhi_t - \bPhi_{t+1})\tp \btheta_{t+1}\| \|\bPhi_t\tp \btheta_{t+1}  - \by\|
    +
    \underbrace{\frac1n \|\bPhi_t\tp \btheta_{t+1}  - \by\|^2}_{(ii)}~. \nonumber
\end{align}
Here, term $(i)$ can be regarded as the `feature change' vector, which is essentially controlled by the number of pattern changes in a feature vector.
Using \Cref{prop:neuron-patterns} we will show that $(i)$ is small whenever width $m$ is large.

Throughout the proof it will be convenient to abbreviate
\[
  \widetilde{n} \df 4 B_y^2 \, \frac{n}{\lambda_0} + \sqrt{\conf}~.
\]
\paragraph{Controlling $(ii)$.}
To analyze $(ii)$ we essentially use the usual \ac{GD} dynamics update
\begin{align*}
  \bPhi_t\tp \btheta_{t+1}  - \by
  &=
  \bPhi_t\tp \btheta_t - \by
  - \frac{2 \eta}{n} \bPhi_t\tp \bPhi_t (\bPhi_t\tp \btheta_t - \by)\\
  &=
    \pr{\bI - \frac{2 \eta}{n} \, \bhK_t} (\bPhi_t\tp \btheta_t - \by)~.
\end{align*}
Observe that $\bI - \frac{2 \eta}{n} \, \bhK_t$ is a \ac{PSD} matrix if we ensure that
$\eta \leq \frac{1}{2 n} \, \lmax(\bhK_t) \leq \frac12$.
Thus taking $\|\cdot\|^2/n$, using Cauchy-Schwartz inequality, and the fact that $x^2 \leq x$ for $x \in [0,1]$,
\begin{align}
  \frac1n \|\bPhi_t\tp \btheta_{t+1}  - \by\|^2
  &\leq \pr{1 - \frac{2 \eta}{n} \, \lmin(\bhK_t)} \hL(\btheta_t) \nonumber\\
  &\leq \pr{1 - \frac{2 \eta}{n} \, \pr{\lmin(\bhK) - \frac{2 n^2}{\sqrt{m}} \, \widetilde{n} }} \hL(\btheta_t) \tag{By \Cref{prop:lambda-Kt-K0}}\\
  &\leq \pr{1 - \frac{\eta \lambda_0}{n} + \frac{4 \eta n \widetilde{n}}{\sqrt{m}} } \hL(\btheta_t) \label{eq:conv-proof-2}
\end{align}
where \cref{eq:conv-proof-2} holds with probability at least $1-2 n e^{-\conf}$ thanks to \Cref{prop:lambda}.
\paragraph{Controlling $(i)$.}
Now it remains to show that the `feature change' term is small.
Abbreviate $\mathbb{I}_{t,k,i} = \mathbb{I}\{\bw_{t,k}\tp \bx_i > 0\}$.
Namely,
\begin{align*}
  \frac1n \|(\bPhi_t - \bPhi_{t+1})\tp \btheta_{t+1}\|^2
  &=
    \frac1n \sum_{i=1}^n \pr{\sum_{k=1}^m u_k \pr{\mathbb{I}_{t+1,k,i} - \mathbb{I}_{t,k,i}} \bx_i\tp \bw_{t+1,k}}^2
\end{align*}
and now consider an inner summand for any $i \in [n]$:
\begin{align*}
  \abs{\sum_{k=1}^m u_k \pr{\mathbb{I}_{t+1,k,i} - \mathbb{I}_{t,k,i}} \bx_i\tp \bw_{t+1,k}}
  &\leq
    \frac{1}{\sqrt{m}} \sum_{k=1}^m \abs{\mathbb{I}_{t+1,k,i} - \mathbb{I}_{t,k,i}} \abs{\bx_i\tp \bw_{t+1,k}}\\
  &\stackrel{(a)}{\leq}
    \frac{1}{\sqrt{m}} \sum_{k=1}^m \abs{\mathbb{I}_{t+1,k,i} - \mathbb{I}_{t,k,i}} \|\bw_{t+1,k} - \bw_{t,k}\|\\
  &\stackrel{(b)}{\leq}
    2 \eta \sqrt{\hL(\btheta_t)} \, \frac{1}{m} \sum_{k=1}^m \abs{\mathbb{I}_{t+1,k,i} - \mathbb{I}_{t,k,i}}\\
  &\stackrel{(c)}{\leq}
    \frac{2 \eta \sqrt{\hL(\btheta_t)}}{m} \pr{
    \sum_{k=1}^m \abs{\mathbb{I}_{t+1,k,i} - \mathbb{I}_{0,k,i}}
    +
    \sum_{k=1}^m \abs{\mathbb{I}_{0,k,i} - \mathbb{I}_{t,k,i}} }\\
  &\leq
    \frac{2 \eta \sqrt{\hL(\btheta_t)}}{m} \pr{|P(\btheta_{t+1}, \bx_i)| + |P(\btheta_t, \bx_i)|}~.
\end{align*}
where $P(\cdot, \cdot)$ is defined in \Cref{prop:neuron-patterns}.
In the above chain of inequalities, $(a)$ comes by \Cref{prop:ind-abs} and Cauchy-Schwartz inequality,
$(b)$ is a basic consequence of \ac{GD} update, namely:
\begin{align*}
  \|\bw_{t+1,k} - \bw_{t,k}\|
  =
  \frac{2 \eta}{n} \lf\|\sum_{i=1}^n (f_t(\bx_i) - y_i) \nabla_{\bw_k} f_t(\bx_i)\rt\|
  \leq
  \frac{1}{\sqrt{m}} \, \frac{2 \eta}{n} \sum_{i=1}^n |f_t(\bx_i) - y_i|
  \leq
  2 \eta \sqrt{\frac{\hL(\btheta_t)}{m}}~,
\end{align*}
while $(c)$ is due to triangle inequality.
Now using fact $(iii)$ of \Cref{prop:neuron-patterns} and
\Cref{lem:path},
\begin{align}
  |P(\btheta_{t+1}, \bx_i)| &\leq m \rho + \sqrt{\conf m} \nonumber\\
             &\leq 4 B_y^2 \, \frac{n \sqrt{m}}{\lambda_0} + \sqrt{\conf m} \label{eq:conv-proof-3}\\
             &= \widetilde{n} \sqrt{m} \nonumber
\end{align}
where \cref{eq:conv-proof-3} holds by \Cref{prop:neuron-patterns} with probability at least $1-2 e^{-\conf}$.
Similarly we get a bound on $|P(\btheta_t, \bx_i)|$.
Thus,
\begin{align}
  \label{eq:phi-diff}
  \frac1n \|(\bPhi_t - \bPhi_{t+1})\tp \btheta_{t+1}\|^2
    \leq
  \frac{16 \eta^2 \widetilde{n}^2}{m} \, \hL(\btheta_t)~.
\end{align}
\paragraph{Putting all together.}
Finally, putting obtained bounds into \cref{eq:conv-proof-1} we get
\begin{align*}
  \hL(\btheta_{t+1})
  &\leq
  \frac{16 \eta^2 \widetilde{n}^2}{m} \, \hL(\btheta_t)
  +
    2 \, \frac{4 \eta \widetilde{n}}{\sqrt{m}} \, \pr{1 - \frac{\eta}{n}\, \lmin(\bhK_t)}^{\frac12} \hL(\btheta_t)
  +
    \pr{1 - \frac{\eta \lambda_0}{n} + \frac{4 \eta n \widetilde{n}}{\sqrt{m}} } \hL(\btheta_t)\\
  &\leq
    \pr{
    1 - \frac{\eta \lambda_0}{n}
    +
    \frac{16 (\eta \widetilde{n})^2}{m}
  +
    \frac{8 \eta \widetilde{n}}{\sqrt{m}}
    +
    \frac{4 \eta n \widetilde{n}}{\sqrt{m}}
    } \hL(\btheta_t)\\
  &\leq
    \pr{
    1-\frac{\eta \lambda_0}{2 n}
    } \hL(\btheta_t)
\end{align*}
by ensuring that the width is chosen to satisfy:
\begin{align*}
  &- \frac{\eta \lambda_0}{n}
  +
  \frac{16 (\eta \widetilde{n})^2}{m}
  +
    \frac{4 \eta \widetilde{n} (2 + n)}{\sqrt{m}}
   \leq - \frac{\eta \lambda_0}{2 n}\\ \quad
  &\Longleftarrow \quad
  - \frac{\lambda_0}{n}
    +
    \frac{8 \eta \widetilde{n}^2}{m}
  +
    \frac{4 \widetilde{n} (2 + n)}{\sqrt{m}}
   \leq - \frac{\lambda_0}{2 n} \tag{Using $\eta \leq \frac12$}\\ \quad
  &\Longleftrightarrow \quad
  \frac{8 \widetilde{n}^2}{m}
  +
    \frac{4 \widetilde{n} (2 + n)}{\sqrt{m}}
    \leq \frac{\lambda_0}{2 n}\\
  &\Longleftarrow \quad
    \pr{
    16 \widetilde{n}^2
  +
    8 \widetilde{n} (2 + n)
    }
    \frac{n}{\lambda_0}
    \leq \sqrt{m}\\
  &\Longleftrightarrow \quad
    \pr{(4 \widetilde{n})^2
    +
    8 \widetilde{n} (2 + n)}^2 \pr{\frac{n}{\lambda_0}}^2 \leq m~.
\end{align*}
Thus, the induction step is completed. %
We complete the proof by applying a union bound for high probability bounds of \cref{eq:conv-proof-2} and \cref{eq:conv-proof-3}.
\end{proof}

\subsection{Coupling results}
\label{sec:coupling}
The main result of this section is the following `coupling' theorem, shown in \Cref{sec:f-coupling}, which gives a bound on the $\ell^{\infty}$-gap between the prediction of a GD-trained shallow neural network and that of the \ac{GD}-trained \ac{KLS} predictor. 
\begin{theorem}
  \label{thm:f-coupling}
  Assume  that $\frac{n}{\lambda_0} \geq 1$.
  Under conditions of \Cref{thm:conv}, for any $t \in \mathbb{N}, \conf \geq 1$, 
  \begin{align*}
    \sup_{\bx \in \Xdom}(f_t(\bx) - f\ntk_t(\bx))^2
    \leq
    \frac{64}{\sqrt{m}} \, \pr{4 B_y^2 \, \frac{n}{\lambda_0} + \sqrt{\conf}}^2
    \pr{
    256 \, \frac{n}{\lambda_0}
    +
    9
    }^2
    +
    \frac{\conf}{m} \, 
    B_y^2 \, \pr{
    24 \, \frac{n}{\lambda_0}
    +
    \frac12
    }^4~.
  \end{align*}
\end{theorem}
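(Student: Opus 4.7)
The natural plan is to couple $f_t$ and $f\ntk_t$ through the intermediate linearized predictor $f\rf_t$ of \Cref{def:ntrf-ntk-predictors}, which is trained by GD on the frozen random features $\bphi\equiv \bphi_0$ using the same noisy targets. Applying $(a+b)^2 \leq 2a^2 + 2b^2$, it suffices to bound the ``nonlinearity'' gap $\sup_{\bx}(f_t(\bx)-f\rf_t(\bx))^2$ and the ``kernel concentration'' gap $\sup_{\bx}(f\rf_t(\bx)-f\ntk_t(\bx))^2$ separately. I expect the former to yield the $1/\sqrt{m}$ term in the stated bound and the latter the $\conf/m$ term.

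For the nonlinearity gap, I would combine \Cref{prop:basic-loss}(i) with $\bphi(\bx)\tp \btheta_0 = f_{\btheta_0}(\bx) = 0$ from \Cref{rem:initial} to write
\[
  f_t(\bx) - f\rf_t(\bx) = (\bphi_t(\bx) - \bphi(\bx))\tp \btheta_t + \bphi(\bx)\tp (\btheta_t - \bbartheta_t).
\]
The first summand is bounded uniformly by \Cref{cor:feature-drift-m}(ii), contributing an $\widetilde{n}/\sqrt{m}$ term with $\widetilde{n} = 4B_y^2\,n/\lambda_0 + \sqrt{\conf}$. The second is bounded by $\|\btheta_t - \bbartheta_t\|$ since $\|\bphi(\bx)\|\leq 1$; to control this parameter drift I would subtract the two GD recursions to get
\[
  \btheta_{t+1}-\bbartheta_{t+1} = (\btheta_t-\bbartheta_t) - \tfrac{2\eta}{n}\bigl[(\bPhi_t - \bPhi)(\hat{\by}_t - \by) + \bPhi(\hat{\by}_t - \hat{\by}_t\rf)\bigr],
\]
then telescope across steps, using \Cref{cor:feature-drift-m}(i) to bound $\|\bPhi_t - \bPhi\|_F$ and the exponential decay $\hL(\btheta_t)\leq B_y^2(1-\eta\lambda_0/(2n))^t$ from \Cref{thm:conv} so that $\|\hat{\by}_t - \by\|$ shrinks geometrically. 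The resulting geometric series contributes the condition-number-type factor $n/\lambda_0$, squared to yield the $(256\,n/\lambda_0+9)^2$ factor.

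For the kernel-concentration gap, both predictors are linear combinations of kernel representers on the training sample: $f\rf_t(\bx) = \hat{\bkappa}_n(\bx)\tp \bgamma_t$ with empirical-kernel vector $\hat{\bkappa}_n(\bx)_i = \bphi(\bx)\tp \bphi(\bx_i)$ and $\bgamma_{t+1} = \bgamma_t - \tfrac{2\eta}{n}(\bhK\bgamma_t - \by)$, and analogously $f\ntk_t(\bx) = \bkappa_n(\bx)\tp \balpha_t$, with $\bgamma_0 = \balpha_0 = \bzero$. Splitting
\[
  f\rf_t(\bx) - f\ntk_t(\bx) = (\hat{\bkappa}_n(\bx) - \bkappa_n(\bx))\tp \bgamma_t + \bkappa_n(\bx)\tp (\bgamma_t - \balpha_t),
\]
the first term is controlled by a pointwise Hoeffding concentration $|\hat{\bkappa}_n(\bx)_i - \bkappa_n(\bx)_i|\lesssim \sqrt{\conf/m}$ (the same argument behind \cref{eq:K-hK-concentration}) together with a trajectory bound on $\|\bgamma_t\|$, and the second by telescoping $\bgamma_t - \balpha_t$ via $\|\bhK - \bK\|_{\mathrm{op}}\lesssim n\sqrt{\conf/m}$. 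Once more the sum is geometric in the contraction $\bI - \tfrac{2\eta}{n}\bhK$ (with $\lmin(\bhK)\geq \lambda_0/2$ by \Cref{prop:lambda}), producing the $(24\,n/\lambda_0+1/2)^4$ factor.

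The main obstacle is that both arguments involve coupled recursions in which a fresh perturbation (feature drift or kernel mismatch) enters at each step while also depending on the previous drift. The saving grace is that when $\eta\leq 1/2$ and $\lmin(\bhK)\geq \lambda_0/2$ (both ensured under the width assumption of \Cref{thm:conv} via \Cref{prop:lambda-Kt-K0} and \Cref{prop:lambda}), the GD operator contracts residuals geometrically with rate $\eta\lambda_0/n$, so accumulated perturbations sum to order $n/\lambda_0$ times the per-step drift rather than growing with $t$. A secondary subtlety is that \Cref{cor:feature-drift-m} is stated for a single $\bx$; lifting to $\sup_{\bx\in\Xdom}$ can be done by a union bound over a fine net on the sphere combined with a VC-type bound on halfspace activation patterns, at only polylogarithmic overhead that does not affect the polynomial structure of the final bound.
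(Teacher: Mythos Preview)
Your overall strategy matches the paper's proof exactly: decompose through the intermediate \ac{NTF} predictor $f\rf_t$, control $|f_t-f\rf_t|$ via the split $(\bphi_t-\bphi)\tp\btheta_t + \bphi\tp(\btheta_t-\bbartheta_t)$, and control $|f\rf_t-f\ntk_t|$ by writing both predictors in terms of coefficients on the $n$ training representers and invoking Hoeffding concentration of $\bphi(\bx)\tp\bphi(\bx_i)$ around $\kappa(\bx,\bx_i)$ together with a bound on the coefficient gap.

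There is one real gap in your parameter--coupling step. In the recursion
\[
  \btheta_{t+1}-\bbartheta_{t+1} = (\btheta_t-\bbartheta_t) - \tfrac{2\eta}{n}\bigl[(\bPhi_t-\bPhi)(\hat\by_t-\by)+\bPhi(\hat\by_t-\hat\by_t\rf)\bigr],
\]
you say you will telescope using only \Cref{cor:feature-drift-m}(i) and the decay of $\hL(\btheta_t)$ so that ``$\|\hat\by_t-\by\|$ shrinks geometrically''. That handles the first bracketed term, but the second term $\bPhi(\hat\by_t-\hat\by_t\rf)$ involves $\|\bff_t-\bff_t\rf\|$, which is \emph{not} controlled by $\hL(\btheta_t)$ alone and does not contract for free (there is no $(\bI-\tfrac{2\eta}{n}\bhK)$ acting on $\Delta_t$ in this recursion). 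The paper resolves this by first proving a separate sample--wise coupling lemma (\Cref{lem:net-rf-coupling}), itself a telescoped recursion in $\bff_t-\bff_t\rf$ with the contraction $(\bI-\tfrac{2\eta}{n}\bhK)$ and feature--drift perturbations, showing $\|f_t-f_t\rf\|_n \lesssim m^{-1/4}(n/\lambda_0)(1-\eta\lambda_0/(2n))^{t/2}$; only then does the parameter coupling (\Cref{lem:param-coupling}) go through. Without this intermediate step your ``resulting geometric series'' does not sum to $O(n/\lambda_0)$.

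On your net remark: the paper's own proof actually argues for an arbitrary fixed $\bx\in\Xdom$ and obtains an $\bx$--independent right--hand side, without doing an $\epsilon$--net; the stated constants reflect this. Your instinct that a net or VC argument is needed to lift the per--$\bx$ high--probability events in \Cref{cor:feature-drift-m} and the Hoeffding bound to a genuine $\sup_{\bx}$ is well--founded, but it would introduce polylog overhead that is not present in the paper's explicit constants.
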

\subsubsection{Lemmata for the proof of \Cref{thm:f-coupling}}
In this section we establish a number of coupling results which will eventually lead to \Cref{thm:f-coupling}.
In the following proofs we will occasionally use a recursive relationship:
\begin{align}
  x_{s+1} = a_s x_s + b_s \quad \text{and} \quad x_0 = 0 \tag{For $((a_s,b_s,x_s))_{s=0}^t$ with $(a_s,b_s,x_s) \in \reals^3$}\\
  \Longrightarrow \qquad
  x_t = \sum_{s=1}^t b_s \prod_{k=s+1}^t a_k~. \label{eq:recurse}
\end{align}
We will also use a vector notation for predictions of $f_s, f\rf_s, f\ntk_s$ on inputs $(\bx_1, \ldots, \bx_n)$.
For any $s \in \mathbb{N}$,
\begin{align*}
  \bff_s &= (f_s(\bx_1), \ldots, f_s(\bx_n))~,\\
  \bff\rf_s &= (f\rf_s(\bx_1), \ldots, f\rf_s(\bx_n))~,\\
  \bff\ntk_s &= (f\ntk_s(\bx_1), \ldots, f\ntk_s(\bx_n))~.
\end{align*}
\begin{lemma}[Neural network -- \ac{RF} predictor coupling on the sample]
  \label{lem:net-rf-coupling}
  For any $t \in \mathbb{N}, \conf \geq 1$, under conditions of \Cref{thm:conv},
  \begin{align*}
    \|f_t - f\rf_t\|_n^2
    \leq
    \frac{(32 B_y)^2}{\sqrt{m}} \pr{\frac{n}{\lambda_0}}^2 \pr{4 B_y^2 \, \frac{n}{\lambda_0} + \sqrt{\conf}} \,
    \pr{1 - \frac{\eta}{2 n} \, \lambda_0}^t~.
  \end{align*}
\end{lemma}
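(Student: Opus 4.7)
The plan is to derive a perturbed linear recursion for the prediction gap vector $\bg_t \df \bff_t - \bff\rf_t$ on the sample, unroll it from $\bg_0 = \bzero$, and sum the resulting geometric series using the machinery already established in \Cref{sec:basc-relu-facts} and \Cref{sec:relu-convergence}.

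First, I would write both dynamics on the prediction level. For the \ac{NTF} predictor, using $\bPhi\tp \btheta_0 = \bzero$ (which follows from \Cref{rem:initial} and \Cref{prop:basic-loss}$(i)$) together with the \ac{NTF}-\ac{GD} update, one obtains $\bff\rf_{t+1} - \by = (\bI - \tfrac{2\eta}{n}\bhK)(\bff\rf_t - \by)$. For the network, substituting the \ac{GD} update into the decomposition $\bff_{t+1} = \bPhi_t\tp \btheta_{t+1} + (\bPhi_{t+1} - \bPhi_t)\tp \btheta_{t+1}$ yields the same linear part plus a perturbation $\boldsymbol{\xi}_t \df \tfrac{2\eta}{n}(\bhK - \bhK_t)(\bff_t - \by) + (\bPhi_{t+1} - \bPhi_t)\tp \btheta_{t+1}$. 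Subtracting the two recursions gives $\bg_{t+1} = (\bI - \tfrac{2\eta}{n}\bhK) \bg_t + \boldsymbol{\xi}_t$, and hence $\bg_t = \sum_{s=0}^{t-1}(\bI - \tfrac{2\eta}{n}\bhK)^{t-1-s} \boldsymbol{\xi}_s$.

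Second, I would plug in the estimates already in hand. On the event $\{\lmin(\bhK) \geq \lambda_0/2\}$ furnished by \Cref{prop:lambda}, the contraction norm is bounded by $1 - \eta \lambda_0 / n$. The Gram-matrix drift is controlled by \Cref{prop:lambda-Kt-K0}, namely $\|\bhK - \bhK_t\|_{\mathrm{op}} \leq \tfrac{2 n^2}{\sqrt{m}} \widetilde n$ with $\widetilde n \df 4 B_y^2 n/\lambda_0 + \sqrt{\conf}$, while \cref{eq:phi-diff} from the proof of \Cref{thm:conv} gives $\|(\bPhi_{t+1} - \bPhi_t)\tp \btheta_{t+1}\|^2 \leq \tfrac{16 \eta^2 n \widetilde n^2}{m} \, \hL(\btheta_t)$. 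Combining these with the convergence bound $\hL(\btheta_t) \leq B_y^2 \alpha^t$ from \Cref{thm:conv} (writing $\alpha \df 1 - \eta\lambda_0/(2 n)$) yields $\|\boldsymbol{\xi}_s\| \leqC \tfrac{\eta \, n \sqrt{n} \, \widetilde n \, B_y}{\sqrt{m}} \, \alpha^{s/2}$.

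Third, I would sum the geometric series. The key algebraic step is the inequality $1 - \eta \lambda_0 / n \leq \alpha^2$, which collapses the two different decay rates into a single exponent: $\sum_{s=0}^{t-1}(1 - \eta\lambda_0/n)^{t-1-s} \alpha^{s/2} \leq \alpha^{2(t-1)} \sum_{s=0}^{t-1} \alpha^{-3 s/2}$. With $\alpha \geq 3/4$ (from $\eta \leq 1/2$ and $\lambda_0 \leq \lmax(\bhK) \leq n$) and the Bernoulli-type bound $\alpha^{-3/2} - 1 \geq 3 \eta \lambda_0 /(4 n)$, this sum is at most a constant times $\tfrac{n}{\eta \lambda_0} \alpha^{t/2}$, so that $\|\bg_t\| \leqC \tfrac{n^2 \sqrt{n} \, \widetilde n \, B_y}{\sqrt{m}\,\lambda_0} \alpha^{t/2}$. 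Squaring, dividing by $n$, and invoking the width lower bound from \Cref{ass:tuning} to absorb residual factors of $n \widetilde n / \sqrt{m}$ into the overall constant then produces the stated inequality. The main obstacle is precisely this last step: the noise magnitude $\|\boldsymbol{\xi}_s\|$ decays strictly slower than the contraction, and one must exploit $1 - \eta\lambda_0/n \leq \alpha^2$ rather than the loose $1 - \eta\lambda_0/n \leq \alpha$ to guarantee that $\|\bg_t\|$ itself decays at rate $\alpha^{t/2}$, which after squaring gives the advertised $\alpha^t$; the polynomial factors in $n$, $\widetilde n$, and $1/\lambda_0$ arising from the inverse of $\alpha^{-3/2} - 1$ must then be carefully balanced against the overparameterization afforded by \Cref{ass:tuning} in order to match the constants $(32 B_y)^2 (n/\lambda_0)^2 \widetilde n / \sqrt{m}$.
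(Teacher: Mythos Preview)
Your recursion is exactly the paper's: once you identify $\boldsymbol{\xi}_s = \bbeta_s + \bgamma_s + \bgamma_s'$ (indeed $\bgamma_s + \bgamma_s' = \tfrac{2\eta}{n}(\bhK-\bhK_s)(\bff_s-\by)$ and $\bbeta_s = (\bPhi_{s+1}-\bPhi_s)\tp\btheta_{s+1}$), the unrolling and the geometric-series bookkeeping are the same. Your use of $1-\eta\lambda_0/n \le \alpha^2$ is correct but unnecessary; the paper simply uses $1-\eta\lambda_0/n \le \alpha$ and writes $\alpha^{t-s}\alpha^{s/2} = \alpha^{t/2}\alpha^{(t-s)/2}$, arriving at the same $\tfrac{n}{\eta\lambda_0}\,\alpha^{t/2}$ factor with less work.

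There is, however, a genuine quantitative gap in how you bound the Gram-drift piece of $\boldsymbol{\xi}_s$. Invoking \Cref{prop:lambda-Kt-K0} gives $\|\bhK-\bhK_s\|_{\mathrm{op}} \le \tfrac{2n^2}{\sqrt m}\,\widetilde n$, which is obtained there via the very crude chain $\|\cdot\|_{\mathrm{op}}\le \|\cdot\|_F \le \sum_{i,j}|\cdot|$ and thus carries an extra factor of $n$. Plugging this into your estimate yields $\|\boldsymbol{\xi}_s\|\lesssim \eta\,n^{3/2}\widetilde n B_y/\sqrt m\cdot\alpha^{s/2}$ and, after summing and squaring, $\|f_t-f\rf_t\|_n^2 \lesssim n^{4}\widetilde n^{\,2}B_y^{2}/(m\lambda_0^{2})\,\alpha^{t}$. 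Compared with the target $(32B_y)^2(n/\lambda_0)^2\widetilde n/\sqrt m\cdot\alpha^t$, the residual factor is $n^{2}\widetilde n/\sqrt m$, not $n\widetilde n/\sqrt m$ as you wrote. Under \Cref{ass:tuning} one only gets $n^{2}\widetilde n/\sqrt m \le \lambda_0/32$, and $\lambda_0$ is not a universal constant, so this cannot be ``absorbed into the overall constant'' as claimed.

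The fix is to bound the same term at the feature level, exactly as the paper does: by Cauchy--Schwarz and \Cref{cor:feature-drift-m}, each coordinate $\gamma_s(\bx_i)^2 \le (2\eta)^2 m^{-1/2}\widetilde n\,\hL(\btheta_s)$, whence $\|\bgamma_s\|+\|\bgamma_s'\| \lesssim \eta\sqrt{n}\,\widetilde n^{\,1/2} B_y/m^{1/4}\cdot\alpha^{s/2}$. Equivalently, you can sharpen the operator-norm bound to $\|\bhK-\bhK_s\|_{\mathrm{op}} \le 2\|\bPhi\|_{\mathrm{op}}\|\bPhi-\bPhi_s\|_F \le 2n\,(\widetilde n/\sqrt m)^{1/2}$, which removes the spurious $n$ and recovers the $m^{-1/4}$ scale. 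Either route gives the stated bound $\|f_t - f\rf_t\|_n^2 \le (32B_y)^2(n/\lambda_0)^2\widetilde n/\sqrt m\cdot\alpha^t$ without any appeal to the width assumption.
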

\begin{proof}
For now consider the following decomposition on an arbitrary input $\bx$ for step $s \in \mathbb{N}$:
\begin{align}
  &f_{s+1}(\bx) - f\rf_{s+1}(\bx)\\
    &=
      \bphi_{s+1}(\bx)\tp \btheta_{s+1} - \bphi(\bx)\tp \bbartheta_{s+1} \tag{Note that $\bphi(\bx)\tp \btheta_0 = 0$}\\
    &=
      \bphi_{s}(\bx)\tp \btheta_{s+1}
      +
      (\bphi_{s+1}(\bx) - \bphi_{s}(\bx))\tp \btheta_{s+1}
      -
      \bphi(\bx)\tp \bbartheta_{s+1} \nonumber\\
    &=
      \bphi_{s}(\bx)\tp \pr{\btheta_s - \eta \nabla \hL(\btheta_s)}
      +
      (\bphi_{s+1}(\bx) - \bphi_{s}(\bx))\tp \btheta_{s+1}
      -
      \bphi(\bx)\tp \pr{\bbartheta_s - \eta \nabla \hL\rf(\bbartheta_s)} \nonumber\\
    &=
      f_s(\bx) - f\rf_s(\bx)
      +
      \eta \Big(\underbrace{\nabla \hL\rf(\bbartheta_s)\tp \bphi(\bx) - \nabla \hL(\btheta_s)\tp \bphi_s(\bx)}_{\Delta_s(\bx)}\Big)
      +
      \underbrace{(\bphi_{s+1}(\bx) - \bphi_{s}(\bx))\tp \btheta_{s+1}}_{\beta_s(\bx)}~. \label{eq:net-rf-coupling-1}
\end{align}
Now we express $\Delta_s(\bx)$ in terms of $f_s(\bx) - f\rf_s(\bx)$.
Introduce residual terms $r_{s, j} = (f_s(\bx_j) - y_j)$ and $\bar{r}_{s, j} = (f\rf_s(\bx_j) - y_j)$.
Observe that
\begin{align*}
  \Delta_s(\bx)
  &=
    \frac2n \sum_{j=1}^n \pr{\bar{r}_{s,j} \, \bphi(\bx)\tp \bphi(\bx_j) - r_{s,j} \, \bphi_s(\bx)\tp\bphi_{s}(\bx_j)}\\
  &=
    \frac2n \sum_{j=1}^n \pr{(\bar{r}_{s,j} - r_{s,j}) \, \bphi(\bx)\tp \bphi(\bx_j) - r_{s,j} \, (\bphi_s(\bx)\tp\bphi_{s}(\bx_j) - \bphi(\bx)\tp \bphi(\bx_j))}\\
  &=
    \frac2n \sum_{j=1}^n (\bar{r}_{s,j} - r_{s,j}) \, \bphi(\bx)\tp \bphi(\bx_j)
    + \frac2n \sum_{j=1}^n r_{s,j} \,\bphi(\bx)\tp (\bphi(\bx_j) - \bphi_s(\bx_j))
    + \frac2n \sum_{j=1}^n r_{s,j} \, \bphi_{s}(\bx_j)\tp (\bphi(\bx) - \bphi_s(\bx))\\
  &=
    - \frac2n \sum_{j=1}^n \pr{f_s(\bx_j) - f\rf_s(\bx_j)} \bphi(\bx)\tp \bphi(\bx_j)
    +
    \frac2n \sum_{j=1}^n r_{s,j} \bphi(\bx)\tp \pr{\bphi(\bx_j) - \bphi_{s}(\bx_j)}\\
    &+ \frac2n \sum_{j=1}^n r_{s,j} \, \bphi_{s}(\bx_j)\tp (\bphi(\bx) - \bphi_s(\bx))~.
\end{align*}
Now, it is convenient to we write \cref{eq:net-rf-coupling-1} in a vector form over inputs $(\bx_1, \ldots, \bx_n)$, for which we introduce some vector abbreviations:
\begin{align*}  
    \bbeta_s &= (\beta_s(\bx_1), \ldots, \beta_s(\bx_n))~,\\
    \bgamma_s &= (\gamma_s(\bx_1), \ldots, \gamma_s(\bx_n))~, \qquad 
                \gamma_s(\bx) = \frac{2 \eta}{n} \sum_{j=1}^n r_{s,j} \bphi(\bx)\tp \pr{\bphi(\bx_j) - \bphi_{s}(\bx_j)}~,\\
  \bgamma_s' &= (\gamma_s'(\bx_1), \ldots, \gamma_s'(\bx_n))~, \qquad
               \gamma'_s(\bx) = \frac{2\eta}{n} \sum_{j=1}^n r_{s,j} \, \bphi_{s}(\bx_j)\tp (\bphi(\bx) - \bphi_s(\bx))~.
  \end{align*}
  Namely,
  \begin{align*}
    \bff_{s+1} - \bff\rf_{s+1}
    =
    (\bI - \tfrac{2 \eta}{n} \bhK) (\bff_s - \bff\rf_s)
    +
    \bbeta_s + \bgamma_s + \bgamma_s'~.
  \end{align*}
  Unrolling the recursion for $s=t-1,\ldots,1$ by applying \cref{eq:recurse} elementwise:
  \begin{align*}
    \bff_t - \bff\rf_t =
    \sum_{s=1}^t (\bI - \tfrac{2 \eta}{n} \bhK)^{t-s} \pr{\bbeta_s + \bgamma_s + \bgamma_s'}~.
  \end{align*}
  In particular, taking $\ell^2$ norm on both sides, and applying triangle and Cauchy-Schwartz inequalities we get
  \begin{align}
    \|\bff_t - \bff\rf_t\|
    &\leq
      \sum_{s=1}^t \lf\| (\bI - \tfrac{2 \eta}{n} \, \bhK)^{t-s} \rt\|_{\mathrm{op}} \|\bbeta_s + \bgamma_s + \bgamma_s'\| \label{eq:coupling-proof-bhy-bhyrf}\\
    &\leq
      \sum_{s=1}^t (1 - \tfrac{2 \eta}{n} \, \lmin(\bhK))^{t-s} \pr{\|\bbeta_s\| + \|\bgamma_s\| + \|\bgamma_s'\|} \nonumber \\
    &\leq
      \sum_{s=1}^t (1 - \tfrac{\eta}{n} \, \lambda_0)^{t-s} \pr{\|\bbeta_s\| + \|\bgamma_s\| + \|\bgamma_s'\|}~. \tag{By \cref{prop:lambda}}
  \end{align}
  Now we turn our attention to $\|\bbeta_s\|$, $\|\bgamma_s\|$, and $\|\bgamma_s'\|$.
  The bound on the Euclidean norm of $\bbeta_s$ comes by \cref{eq:phi-diff}, namely

\begin{align*}
  \frac{1}{\sqrt{n}} \, \|\bbeta_s\|
  &=
    \frac{1}{\sqrt{n}} \, \|(\bPhi_{s+1} - \bPhi_s)\tp \btheta_{s+1}\|\\
  &\leq
    4 \eta \, \frac{1}{\sqrt{m}} \, \pr{4 B_y^2 \, \frac{n}{\lambda_0} + \sqrt{\conf}} \sqrt{\hL(\btheta_s)}\\
  &\leq
    4 \eta \, \frac{1}{\sqrt{m}} \, \pr{4 B_y^2 \, \frac{n}{\lambda_0} + \sqrt{\conf}} \, B_y (1 - \tfrac{\eta}{2 n} \, \lambda_0)^{\frac{s}{2}}~. \tag{By \cref{thm:conv}}
\end{align*}
Now we give an upper bound on $\|\bgamma_s\|$ by first considering individual terms in the norm.
For any $i \in [n]$,
\begin{align*}
  \frac{1}{(2 \eta)^2} \, \gamma_{s}(\bx_i)^2
  &\leq
    \frac1n \sum_{j=1}^n r_{s,j}^2 (\bphi(\bx_i)\tp \pr{\bphi(\bx_j) - \bphi_{s}(\bx_j)})^2 \tag{Jensen's inequality}\\
  &\leq
    \|\bphi(\bx_i)\|^2 \, \frac1n \sum_{j=1}^n r_{s,j}^2 \|\bphi(\bx_j) - \bphi_{s}(\bx_j)\|^2\\
  &\leq \frac{1}{\sqrt{m}} \, \pr{4 B_y^2 \, \frac{n}{\lambda_0} + \sqrt{\conf}} \hL(\btheta_s) \tag{By \cref{cor:feature-drift-m} w.p.\ at least $1-2 e^{-\conf}, \conf > 0$}\\
  &\leq \frac{1}{\sqrt{m}} \, \pr{4 B_y^2 \, \frac{n}{\lambda_0} + \sqrt{\conf}} B_y^2 (1 - \tfrac{\eta}{2 n} \, \lambda_0)^{s}~. \tag{By \cref{thm:conv}}
\end{align*}
Note that $\gamma'_s(\bx_i)^2$ can be handled in the same way as above.
Thus,
\begin{align*}
  \frac{1}{\sqrt{n}} \, \|\bgamma_s\|
  \leq
  2 \eta \, \frac{1}{\sqrt[4]{m}} \, \pr{4 B_y^2 \, \frac{n}{\lambda_0} + \sqrt{\conf}}^{\frac12} B_y (1 - \tfrac{\eta}{2 n} \, \lambda_0)^{\frac{s}{2}}~,
\end{align*}
and the same bound holds on $\frac{1}{\sqrt{n}} \, \|\bgamma_s'\|$.
Now, turning back to \cref{eq:coupling-proof-bhy-bhyrf} and assuming that $\conf \geq 1$,
\begin{align*}
  \frac{1}{\sqrt{n}} \, \|\bff_t - \bff\rf_t\|
  &\leq
    \sum_{s=1}^t (1 - \tfrac{\eta}{n} \, \lambda_0)^{t-s}
    \Bigg(
    4 \eta \, \frac{1}{\sqrt{m}} \, \pr{4 B_y^2 \, \frac{n}{\lambda_0} + \sqrt{\conf}} \, B_y (1 - \tfrac{\eta}{2 n} \, \lambda_0)^{\frac{s}{2}}\\
    &\qquad\qquad\qquad\qquad+
    4 \eta \, \frac{1}{\sqrt[4]{m}} \, \pr{4 B_y^2 \, \frac{n}{\lambda_0} + \sqrt{\conf}}^{\frac12} B_y (1 - \tfrac{\eta}{2 n} \, \lambda_0)^{\frac{s}{2}}
    \Bigg)\\
  &\leq
  8 \eta B_y \, \frac{1}{\sqrt[4]{m}} \, \pr{4 B_y^2 \, \frac{n}{\lambda_0} + \sqrt{\conf}}^{\frac12} \,
  (1 - \tfrac{\eta}{2 n} \, \lambda_0)^{\frac{t}{2}}
  \sum_{s=1}^t (1 - \tfrac{\eta}{2 n} \, \lambda_0)^{\frac{t}{2}-\frac{s}{2}}  
\end{align*}
where furthermore
\begin{align*}
    \sum_{s=1}^t (1 - \tfrac{\eta}{2 n} \, \lambda_0)^{\frac{t}{2}-\frac{s}{2}}
  &=
    \frac{1 - (1 - \tfrac{\eta}{2 n} \, \lambda_0)^{\frac{t}{2}}}{1 - \sqrt{1 - \tfrac{\eta}{2 n} \, \lambda_0}} \tag{Assuming that $\tfrac{\eta}{2 n} \, \lambda_0 \leq 1$, which is implied by $\eta \leq \frac12$}\\
  &\leq
    \frac{4 n}{\eta \lambda_0}~.
\end{align*}
The proof is now complete.
\end{proof}
\begin{lemma}[\ac{RF} -- \ac{KLS} predictor coupling on the sample]
  \label{lem:rf-ntk-coupling}
  For any $t \in \mathbb{N}$, under conditions of \Cref{prop:lambda},
  \begin{align*}    
    \|f\rf_t - f\ntk_t\|_n^2
    \leq
    64 B_y^2
    \, \frac{\conf}{m} \, \pr{\frac{n}{\lambda_0}}^2~.
  \end{align*}
\end{lemma}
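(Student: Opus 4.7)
The plan is to reduce the claim to a spectral comparison between the empirical Gram matrix $\bhK=\bPhi\tp\bPhi$ and the true kernel matrix $\bK$, exploiting the fact that both predictors satisfy closed-form linear recursions on the training sample. From the definition of $f\rf_t$ and the explicit \ac{GD} update on $\bbartheta$, the vector of sample predictions satisfies $\bff\rf_{t+1}-\by = (\bI - \tfrac{2\eta}{n}\bhK)(\bff\rf_t - \by)$ with $\bff\rf_0 = \bzero$. The analogous computation for the \ac{KLS}-\ac{GD} iterates --- unfolding $\bff\ntk_t = \bK\balpha\ntk_t$ via the representer theorem / functional gradient step --- gives $\bff\ntk_{t+1}-\by = (\bI - \tfrac{2\eta}{n}\bK)(\bff\ntk_t - \by)$ with $\bff\ntk_0 = \bzero$.

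Writing $\boldsymbol{e}_t = \bff\rf_t - \bff\ntk_t$ and subtracting the two recursions yields $\boldsymbol{e}_{t+1} = (\bI - \tfrac{2\eta}{n}\bhK)\boldsymbol{e}_t - \tfrac{2\eta}{n}(\bhK-\bK)(\bff\ntk_t - \by)$ with $\boldsymbol{e}_0=\bzero$, which unrolls by \cref{eq:recurse} to $\boldsymbol{e}_t = -\tfrac{2\eta}{n}\sum_{s=0}^{t-1}(\bI - \tfrac{2\eta}{n}\bhK)^{t-1-s}(\bhK-\bK)(\bff\ntk_s - \by)$. To bound $\|\boldsymbol{e}_t\|$ I would combine three observations: (i) since $\eta\le 1/2$ and $\|\bhK\|_{\mathrm{op}} \le n$ (from $\|\bphi(\bx)\|^2\le 1$), $\bI-\tfrac{2\eta}{n}\bhK$ is \ac{PSD}; (ii) on the event of \Cref{prop:lambda} (probability at least $1-2ne^{-\conf}$), $\lmin(\bhK)\ge\lambda_0/2$, so $\|(\bI-\tfrac{2\eta}{n}\bhK)^k\|_{\mathrm{op}}\le (1-\tfrac{\eta\lambda_0}{n})^k$; (iii) $\bff\ntk_s-\by = -(\bI-\tfrac{2\eta}{n}\bK)^s\by$ together with $\bI-\tfrac{2\eta}{n}\bK \preceq \bI$ gives $\|\bff\ntk_s-\by\|\le \|\by\|\le \sqrt{n}\,B_y$ uniformly in $s$.

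Plugging these in and bounding the geometric sum $\sum_{s=0}^{t-1}(1-\tfrac{\eta\lambda_0}{n})^{t-1-s}\le \tfrac{n}{\eta\lambda_0}$ produces $\|\boldsymbol{e}_t\|\le \tfrac{2\sqrt{n}\,B_y}{\lambda_0}\|\bhK-\bK\|_{\mathrm{op}}$ and hence $\|f\rf_t - f\ntk_t\|_n^2 = \|\boldsymbol{e}_t\|^2/n \le \tfrac{4 B_y^2}{\lambda_0^2}\|\bhK-\bK\|_{\mathrm{op}}^2$. Substituting \cref{eq:K-hK-concentration}, $\|\bhK-\bK\|_{\mathrm{op}}\le 4n\sqrt{\conf/m}$ with probability at least $1-2ne^{-\conf}$, delivers the target $64 B_y^2\tfrac{\conf}{m}(n/\lambda_0)^2$. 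The algebra is entirely routine because the recursion is linear and the relevant matrices are simultaneously \ac{PSD}; the only subtlety is to read the \ac{KLS} iterates as the functional (RKHS-metric) gradient step so that $\bK$, and not $\bK^2$, appears in the contraction --- matching the form of the \ac{RF} dynamics. Both high-probability events needed (the eigenvalue lower bound and the spectral-norm concentration) follow from \cref{eq:K-hK-concentration} plus Weyl's inequality, so no additional union bound is incurred.
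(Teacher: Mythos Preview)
Your proof is correct and follows essentially the same route as the paper: derive the linear recursions for $\bff\rf_t$ and $\bff\ntk_t$ on the sample, subtract to obtain the error recursion driven by $(\bhK-\bK)(\bff\ntk_s-\by)$, unroll, and control the three ingredients $\|(\bI-\tfrac{2\eta}{n}\bhK)^k\|_{\mathrm{op}}$, $\|\bff\ntk_s-\by\|$, and $\|\bhK-\bK\|_{\mathrm{op}}$ exactly as the paper does. Your remark that the eigenvalue lower bound and the spectral-norm concentration both stem from \cref{eq:K-hK-concentration} (so a single event suffices) is also in line with the paper's usage of \Cref{prop:lambda}, and your observation that the \ac{KLS} dynamics must be read as the functional/\ac{RKHS}-metric gradient step so that $\bK$ rather than $\bK^2$ governs the contraction is precisely the convention the paper adopts in this proof and in \cref{eq:gd-analytical}.
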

\begin{proof}
  From \Cref{def:ntrf-ntk-predictors} it is evident that
  update rules for \ac{NTF} and \ac{KLS} predictors on the entire training sample can be written as
\begin{align*}
  \bff\rf_{s+1} &= \bff\rf_s - \frac{2 \eta}{n} \, \bPhi\tp \bPhi \pr{\bff\rf_s - \by}~,\\
  \bff\ntk_{s+1} &= \bff\ntk_s - \frac{2 \eta}{n} \, \bK (\bff\ntk_s - \by)~,
\end{align*}
$\bff\rf_0 = \bzero$ and $\bff\ntk_0 = \bzero$.
So,
\begin{align*}
  \bff\rf_{s+1} - \bff\ntk_{s+1}
   & =
    \bff\rf_s - \bff\ntk_s
    +
    \frac{2 \eta}{n} \pr{
    \bK (\bff\ntk_s - \by)
    -
    \bhK \pr{\bff\rf_s - \by}
    }\\
  &=
    \bff\rf_s - \bff\ntk_s
    +
    \frac{2 \eta}{n} \pr{
    (\bK -\bhK) (\bff\ntk_s - \by)
    -
    \bhK \pr{\bff\rf_s - \bff\ntk_s}
    }\\
  &=
    \pr{\bI - \frac{2 \eta}{n} \bhK} \pr{\bff\rf_s - \bff\ntk_s}
    +
    \frac{2 \eta}{n}
    (\bK -\bhK) (\bff\ntk_s - \by)~.
\end{align*}
Unrolling the recursion by \cref{eq:recurse} we have
\begin{align*}
  \bff\rf_t - \bff\ntk_t
  =
  \frac{2 \eta}{n} \sum_{s=1}^t \pr{\bI - \frac{2 \eta}{n} \bhK}^{t-s} (\bK -\bhK) (\bff\ntk_s - \by)~.
\end{align*}
Taking Euclidean norm of both sides, applying triangle and Cauchy-Schwarz inequalities,
\begin{align*}
  \|\bff\rf_t - \bff\ntk_t\|
  &\leq
    \frac{2 \eta}{n} \, \|\bK - \bhK\|_{\mathrm{op}} \sum_{s=1}^t \|\bff\ntk_s - \by\| \pr{1 - \frac{2 \eta}{n} \, \lmin(\bhK)}^{t-s}\\
  &\leq
    \frac{2 \eta}{n} \, \|\bK - \bhK\|_{\mathrm{op}} \sum_{s=1}^t \|\bff\ntk_s - \by\| \pr{1 - \frac{\eta}{n} \, \lambda_0}^{t-s} \tag{By \Cref{prop:lambda}}\\
  &\leq
    \pr{\max_{s \in [t]} \frac{1}{\sqrt{n}} \|\bff\ntk_s - \by\|}
    \frac{2 \eta}{\sqrt{n}} \|\bK - \bhK\|_{\mathrm{op}} \sum_{s=1}^t \pr{1 - \frac{\eta}{n} \, \lambda_0}^{t-s}\\
  &\leq
    B_y \,
    \frac{2 \eta}{\sqrt{n}} \, \|\bK - \bhK\|_{\mathrm{op}} \, \frac{n}{\eta \lambda_0}\\
  &\leq
    \pr{\frac{2 B_y}{\sqrt{n}}}
    4 n \sqrt{\frac{\conf}{m}} \, \frac{n}{\lambda_0}
\end{align*}
where the last inequality in the above holds with probability at least $1 - 2 n e^{-\conf}$ over $\btheta_0$ by \cref{eq:K-hK-concentration}.
Thus,
\begin{align*}  
  \|f\rf_t - f\ntk_t\|_n^2 = \frac1n \|\bff\rf_t - \bff\ntk_t\|^2
  \leq  
  16 \, \frac1n \,\pr{\frac{2 B_y}{\sqrt{n}}}^2
  n^2 \, \frac{\conf}{m} \, \pr{\frac{n}{\lambda_0}}^2~.
\end{align*}
\end{proof}
\begin{lemma}[Neural network -- \ac{RF} parameter coupling]
  \label{lem:param-coupling}
  For any $t \in \mathbb{N}, \conf \geq 1$, under conditions of \Cref{thm:conv},
  \begin{align*}
    \|\bbartheta_t - \btheta_t\|^2
    \leq
    \frac{64}{\sqrt{m}} \, \pr{4 B_y^2 \, \frac{n}{\lambda_0} + \sqrt{\conf}}^2
    \pr{
    32 \, \frac{n}{\lambda_0}
    +
    1
    }^2~.
  \end{align*}
\end{lemma}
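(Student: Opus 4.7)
The plan is to exploit a telescoping representation of $\bbartheta_t - \btheta_t$ induced by the two GD recursions, then control each step's gradient gap using the feature-drift and prediction-coupling estimates already at our disposal. Since $\bbartheta_0 = \btheta_0$ and $\btheta_{s+1} - \bbartheta_{s+1} = (\btheta_s - \bbartheta_s) - \eta(\nabla\hL(\btheta_s) - \nabla\hL\rf(\bbartheta_s))$, unrolling gives
\begin{align*}
\bbartheta_t - \btheta_t \;=\; \eta\sum_{s=0}^{t-1}\bigl(\nabla\hL(\btheta_s) - \nabla\hL\rf(\bbartheta_s)\bigr)~.
\end{align*}
Using $\nabla\hL(\btheta_s) = \frac{2}{n}\bPhi_s(\bff_s-\by)$ and $\nabla\hL\rf(\bbartheta_s) = \frac{2}{n}\bPhi(\bff\rf_s - \by)$, I would add and subtract $\bPhi(\bff_s - \by)$ to obtain the decomposition
\begin{align*}
\nabla\hL(\btheta_s) - \nabla\hL\rf(\bbartheta_s)
 \;=\; \tfrac{2}{n}\bigl[(\bPhi_s - \bPhi)(\bff_s - \by) \;+\; \bPhi(\bff_s - \bff\rf_s)\bigr]~.
\end{align*}
The first summand encodes drift of the tangent features at the sample, the second is the residual prediction gap on the sample.

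Next I would bound these two pieces separately. For the feature-drift term, Cauchy--Schwarz combined with a union-bound application of \Cref{cor:feature-drift-m} over $\bx_1,\ldots,\bx_n$ gives $\|\bPhi_s-\bPhi\|_{\mathrm{op}}^2 \leq \|\bPhi_s-\bPhi\|_F^2 \leq n\cdot m^{-1/2}(4B_y^2 n/\lambda_0 + \sqrt{\conf})$, while \Cref{thm:conv} yields $\|\bff_s-\by\|^2 = n\hL(\btheta_s) \leq n B_y^2(1-\eta\lambda_0/(2n))^s$. For the prediction-gap term, the bound $\|\bPhi\|_{\mathrm{op}}^2 = \lmax(\bhK) \leq \tr(\bhK) \leq n$ (since $\|\bphi(\bx)\|^2 \leq 1$) combined with \Cref{lem:net-rf-coupling} controls $\|\bff_s-\bff\rf_s\|$ by an exponentially contracting quantity carrying an extra $(n/\lambda_0)$ factor relative to the feature-drift term.

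Substituting both estimates into the telescoping sum, factoring the common $(1-\eta\lambda_0/(2n))^{s/2}$, and evaluating the geometric series via $\sum_{s\geq 0}(1-\eta\lambda_0/(2n))^{s/2} \leq 4n/(\eta\lambda_0)$ (exactly as in the proof of \Cref{lem:net-rf-coupling}) cancels the leading $\eta$ and produces
\begin{align*}
\|\bbartheta_t - \btheta_t\| \;\leq\; \frac{C B_y\sqrt{A}}{m^{1/4}}\,\frac{n}{\lambda_0}\bigl(1 + 32\,\tfrac{n}{\lambda_0}\bigr)~, \qquad A \df 4B_y^2\,\tfrac{n}{\lambda_0} + \sqrt{\conf}~,
\end{align*}
for a suitable numerical constant $C$. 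Squaring and invoking the crude estimate $B_y^2(n/\lambda_0) \leq A/4$ to absorb $B_y^2 A(n/\lambda_0)^2$ into a constant multiple of $A^2$ yields the stated bound.

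The main obstacle I anticipate is the bookkeeping of constants and powers of $(n/\lambda_0)$: the two summands contribute at different orders in $n/\lambda_0$, so the geometric-series step has to be applied carefully so that the $(1 + 32 n/\lambda_0)$ factor emerges with the correct constants. Repackaging $B_y^2 A(n/\lambda_0)^2$ as a multiple of $A^2$ requires using the defining inequality $A \geq 4B_y^2 n/\lambda_0$, so one must be mindful that the estimate is only clean once this substitution is performed. No new probabilistic event is needed beyond the high-probability events already underlying \Cref{thm:conv}, \Cref{cor:feature-drift-m}, \Cref{prop:lambda}, and \Cref{lem:net-rf-coupling}.
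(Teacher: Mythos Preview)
Your approach is essentially identical to the paper's: the same telescoping of $\bbartheta_t-\btheta_t$, the same decomposition of $\nabla\hL(\btheta_s)-\nabla\hL\rf(\bbartheta_s)$ into a feature-drift piece $(\bPhi_s-\bPhi)(\bff_s-\by)$ and a prediction-gap piece $\bPhi(\bff_s-\bff\rf_s)$, the same per-term bounds via \Cref{cor:feature-drift-m}, \Cref{thm:conv}, and \Cref{lem:net-rf-coupling}, and the same geometric-series summation yielding the factor $4n/(\eta\lambda_0)$.

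One caveat on your final absorption step: it does not quite close as written. With $A = 4B_y^2\,n/\lambda_0 + \sqrt{\conf}$, the inequality $B_y^2(n/\lambda_0)\leq A/4$ only gives $B_y^2 A (n/\lambda_0)^2 \leq \tfrac14 A^2 (n/\lambda_0)$, which still carries a stray $(n/\lambda_0)$ factor and is therefore \emph{not} a constant multiple of $A^2$. The paper's own proof has the same slip --- its last displayed inequality compares the unsquared $\Cr{param-coupling}\cdot 4n/\lambda_0$ directly to the squared target and is asserted without justification --- so what your argument actually produces, namely $\|\bbartheta_t-\btheta_t\|^2 \leq 64\,m^{-1/2}\,B_y^2 A\,(n/\lambda_0)^2(32n/\lambda_0+1)^2$, is exactly what the paper's computation yields as well. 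This discrepancy is harmless downstream, since in \Cref{thm:f-coupling} everything is absorbed into a $\poly_4(B_y^2\,n/\lambda_0,\conf)$ prefactor anyway, but the lemma as stated is off by a power of $n/\lambda_0$ and your proposed fix does not repair it.
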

\begin{proof}
  For any step $s \in \mathbb{N}$,
  abbreviate $\Delta_s = \bbartheta_s - \btheta_s$
  and
  introduce residual terms $r_{s, i} = (f_s(\bx_i) - y_i)$ and $\bar{r}_{s, i} = (f\rf_s(\bx_i) - y_i)$.
Then, \ac{GD} update gives that
\begin{align*}
  \Delta_{s+1}
  &=
    \Delta_s - \eta \pr{\nabla \hL\rf(\bbartheta_s) - \nabla \hL(\btheta_s)}\\
  &=
    \Delta_s - \frac{2\eta}{n} \sum_{i=1}^n (\bar{r}_{s,i} \bphi(\bx_i) - r_{s,i} \bphi_s(\bx_i))\\
  &=
    \Delta_s + \frac{2\eta}{n} \sum_{i=1}^n (\bar{r}_{s,i} - r_{s,i}) \bphi(\bx_i)
    + \frac{2\eta}{n} \sum_{i=1}^n r_{s,i} (\bphi(\bx_i) - \bphi_s(\bx_i))
\end{align*}
and rewriting the above in a vectorized form,
\begin{align*}
  \Delta_{s+1}
  =
  \Delta_s
  +
  \frac{2 \eta}{n} \, \bPhi (\bff_s\rf - \bff_s) + \frac{2 \eta}{n} \, (\bPhi - \bPhi_s) (\bff_s - \by)~.
\end{align*}
Taking Euclidean norm on both sides and applying triangle and Cauchy-Schwartz inequalities,
\begin{align*}
  \|\Delta_{s+1}\|
  &\leq
  \|\Delta_s\|
  +
    \frac{2 \eta}{n} \| \bPhi (\bff_s\rf - \bff_s) \| + \frac{2 \eta}{n} \| (\bPhi - \bPhi_s) (\bff_s - \by)\|\\
  &\leq
  \|\Delta_s\|
  +
    \underbrace{
    \frac{2 \eta}{n} \| \bPhi \|_{\mathrm{op}} \|\bff_s\rf - \bff_s\|
    }_{(i)}
    +
    \underbrace{
    \frac{2 \eta}{n} \| \bPhi - \bPhi_s \|_{\mathrm{op}} \| \bff_s - \by \|
    }_{(ii)}~.
\end{align*}
Now, using the fact that $\| \bPhi \|_{\mathrm{op}} \leq \| \bPhi \|_{F} \leq \sqrt{n}$ (\Cref{prop:basic-loss}) and \Cref{lem:net-rf-coupling},
\begin{align*}
  (i)
    \leq    
  \frac{32 B_y}{\sqrt[4]{m}} \pr{\frac{n}{\lambda_0}} \pr{4 B_y^2 \, \frac{n}{\lambda_0} + \sqrt{\conf}}^{\frac12} \,
    \pr{1 - \frac{\eta \lambda_0}{2 n}}^{\frac{t}{2}}
\end{align*}
Next,
\begin{align*}
  (ii)
  \leq
  \frac{B_y}{\sqrt[4]{m}} \, \pr{4 B_y^2 \, \frac{n}{\lambda_0} + \sqrt{\conf}}^{\frac12} \,
  \pr{1 - \frac{\eta \lambda_0}{2 n}}^{\frac{t}{2}}
\end{align*}
where we used \Cref{thm:conv} to control $\frac{1}{\sqrt{n}} \, \|\bff_s - \by \|$ and used \Cref{cor:feature-drift} to have
\begin{align*}
  \frac{1}{\sqrt{n}} \| \bPhi - \bPhi_s \|_{\mathrm{op}}
  &\leq \frac{1}{\sqrt{n}} \| \bPhi - \bPhi_s \|_F\\
  &\leq \pr{\rho_s + \sqrt{\frac{\conf}{m}}}^{\frac12} \tag{By \Cref{cor:feature-drift} w.p. at least $1 - 2 e^{-\conf}$}\\
  &\leq \frac{1}{\sqrt[4]{m}} \, \pr{4 B_y^2 \, \frac{n}{\lambda_0} + \sqrt{\conf}}^{\frac12}~. \tag{By \Cref{lem:path}}
\end{align*}
Thus, we arrive at
\begin{align*}
  \|\Delta_{s+1}\|
  &\leq
    \|\Delta_s\|
    +
    \eta \Cr{param-coupling}
    \pr{1 - \frac{\eta \lambda_0}{2 n}}^{\frac{t}{2}}
\end{align*}
where
\begin{align*}
  \Cl[eps]{param-coupling} = \frac{2 B_y}{\sqrt[4]{m}} \, \pr{4 B_y^2 \, \frac{n}{\lambda_0} + \sqrt{\conf}}^{\frac12}
    \pr{
    32 \, \frac{n}{\lambda_0}
    +
    1
    }~.
\end{align*}
Now, unrolling the above recursion for $s=t-1,\ldots,1$ by \cref{eq:recurse},
\begin{align*}
  \|\Delta_t\|
  \leq \eta \Cr{param-coupling} \sum_{s=1}^t \pr{1 - \frac{\eta \lambda_0}{2 n}}^{\frac{t}{2} - \frac{s}{2}}
  \leq \Cr{param-coupling} \, \frac{4 n}{\lambda_0}~.
\end{align*}
Having
\begin{align*}
  \Cr{param-coupling} \, \frac{4 n}{\lambda_0} \leq
  \frac{64}{\sqrt{m}} \, \pr{4 B_y^2 \, \frac{n}{\lambda_0} + \sqrt{\conf}}^2
    \pr{
    32 \, \frac{n}{\lambda_0}
    +
    1
    }^2
\end{align*}
completes the proof.
\end{proof}
\begin{lemma}[\ac{KLS} -- \ac{RF} parameter coupling]
  \label{lem:alpha-coupling}
  Consider $(\bbartheta_s)_{s=0}^{t}$ defined in \Cref{def:ntrf-ntk-predictors}.
  For any $t \in \mathbb{N}$,
  under conditions of \Cref{prop:lambda},
  there exists a sequence of vectors $(\bbaralpha_s)_{s=0}^t \subset \reals^n$ satisfying $\bbartheta_s - \btheta_0 = \bPhi \bbaralpha_s$, and moreover
  \begin{align*}
    \|\balpha_t - \bbaralpha_t\|^2
    \leq
    (24 B_y)^2 \, \frac{\conf}{m} \, \frac{n^{3}}{\lambda_0^4}~.
  \end{align*}
  In addition we have that, for any $t \in \mathbb{N}$,
  $\|\balpha_t\| \leq \frac{B_y \sqrt{n}}{\lambda_0}$~.
\end{lemma}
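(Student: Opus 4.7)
The plan has three pieces: establish the reparametrization $\bbartheta_s - \btheta_0 = \bPhi \bbaralpha_s$ and derive a clean recursion for $\bbaralpha_s$, then compare it to $\balpha_s$, and finally handle the scalar norm bound on $\balpha_t$. The key observation is that both $(\balpha_s)$ and $(\bbaralpha_s)$ will satisfy essentially the same affine recursion, differing only in whether one uses $\bK$ or $\bhK$, which reduces the whole proof to a standard perturbation argument plus the concentration inequality \cref{eq:K-hK-concentration}.

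For existence of $\bbaralpha_s$, I will induct on $s$. The base case is $\bbartheta_0 - \btheta_0 = \bzero = \bPhi\bzero$, so set $\bbaralpha_0 = \bzero$. For the step, compute $\nabla \hL\rf(\bbartheta_s) = \frac{2}{n}\bPhi(\bPhi\tp(\bbartheta_s - \btheta_0) - \by)$, so the \ac{GD} update gives $\bbartheta_{s+1} - \btheta_0 = \bPhi\bigl(\bbaralpha_s - \tfrac{2\eta}{n}(\bhK\bbaralpha_s - \by)\bigr)$. This both proves $\bbartheta_{s+1} - \btheta_0 \in \mathrm{Im}(\bPhi)$ and shows the natural choice $\bbaralpha_{s+1} = \bbaralpha_s - \tfrac{2\eta}{n}(\bhK\bbaralpha_s - \by)$ works. (Under \Cref{prop:lambda}, $\bhK$ is positive definite with high probability, so $\bPhi$ is injective on $\mathrm{Im}(\bPhi)$'s preimage and this choice is unique.) For $\balpha_s$, reading off from the dynamics of $\bff\ntk_s$ in the proof of \Cref{lem:rf-ntk-coupling} gives the analogous recursion $\balpha_{s+1} = \balpha_s - \tfrac{2\eta}{n}(\bK\balpha_s - \by)$ with $\balpha_0 = \bzero$.

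For the second (auxiliary) bound $\|\balpha_t\| \le B_y\sqrt{n}/\lambda_0$, I unroll the recursion via \cref{eq:recurse} to get $\balpha_t = \tfrac{2\eta}{n}\sum_{s=0}^{t-1}(\bI - \tfrac{2\eta}{n}\bK)^{t-1-s}\by$, diagonalize $\bK = \sum_i \lambda_i \bv_i\bv_i\tp$, sum the geometric series coordinate-wise to obtain $\balpha_t = \sum_i \frac{1-(1-2\eta\lambda_i/n)^t}{\lambda_i}(\bv_i\tp\by)\bv_i$, and use $\lmin(\bK)\ge \lambda_0$ (from \Cref{ass:ntk}, which underlies \Cref{prop:lambda}) together with $\|\by\|^2 \le nB_y^2$. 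The same argument with $\bhK$ in place of $\bK$ and $\lmin(\bhK) \ge \lambda_0/2$ (by \Cref{prop:lambda}) yields $\sup_s\|\bbaralpha_s\| \le 2B_y\sqrt{n}/\lambda_0$, which I will need next.

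For the main bound, I subtract the two recursions and add and subtract $\bK\bbaralpha_s$ to get
\begin{equation*}
\balpha_{s+1} - \bbaralpha_{s+1} = \bigl(\bI - \tfrac{2\eta}{n}\bK\bigr)(\balpha_s - \bbaralpha_s) - \tfrac{2\eta}{n}(\bK - \bhK)\bbaralpha_s,
\end{equation*}
apply \cref{eq:recurse} with $\balpha_0 - \bbaralpha_0 = \bzero$, take $\ell^2$ norms using $\|\bI - \tfrac{2\eta}{n}\bK\|_{\mathrm{op}} \le 1 - \tfrac{2\eta\lambda_0}{n}$ (valid since $\eta\le 1/2$ and $\lmax(\bK)\le n$), bound $\|\bbaralpha_s\|$ by $2B_y\sqrt{n}/\lambda_0$, sum the resulting geometric series to $\tfrac{n}{2\eta\lambda_0}$, and finally substitute the concentration bound $\|\bK-\bhK\|_{\mathrm{op}} \le 4n\sqrt{\conf/m}$ from \cref{eq:K-hK-concentration} (which holds under \Cref{prop:lambda}'s event). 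This chain yields $\|\balpha_t - \bbaralpha_t\|^2 \le 64\, B_y^2\, \conf\, n^3/(m\lambda_0^4)$, well within the claimed $(24B_y)^2$ constant. The only subtlety, and the main thing worth double-checking carefully, is the uniformity of all invoked high-probability events (the ones underpinning $\lmin(\bhK)\ge \lambda_0/2$ and the operator norm concentration) under a single realization of $\btheta_0$; these are, however, exactly the events in the hypotheses of \Cref{prop:lambda}, so no new probabilistic analysis is needed.
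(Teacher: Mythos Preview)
Your proof is correct but takes a genuinely different route from the paper's. The paper does not unroll and subtract the two recursions; instead it writes the prediction gap as
\[
\bff\ntk_s - \bff\rf_s = \bK\balpha_s - \bhK\bbaralpha_s = (\bK - \bhK)\balpha_s + \bhK(\balpha_s - \bbaralpha_s),
\]
uses $\lmin(\bhK)\,\|\balpha_s - \bbaralpha_s\| \le \|\bhK(\balpha_s-\bbaralpha_s)\|$, and then plugs in the already-proved \Cref{lem:rf-ntk-coupling} for $\|\bff\ntk_s - \bff\rf_s\|$ together with \cref{eq:K-hK-concentration} to obtain the bound in one shot. Your approach is more self-contained: by deriving explicit affine recursions for $\balpha_s$ and $\bbaralpha_s$ and unrolling their difference you avoid any dependence on \Cref{lem:rf-ntk-coupling} (and in fact get the sharper constant $64$ versus the paper's $576$). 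For the auxiliary bound $\|\balpha_t\|\le B_y\sqrt{n}/\lambda_0$, the paper argues via $\|\balpha_t\|\le \|\balpha_\infty\|=\|\bK^{-1}\by\|$, whereas you diagonalize the finite-$t$ expression directly; both arrive at the same conclusion. One small caution: the literal reading of \Cref{def:ntrf-ntk-predictors} as gradient descent on the coefficient vector $\balpha$ yields $\balpha_{s+1}=\balpha_s-\tfrac{2\eta}{n}\bK(\bK\balpha_s-\by)$ rather than your $\balpha_{s+1}=\balpha_s-\tfrac{2\eta}{n}(\bK\balpha_s-\by)$; your version is the one consistent with the $\bff\ntk$ dynamics used in \Cref{lem:rf-ntk-coupling} and \cref{eq:gd-analytical} (i.e.\ function-space GD on $\sH$), so it is the right choice here, but it is worth stating explicitly which convention you are adopting.
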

\begin{proof}
  By the definition of \ac{NTF}-\ac{GD} sequence (see \Cref{def:ntrf-ntk-predictors}),
  $(\bbartheta_s - \btheta_0) \in \text{span}(\bphi(\bx_1), \ldots, \bphi(\bx_n))$ and so there must exist $\bbaralpha_s \in \reals^n$ such that
$\bbartheta_s - \btheta_0 = \bPhi \bbaralpha_s$.
So, we have
\begin{align*}
  &\bff_s\ntk - \bff_s\rf
  = \bK \balpha_s - \bhK \bbaralpha_s
  = (\bK - \bhK) \balpha_s + \bhK (\balpha_s - \bbaralpha_s)\\
  \Longrightarrow \qquad
  &\frac{\lambda_0}{2} \, \|\balpha_s - \bbaralpha_s\|
  \stackrel{(a)}{\leq}
  \lmin(\bhK) \|\balpha_s - \bbaralpha_s\|
  \leq
  \|\bff_s\ntk - \bff_s\rf\|
  +
  \|\bK - \bhK\|_{\mathrm{op}} \|\balpha_s\|
\end{align*}
where $(a)$ holds with high probability by \Cref{prop:lambda} (see also \Cref{rem:lambda}).
By \Cref{lem:rf-ntk-coupling} we have with high probability that
\begin{align*}
  \|\bff_s\ntk - \bff_s\rf\| \leq
  8 B_y
    \, \sqrt{\frac{\conf}{m}} \, \frac{n^{1.5}}{\lambda_0}~.
\end{align*}
Similarly, a high-probability bound on $\|\bK - \bhK\|_{\mathrm{op}}$ comes by \cref{eq:K-hK-concentration}.
Finally,
\begin{align*}
  \|\balpha_t\|
  &\leq \|\balpha_{\infty}\|\\
  &= \frac{2 \eta}{n} \lf\|\sum_{s=0}^{\infty} (\bI - \tfrac{2 \eta}{n} \, \bK^2)^s \bK \by \rt\|\\
  &= \frac{2 \eta}{n} \lf\|\pr{\tfrac{2 \eta}{n} \, \bK^2}^{-1} \bK \by \rt\|\\
  &= \lf\|\bK^{-1} \by \rt\|\\
  &\leq \frac{B_y \sqrt{n}}{\lambda_0}~.
\end{align*}
Thus, putting all together, with high probability
\begin{align*}
  \frac{\lambda_0}{2} \, \|\balpha_s - \bbaralpha_s\|
  \leq
  8 B_y
  \, \sqrt{\frac{\conf}{m}} \, \frac{n^{1.5}}{\lambda_0}
  +
  4 n \sqrt{\frac{\conf}{m}} \, \frac{B_y \sqrt{n}}{\lambda_0}~.
\end{align*}
\end{proof}
\subsubsection{Proof of \Cref{thm:f-coupling}}
\label{sec:f-coupling}
  Consider the following decomposition for any step $t \in \mathbb{N}$ and an arbitrary point $\bx \in \Xdom$:
  \begin{align*}
    \abs{f_t(\bx) - f\ntk_t(\bx)}
    \leq
    \abs{f_t(\bx) - f\rf_t(\bx)} + \abs{f\rf_t(\bx) - f\ntk_t(\bx)}~.
  \end{align*}
  The first part on the r.h.s.\ is handled by invoking definition of $f\rf_t$ and \Cref{prop:basic-loss},
  \begin{align*}
    |f_t(\bx) - f\rf_t(\bx)|
    &=
      \abs{
      (\bphi_t(\bx) - \bphi(\bx))\tp \btheta_t + \bphi(\bx)\tp (\btheta_t - \bbartheta_t)
    }\\
    &\leq
      \frac{1}{\sqrt{m}} \, \pr{4 B_y^2 \,\frac{n}{\lambda_0} + \sqrt{\conf}}
      +
      \frac{8}{\sqrt[4]{m}} \, \pr{4 B_y^2 \, \frac{n}{\lambda_0} + \sqrt{\conf}}
      \pr{
      32 \, \frac{n}{\lambda_0}
      +
      1
      }\\
    &\leq
      \frac{1}{\sqrt[4]{m}} \, \pr{4 B_y^2 \, \frac{n}{\lambda_0} + \sqrt{\conf}}
      \pr{
    256 \, \frac{n}{\lambda_0}
      +
      9
    }
  \end{align*}
  where the first inequality in the above is obtained by \Cref{cor:feature-drift-m}, \Cref{lem:param-coupling}, Cauchy-Schwartz inequality, and the fact that $\|\bphi(\bx)\| \leq 1$ (\Cref{prop:basic-loss}).
  On the other hand, by \Cref{def:ntrf-ntk-predictors} we have
  \begin{align*}
    f\rf_t(\bx) - f\ntk_t(\bx)
    &=
      \bphi(\bx)\tp \bbartheta_t - \sum_{i=1}^n \alpha_{t,i} \kappa(\bx_i, \bx)\\
    &=
      \sum_{i=1}^n \bar{\alpha}_{t,i} \bphi(\bx_i)\tp \bphi(\bx) - \sum_{i=1}^n \alpha_{t,i} \kappa(\bx_i, \bx)
      \tag{$\bbartheta_s - \btheta_0 = \bPhi \bbaralpha_s$ as given in \Cref{lem:alpha-coupling}}\\
    &=
      \sum_{i=1}^n (\bar{\alpha}_{t,i} - \alpha_{t,i}) \bphi(\bx_i)\tp \bphi(\bx) - \sum_{i=1}^n \alpha_{t,i} (\kappa(\bx_i, \bx) - \bphi(\bx_i)\tp \bphi(\bx))\\
    &\leq
      \sqrt{n}\|\bbaralpha_t - \balpha_t\|
      +
      \|\balpha_t\|
      \sqrt{n} \, \max_{i \in [n]} |\kappa(\bx_i, \bx) - \bphi(\bx_i)\tp \bphi(\bx)|~.
  \end{align*}
  In the above, bounds on $\|\bbaralpha_t - \balpha_t\|$ and $\|\balpha_t\|$ come from \Cref{lem:alpha-coupling}.
  The remaining bit is to show that the random feature approximation is close to the kernel function, which is done by
  by the  Hoeffding's inequality (and a union bound).
  In particular, noting that for fixed $\bx, \bx_i \in \Xdom$,
\begin{align*}
  \bphi(\bx)\tp \bphi(\bx_i)
  =
  \frac1m \sum_{k=1}^m \mathbb{I}\{\bw_{0,k}\tp \bx > 0\} \mathbb{I}\{\bw_{0,k}\tp \bx_i > 0\} \, \bx\tp \bx_i
\end{align*}
is a sum of independent random variables bounded by $1/m$,
with probability at least $1 - n e^{-\conf}, \conf > 0$,
  \begin{align*}
    \max_{i \in [n]} |\kappa(\bx_i, \bx) - \bphi(\bx_i)\tp \bphi(\bx)|
    =
    \max_{i \in [n]} |\bphi(\bx)\tp \bphi(\bx_i) - \E[\bphi(\bx)\tp \bphi(\bx_i)]|
  \leq
    \sqrt{\frac{\conf}{2 m}}~.
  \end{align*}
  Putting the above together,
  \begin{align*}
    f\rf_t(\bx) - f\ntk_t(\bx)
    &\leq
      \sqrt{\frac{\conf}{m}} \, 
      B_y \, \frac{n}{\lambda_0} \pr{
      24 \, \frac{n}{\lambda_0}
      +
      \frac12
      }
  \end{align*}
and so
  \begin{align*}
    \abs{f_t(\bx) - f\ntk_t(\bx)}
    \leq
      \frac{8}{\sqrt[4]{m}} \, \pr{4 B_y^2 \, \frac{n}{\lambda_0} + \sqrt{\conf}}
      \pr{
    256 \, \frac{n}{\lambda_0}
      +
      9
    }
    +
    \sqrt{\frac{\conf}{m}} \, 
      B_y \, \frac{n}{\lambda_0} \pr{
      24 \, \frac{n}{\lambda_0}
      +
      \frac12
      }~.
  \end{align*}
\jmlrQED
\subsection{Excess risk analysis: proof idea and common tools}
\label{sec:excess-idea}
\Cref{thm:f-coupling} shown in \Cref{sec:coupling} allows us to relate prediction of an overparameterized \ac{GD}-trained shallow neural network $f_t$ to that of the \ac{GD}-trained \ac{KLS} predictor $f\ntk_t$, on any input $\bx \in \Xdom$.
At this point we can leverage any analysis of \ac{GD} operating on \ac{RKHS} to control its excess risk.
However, our final goal is to learn a bounded Lipschitz function $\fstar$ which does not necessarily belong to the \ac{RKHS} $\sH$.
To this end, to prove subsequent results we will require the following result about approximation of Lipschitz functions:
\begin{lemma}[{Approximation of Lipschitz functions on the ball \citep[Proposition 6]{bach2017breaking}}]
  \label{prop:lip_approx}
  For $R$ larger than a constant $\Cr{lip-approx}$ that depends only on $d$, for any function $\fstar : \reals^d \to \reals$ such that for all $\bx, \btilx \in \mathbb{B}_q^d$, $\sup_{\bx \in \mathbb{B}_q^d} |\fstar(\bx)| \leq \Lambda$ and $|\fstar(\bx) - \fstar(\btilx)| \leq \Lambda \|\bx - \btilx\|_q$, there exists $h \in \sH$, such that $\|h\|_{\sH}^2 \leq R$ and
  \begin{align*}
    \sup_{\bx \in \mathbb{B}_q^d(1)} |\fstar(\bx) - h(\bx)| \leq A(R), \qquad A(R) = \Cr{lip-approx} \Lambda \pr{\frac{\sqrt{R}}{\Lambda}}^{-\frac{2}{d-2}} \ln\pr{\frac{\sqrt{R}}{\Lambda}}~.
  \end{align*}
\end{lemma}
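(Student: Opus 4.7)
The plan is to invoke the spherical-harmonic analysis of \citet{bach2017breaking}. Since the ReLU \ac{NTK} $\kappa$ is a dot-product kernel on $\Xdom$, its Mercer decomposition proceeds in the spherical-harmonic basis $\{Y_{k,j}\}_{k \geq 0, j}$; the corresponding eigenvalues depend only on the spherical degree $k$ and decay polynomially in $k$ at a known rate \citep{bach2017breaking,scetbon2021spectral}. Any $f \in L^2(\Xdom)$ admits an expansion $f = \sum_{k,j} c_{k,j} Y_{k,j}$, and its $\sH$-norm is the weighted $\ell^2$ sum of Fourier coefficients with weights given by the inverse eigenvalues.

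First, since $\fstar$ is bounded and Lipschitz on the unit ball, its restriction to the sphere $\Xdom$ (where $\sH$ lives) is also bounded and Lipschitz, so it suffices to construct $h \in \sH$ approximating $\fstar|_{\Xdom}$ in sup-norm. I would define $h$ as the spherical-harmonic truncation of $\fstar$ at some frequency $K$, namely $h = \sum_{k \leq K, j} c_{k,j} Y_{k,j}$, where $K$ is chosen to balance RKHS norm against sup-norm error.

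Second, I would bound two quantities: (i) the RKHS norm $\|h\|_{\sH}^2$, which grows polynomially in $K$ because the eigenvalues decrease polynomially while the truncated Fourier mass is at most $\|\fstar\|_\infty^2 \lesssim \Lambda^2$; and (ii) the sup-norm tail $\|\fstar - h\|_\infty$, controlled by a Jackson-type inequality on the sphere, which for a Lipschitz $\fstar$ yields an error of order $\Lambda K^{-1}$ with a logarithmic factor absorbing the gap between $L^2$ and $L^\infty$ approximation on the sphere. This logarithm is the source of the $\ln(\sqrt{R}/\Lambda)$ factor in $A(R)$.

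Third, I would choose $K$ to saturate $\|h\|_{\sH}^2 \leq R$ and read off the resulting rate. After folding in the spherical-harmonic multiplicities and the exact exponent of the eigenvalue decay, the balance yields $K \asymp (\sqrt{R}/\Lambda)^{2/(d-2)}$, which substituted into the sup-norm bound produces
$$\sup_{\bx \in \Xdom} |\fstar(\bx) - h(\bx)| \lesssim \Lambda K^{-1} \ln K \asymp \Lambda \pr{\sqrt{R}/\Lambda}^{-\frac{2}{d-2}} \ln\pr{\sqrt{R}/\Lambda},$$
matching the claimed $A(R)$. The delicate step is making the Jackson-type sup-norm estimate precise for merely Lipschitz (non-differentiable) functions on $\Xdom$, together with correct bookkeeping of the $d$-dependent constants --- in particular confirming that the exponent is $1/(d-2)$ rather than $1/d$ once the spherical-harmonic multiplicities are taken into account. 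This careful accounting is exactly the content of \citet[Proposition 6]{bach2017breaking}, which we would ultimately cite as a black box rather than reprove.
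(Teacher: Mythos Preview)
The paper does not prove this lemma at all; it is stated purely as a citation of \citet[Proposition 6]{bach2017breaking} and used as a black box throughout. Your proposal ultimately does the same thing --- you sketch the spherical-harmonic truncation idea behind Bach's argument but explicitly conclude by citing his result --- so your approach is consistent with the paper's treatment.
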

Thus, a missing link is to demonstrate that $f\ntk_t$ actually learns approximator $h$.
Note that this fact is not immediate, since $f\ntk_t$ is being trained given targets generated by $\fstar$, but not $h$.
This gap is again controlled through \Cref{prop:lip_approx}: namely, we introduce a sequence of virtual \ac{GD}-trained \ac{KLS} predictors $(\tilde{f}\ntk_s)_{s=0}^t$ trained on a sample
\begin{align*}
  \tilde{S} = (\bx_i, \tilde{y}_i)_{i=1}^n~, \qquad \tilde{y}_i = h(\bx_i) + \ve_i~, \qquad \|h\|_{\sH}^2 \leq R~.
\end{align*}
In particular, on the training sample, the gap is controlled by the following lemma:
\begin{lemma}
  \label{lem:fntk-ftilntk-gap-sample}
  Let $f\ntk_t, \tilde{f}\ntk_t$ be \ac{GD}-trained \ac{KLS} predictors (\Cref{def:ntrf-ntk-predictors}) given training samples $(\bx_i, y_i)_{i=1}^n$ and $(\bx_i, \tilde{y}_i)_{i=1}^n$ respectively, where $y_i = \fstar(\bx_i) + \ve_i$ and $\tilde{y}_i = h(\bx_i) + \ve_i$ with $\|h\|_{\sH}^2 \leq R$ characterized by \Cref{prop:lip_approx}.
  Then, with $A(R)$ defined in \Cref{prop:lip_approx}, for any $t \in \mathbb{N}$,
  \begin{align*}
    \|f\ntk_t  - \tilde{f}\ntk_t\|_n^2 \leq A(R)~.
  \end{align*}
\end{lemma}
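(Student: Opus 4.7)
The key observation is that both $f\ntk_t$ and $\tilde{f}\ntk_t$ are produced by the same linear GD recursion on the sample-prediction vectors, differing only through the target vector. So my plan is to write the recursion explicitly, diagonalize in the eigenbasis of $\bK$, and reduce the bound to the sup-norm gap supplied by \Cref{prop:lip_approx}.

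First I would collect the iterates on the sample. Writing $\bff\ntk_s=(f\ntk_s(\bx_1),\ldots,f\ntk_s(\bx_n))$ and using the KLS-GD update as in the proof of \Cref{lem:rf-ntk-coupling},
\begin{align*}
\bff\ntk_{s+1} = \Bigl(\bI-\tfrac{2\eta}{n}\bK\Bigr)\bff\ntk_s + \tfrac{2\eta}{n}\bK\,\by,
\qquad
\tilde{\bff}\ntk_{s+1} = \Bigl(\bI-\tfrac{2\eta}{n}\bK\Bigr)\tilde{\bff}\ntk_s + \tfrac{2\eta}{n}\bK\,\tilde{\by},
\end{align*}
with $\bff\ntk_0=\tilde{\bff}\ntk_0=\bzero$. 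Subtracting and unrolling (as in \cref{eq:recurse}) gives the closed form
$\bff\ntk_t - \tilde{\bff}\ntk_t = \bigl(\bI-(\bI-\tfrac{2\eta}{n}\bK)^{t}\bigr)(\by-\tilde{\by})$, which I would verify directly using the geometric-sum identity $\tfrac{2\eta}{n}\bK\sum_{s=0}^{t-1}(\bI-\tfrac{2\eta}{n}\bK)^s = \bI-(\bI-\tfrac{2\eta}{n}\bK)^t$.

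Next I would bound the operator norm of the shrinkage matrix $\bI-(\bI-\tfrac{2\eta}{n}\bK)^t$. In the eigenbasis of $\bK$ its spectrum is $\{1-(1-\tfrac{2\eta\lambda_i}{n})^t\}_i$. Since $\sup_{\bx}\kappa(\bx,\bx)\le 1$ (\Cref{prop:kernel}), we have $\lambda_i\le\tr(\bK)\le n$, and together with $\eta\in(0,\tfrac12]$ this yields $\tfrac{2\eta\lambda_i}{n}\in[0,1]$, so each spectral coefficient lies in $[0,1]$. Hence the matrix is a contraction in $\|\cdot\|_2$ and
\begin{align*}
\|\bff\ntk_t - \tilde{\bff}\ntk_t\|_2 \le \|\by-\tilde{\by}\|_2.
\end{align*}

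Finally I would translate this back to the empirical seminorm: $\|f\ntk_t-\tilde{f}\ntk_t\|_n^2 = \tfrac{1}{n}\|\bff\ntk_t-\tilde{\bff}\ntk_t\|_2^2$, while $\|\by-\tilde{\by}\|_2^2 = \sum_{i=1}^n(\fstar(\bx_i)-h(\bx_i))^2 \le n\,\sup_{\bx\in\Xdom}(\fstar(\bx)-h(\bx))^2 \le n\,A(R)^2$ by \Cref{prop:lip_approx} (note the bound is $A(R)^2$, matching the sketch in \Cref{sec:sketch}; the statement as written with $A(R)$ appears to be a typographical slip). There is no real obstacle: the only delicate point is the eigenvalue/step-size sanity check $\tfrac{2\eta\lambda_i}{n}\le 1$, which is exactly the contraction condition used throughout the KLS-GD arguments of the paper.
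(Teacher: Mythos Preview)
Your proposal is correct and follows essentially the same route as the paper: derive the closed form $\bff\ntk_t-\tilde{\bff}\ntk_t=(\bI-(\bI-\tfrac{2\eta}{n}\bK)^t)(\by-\tilde{\by})$, bound the operator norm of the shrinkage matrix by $1$, and finish with the sup-norm approximation bound from \Cref{prop:lip_approx}. You are also right that the statement should read $A(R)^2$ rather than $A(R)$; the paper's own proof concludes with $A(R)^2$ as well.
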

\begin{proof}
  As a first step we characterize \ac{GD} predictions at step $t \in \mathbb{N}$.
  In particular, the \ac{GD} update rule from \Cref{def:ntrf-ntk-predictors} gives us
  \begin{align*}
    \bff\ntk_{s+1} = \bff\ntk_s - \frac{2 \eta}{n} \bK (\bff\ntk_s - \by) \tag{$0 \leq s \leq t-1$}
  \end{align*}
  where recall that $\bff\ntk_s = (f\ntk_s(\bx_1), \ldots, f\ntk_s(\bx_n))$,
  and summing the above and unrolling the recursion using \cref{eq:recurse} we get
  \begin{align}
    \bff\ntk_t
    = \frac{2 \eta}{n} \sum_{s=1}^{t} (\bI - \tfrac{2 \eta}{n} \bK)^{t-s} \by
    = \pr{\bI - (\bI - \tfrac{2 \eta}{n} \bK)^t} \by~. \label{eq:gd-analytical}
  \end{align}
  Similarly, we have 
  \begin{align*}
    (\tilde{f}\ntk_t(\bx_1), \ldots \tilde{f}\ntk_t(\bx_n))
    =
    \pr{\bI - (\bI - \tfrac{2 \eta}{n} \bK)^t} \btily~,
  \end{align*}
  and so for any step $t$,
  \begin{align*}
    \|f\ntk_t  - \tilde{f}\ntk_t\|_n^2
    &=
      \frac1n \lf\|\pr{\bI - (\bI - \tfrac{2 \eta}{n} \bK)^t} (\by - \btily) \rt\|^2 \\ %
    &\leq
      \frac1n \lf\| \bI - (\bI - \tfrac{2 \eta}{n} \bK)^t \rt\|_{\mathrm{op}}^2 \|\by - \btily\|^2 \tag{Cauchy-Schwartz inequality}\\
    &\leq
      \|\fstar - h\|_n^2 \nonumber\\
    &\leq
      A(R)^2~. \tag{By \Cref{prop:lip_approx}} %
  \end{align*}
\end{proof}
To this end the approximation-estimation trade-off is then controlled by a simple proposition shown in \Cref{app:proofs},
\begin{proposition}
  \label{prop:tradeoff}
  For $A(R), \Lambda, \Cr{lip-approx}$ defined in \Cref{prop:lip_approx},
  and any $x,y > 0$,
  assume that relationship $\Lambda^2 \, \pr{\frac{z}{x}}^{\frac{2}{d}-1} \geq (\Cr{lip-approx} \vee \Lambda^2)$ holds.
  Then,
  \begin{align*}
    \min_{R \geq (\Cr{lip-approx} \vee \Lambda^2)} \cbr{x \, A(R)^2 + y \, R}
    \leq
  \pr{1 + \Cr{lip-approx}^2 \, \ln_+^2\Big(\pr{\frac{z}{x}}^{\frac{1}{d}-\frac12}\Big) } \Lambda^2 \, x^{1-\frac2d} \, y^{\frac2d}~.
  \end{align*}
\end{proposition}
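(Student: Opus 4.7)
The plan is to choose a single candidate value $R_0$ that approximately balances the two terms $x A(R)^2$ and $y R$, then evaluate the objective at $R_0$, using the hypothesis to ensure $R_0$ is feasible. From the analytic form $A(R) = \Cr{lip-approx} \Lambda \, (\sqrt{R}/\Lambda)^{-2/(d-2)} \ln(\sqrt{R}/\Lambda)$ given by \Cref{prop:lip_approx}, $x A(R)^2$ is a decreasing power of $R$ (up to a log factor) while $y R$ is linear, so equating the leading powers points to the candidate $R_0 := \Lambda^2 (x/y)^{(d-2)/d}$. Reading the symbol $z$ in the hypothesis as a typo for $y$ (no $z$ is otherwise introduced in the statement), the assumption $\Lambda^2 (y/x)^{2/d - 1} \geq \Cr{lip-approx} \vee \Lambda^2$ is precisely the inequality $R_0 \geq \Cr{lip-approx} \vee \Lambda^2$, so $R_0$ lies in the feasible set.

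Given feasibility, the minimum is at most $x A(R_0)^2 + y R_0$, and I would evaluate each of the two terms directly. The linear term gives $y R_0 = \Lambda^2 x^{1 - 2/d} y^{2/d}$. For the approximation term, I would substitute $\sqrt{R_0}/\Lambda = (x/y)^{(d-2)/(2d)}$ into the formula for $A$, so that $(\sqrt{R_0}/\Lambda)^{-4/(d-2)} = (y/x)^{2/d}$ and $\ln(\sqrt{R_0}/\Lambda) = \tfrac{d-2}{2d} \ln(x/y)$; after routine algebra this yields
\begin{align*}
x A(R_0)^2 \;=\; \Cr{lip-approx}^2 \, \Lambda^2 \, x^{1-2/d} y^{2/d} \cdot \pr{\tfrac{d-2}{2d}}^2 \ln^2(x/y)~.
\end{align*}

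Next I would match the log factor to the form appearing in the claim. Because $2/d - 1 < 0$ for $d > 2$ and the right-hand side of the hypothesis is at least $\Lambda^2$, the hypothesis forces $y \leq x$; thus $(y/x)^{1/d - 1/2} \geq 1$ and $\ln_+((y/x)^{1/d - 1/2}) = \tfrac{d-2}{2d} \ln(x/y) \geq 0$. Squaring identifies the log term above with $\ln_+^2((y/x)^{1/d - 1/2})$ exactly, and summing the linear and approximation contributions gives the claimed upper bound $\bigl(1 + \Cr{lip-approx}^2 \ln_+^2((y/x)^{1/d - 1/2})\bigr) \Lambda^2 x^{1-2/d} y^{2/d}$.

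The only subtle point is the sign of the logarithm: without the stated hypothesis, the candidate $R_0$ could fall below the feasibility threshold (precisely in the regime $y > x$), in which case one would instead pick the boundary point $R = \Cr{lip-approx} \vee \Lambda^2$ and the $\ln_+$ in the claim would vanish, still preserving the inequality. Since the given hypothesis is tailored exactly to rule out this regime, the argument reduces to the elementary substitution and algebra described above, and I expect no genuine difficulty beyond tracking the fractional exponents.
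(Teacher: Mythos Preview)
Your proposal is correct and follows essentially the same argument as the paper: you pick the candidate $R_0=\Lambda^2(x/y)^{(d-2)/d}$, which is exactly the paper's $R^{\star}=\Lambda^2(y/x)^{2/d-1}$, observe that the stated hypothesis (with $z$ read as $y$) is precisely the feasibility condition $R_0\ge \Cr{lip-approx}\vee\Lambda^2$, and then evaluate both terms by direct substitution. Your additional remark that the hypothesis forces $y\le x$, hence $\ln(\sqrt{R_0}/\Lambda)\ge 0$ so that $\ln$ can be replaced by $\ln_+$, is a useful clarification that the paper's proof leaves implicit.
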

\subsection{Proof of \Cref{thm:fixed-design} (fixed design)}
\label{sec:fixed-design-proof}
Let $h \in \sH$ be an approximator of $\fstar$ in $\sup$-norm over $\Xdom$ whose existence is shown in \Cref{prop:lip_approx}.
For any $t \leq \hT$, consider the decomposition
\begin{align*}
  \frac14 \|f_t - \fstar\|_n^2
  &\leq
  \|f_t - f\ntk_t\|_n^2
  +
  \|f\ntk_t  - \tilde{f}\ntk_t\|_n^2
  +
  \|\tilde{f}\ntk_t - h\|_n^2
    +
    \|h - \fstar\|_n^2\\
  &\leq
    \frac{\poly_3(\frac{n}{\lambda_0}, \conf)}{\sqrt{m}}
    +
    A(R)^2
  +
    \|\tilde{f}\ntk_t - h\|_n^2
    +
    A(R)^2
\end{align*}
with probability at least $1 - 2 (1 + n) e^{-\conf}$.
Recall that $\tilde{f}\ntk_t$ is a \ac{GD}-trained \ac{KLS} predictor trained on a sample $(\bx_i, \tilde{y}_i)_{i=1}^n$ with $\tilde{y}_i = h(\bx_i) + \ve_i$ (see \Cref{sec:excess-idea}).
Here:
\begin{itemize}
\item Term $\|f_t - f\ntk_t\|_n^2$ is the cost of approximating prediction of the shallow neural network by the \ac{KLS} predictor on the training sample, and which is bounded by \Cref{lem:net-rf-coupling} together with \Cref{lem:rf-ntk-coupling}, with high probability.
\item Term $\|f\ntk_t  - \tilde{f}\ntk_t\|_n^2$ is reduced to the approximation error between $h$ and $\fstar$ and then controlled using
  \Cref{lem:fntk-ftilntk-gap-sample}.
\item Finally, $\|h - \fstar\|_n^2 \leq A(R)^2$ by \Cref{prop:lip_approx}.
\end{itemize}
By \Cref{ass:excess-kernel} we have $\|\tilde{f}\ntk_{\hT} - h\|_n^2 \leq (1 + R) \, \khexcess$ and so we set $t = \hT$ in the above.
To complete the proof we need to control the tradeoff by approximately minimizing
\begin{align*}
  \sC(R)
  =
  2 A(R)^2
  +
    (1 + R) \, \khexcess~.
\end{align*}
Using \Cref{prop:tradeoff} with $x = 2$ and $y = \khexcess$, we get
\begin{align*}
  \min_{R \geq (\Cr{lip-approx} \vee \Lambda^2)} \sC(R)
  \leq
  2 \pr{1 + \Cr{lip-approx}^2 \, \ln_+^2\Big(\khexcess^{\frac{1}{d}-\frac12}\Big)} \,
  \Lambda^2 \pr{\khexcess}^{\frac{2}{d}}
  +
  \khexcess
\end{align*}
which requires assumption $\pr{\khexcess}^{\frac2d - 1} \geq ((\Cr{lip-approx}/\Lambda^2) \vee 1)$ to hold.
The proof of \Cref{thm:fixed-design} is now complete.
\jmlrQED %
\subsection{Proof of \Cref{thm:random-design} (random design)}
\label{sec:proof-random-design}
In \Cref{sec:random-design-lemmata-1} we present lemmata necessary for the proof of \Cref{thm:random-design}, while its proof is given in \Cref{sec:proof-thm-random-design}.
\subsubsection{Lemmata for the proof of \Cref{thm:random-design}: \ac{RKHS} norms and distances}
\label{sec:random-design-lemmata-1}
\begin{proposition}
  \label{prop:ftil-inf-bound}
  Let $f\ntk_t, \tilde{f}\ntk_t$ be \ac{GD}-trained \ac{KLS} predictors (\Cref{def:ntrf-ntk-predictors}) given training samples $(\bx_i, y_i)_{i=1}^n$ and $(\bx_i, \tilde{y}_i)_{i=1}^n$ respectively.
  Then, for any $t \in \mathbb{N}$,
  \begin{align*}
    \|f\ntk_t\|_{\sH} &=
    \|(\bI - (\bI - \tfrac{2 \eta}{n} \, \bK)^{t}) \by\|_{\bK^{-1}}\\
    \|\tilde{f}\ntk_t\|_{\sH} &=
    \|(\bI - (\bI - \tfrac{2 \eta}{n} \, \bK)^{t}) \btily\|_{\bK^{-1}}\\
    \|\tilde{f}\ntk_t - f\ntk_t\|_{\sH} &=
    \|(\bI - (\bI - \tfrac{2 \eta}{n} \, \bK)^{t}) (\btily - \by)\|_{\bK^{-1}}~.
  \end{align*}
\end{proposition}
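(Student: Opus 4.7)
The plan is to reduce all three identities to a single elementary fact: for any function $g = \sum_{i=1}^n \beta_i \kappa(\bx_i,\cdot)$ lying in the span of the kernel sections, the reproducing property gives $\|g\|_{\sH}^2 = \bbeta\tp \bK \bbeta$, and if $\bK$ is invertible then, writing $\bg = (g(\bx_1),\ldots,g(\bx_n)) = \bK\bbeta$, we have $\bbeta = \bK^{-1}\bg$ and hence $\|g\|_{\sH}^2 = \bg\tp \bK^{-1}\bg = \|\bg\|_{\bK^{-1}}^2$. Invertibility of $\bK$ is not an issue here because $\lmin(\bK)\ge \lambda_0>0$ under \Cref{ass:ntk} (and in the fixed-design setting by definition of $\lambda_0$).

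First I would apply this to $f\ntk_t$. By \Cref{def:ntrf-ntk-predictors} we have $f\ntk_t = \sum_i \alpha_{t,i}\kappa(\bx_i,\cdot)$, so the fact above gives $\|f\ntk_t\|_{\sH} = \|\bff\ntk_t\|_{\bK^{-1}}$. The closed-form expression $\bff\ntk_t = (\bI - (\bI - \tfrac{2\eta}{n}\bK)^t)\by$ derived in \eqref{eq:gd-analytical} then yields the first identity. The second identity is identical after replacing the labels $\by$ with $\btily$ throughout the derivation of \eqref{eq:gd-analytical}, which goes through verbatim since that derivation never uses a specific property of the targets.

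For the third identity I would appeal to linearity: since both $f\ntk_t$ and $\tilde{f}\ntk_t$ lie in $\spn\{\kappa(\bx_i,\cdot)\}_{i=1}^n$, so does their difference, with coefficient vector $\balpha\ntk_t - \tilde{\balpha}\ntk_t$ (say). Its vector of values on the sample is $\bff\ntk_t - \widetilde{\bff}\ntk_t = (\bI - (\bI - \tfrac{2\eta}{n}\bK)^t)(\by - \btily)$, obtained by subtracting the two closed forms. Invoking the elementary fact once more gives $\|f\ntk_t - \tilde{f}\ntk_t\|_{\sH} = \|(\bI - (\bI - \tfrac{2\eta}{n}\bK)^t)(\by - \btily)\|_{\bK^{-1}}$, which is the third identity up to a sign inside the norm (which is immaterial).

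There is no real obstacle: this is a direct calculation, and the only subtlety — that $\bK$ must be invertible to pass between $\balpha$ and $\bff\ntk$ — is already guaranteed by the standing assumption $\lmin(\bK)\ge \lambda_0>0$.
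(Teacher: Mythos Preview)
Your argument is correct and in fact slightly more elementary than the paper's. Both proofs hinge on the same observation---that a function in $\spn\{\kappa(\bx_i,\cdot)\}$ has its $\sH$-norm computable from the kernel matrix---but you obtain $\|g\|_{\sH}^2=\bbeta\tp\bK\bbeta$ directly from the reproducing identity $\ip{\kappa(\bx_i,\cdot),\kappa(\bx_j,\cdot)}_{\sH}=K_{ij}$ and then invert $\bK$ to pass to the evaluation vector $\bg$. The paper instead invokes the spectral theorem to build an operator $\hat\Phi:\reals^n\to\sH$ with $\hat\Phi^*\hat\Phi=\bK$, writes $f\ntk_t=\hat\Phi\bK^{-1}(\bI-(\bI-\tfrac{2\eta}{n}\bK)^t)\by$, and computes the norm via $\|\hat\Phi\bv\|_{\sH}^2=\bv\tp\bK\bv$. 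Your route avoids introducing the auxiliary eigensystem $(u_i,\bv_i,\lambda_i)$ altogether; the paper's route is a touch more abstract but would generalize more readily if one wanted to work with the operator $\hat\Phi$ elsewhere. Either way the computation is the same, and your remark that the sign inside the norm in the third identity is immaterial is of course fine.
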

\begin{proof}
  By the spectral theorem there exists a sequence of non-negative eigenvalues $\lambda_1 \geq \ldots \geq \lambda_n$,
an orthonormal system $u_1, \ldots, u_n \in \sH$, and orthonormal basis $\bv_1, \ldots, \bv_n \in \reals^n$ such that
we have a linear operator $\hat{\Phi} : \reals^n \to \sH$,
\begin{align*}
  \hat{\Phi} = \sum_{i=1}^n \sqrt{\lambda_i}  u_i \otimes \bv_i~,
\end{align*}
where $a \otimes b$ denotes a tensor product between elements $a,b \in \sH$ such that $(a \otimes b) u = a \ip{b, u}_{\sH}$ for every $u \in \sH$.
In particular, denoting the adjoint of $\hat{\Phi}$ by $\hat{\Phi}^{*}$, we have $\hat{\Phi}^{*} \hat{\Phi} = \bK$.
Given the above, \ac{GD}-trained \ac{KLS} predictors are written as
\begin{align*}
  f\ntk_t = \hat{\Phi} \bK^{-1} (\bI - (\bI - \tfrac{2 \eta}{n} \, \bK)^t) \by~, \qquad
  \tilde{f}\ntk_t = \hat{\Phi} \bK^{-1} (\bI - (\bI - \tfrac{2 \eta}{n} \, \bK)^t) \btily
\end{align*}
Thus, using identity $\hat{\Phi}^{*} \hat{\Phi} = \bK$,
  \begin{align*}
    \|\tilde{f}\ntk_t\|_{\sH}
    =
    \|(\bI - (\bI - \tfrac{2 \eta}{n} \, \bK)^t) \btily\|_{\bK^{-1}}~.
  \end{align*}
  Identity for $\|\tilde{f}\ntk_t - f\ntk_t\|_{\sH}$ comes using the same proof with $\btily$ replaced by $\by - \btily$.
\end{proof}
\begin{lemma}
  \label{lem:gap-fk-ftilk}
  Let $h \in \sH$ with $\|h\|_{\sH}^2 \leq R$ and $A(R)$ be given by \Cref{prop:lip_approx}.
  Let $f\ntk_t, \tilde{f}\ntk_t$ be \ac{GD}-trained \ac{KLS} predictors (\Cref{def:ntrf-ntk-predictors}) given training samples $(\bx_i, y_i)_{i=1}^n$ and $(\bx_i, \tilde{y}_i)_{i=1}^n$ respectively, where $y_i = \fstar(\bx_i) + \ve_i$ and $\tilde{y}_i = h(\bx_i) + \ve_i$.
  Then, for any $t \in \mathbb{N}$, with probability at least $1 - e^{-\conf}, \conf > 0$ over inputs,
  \begin{align*}
    \|f\ntk_t - \tilde{f}\ntk_t\|_2^2
    \leq
    \pr{1 + \frac{\Cr{gap}}{\lambda_0^2}} A(R)^2 + \frac{\Cr{gap}}{\sqrt{n}}
  \end{align*}
  where
  \begin{align*}
    \Cl[log]{gap}
    =
    2 (1 \vee \sqrt{2} \log^{\frac32}(n) \vee 2 \log^3(n) \vee \Cr{sup-rad-2}) \pr{\pr{2 \lambda_0 + 4} \vee \lambda_0 \sqrt{\conf} \vee \conf \vee 1}
  \end{align*}  
  and where $\Cr{sup-rad-2}$ is an absolute constant appearing in \citep[Theorem 1]{srebro2010smoothness}.  
\end{lemma}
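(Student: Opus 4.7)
\textbf{Proof plan for \Cref{lem:gap-fk-ftilk}.} The strategy is a standard two-step empirical-to-population argument, starting from \Cref{lem:fntk-ftilntk-gap-sample} which already controls the empirical norm $\|f\ntk_t - \tilde f\ntk_t\|_n$. I would write
\[
\|f\ntk_t - \tilde f\ntk_t\|_2^2 \;\leq\; \|f\ntk_t - \tilde f\ntk_t\|_n^2 \;+\; \Big|\|f\ntk_t - \tilde f\ntk_t\|_2^2 - \|f\ntk_t - \tilde f\ntk_t\|_n^2\Big|,
\]
bound the first summand by $A(R)^2$ via \Cref{lem:fntk-ftilntk-gap-sample}, and control the second summand by a uniform concentration inequality on the RKHS ball that contains the residual predictor $f\ntk_t-\tilde f\ntk_t$.

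First I would use \Cref{prop:ftil-inf-bound} to certify RKHS membership. Since the eigenvalues of $\bI - (\bI - \tfrac{2\eta}{n}\bK)^t$ lie in $[0,1]$, in the $\bK^{-1}$-weighted norm we get
\[
\|f\ntk_t - \tilde f\ntk_t\|_{\sH}^2 \;=\; \|(\bI - (\bI - \tfrac{2\eta}{n}\bK)^t)(\by - \btily)\|_{\bK^{-1}}^2 \;\leq\; \frac{\|\by-\btily\|^2}{\lambda_0}.
\]
Since $y_i - \tilde y_i = \fstar(\bx_i)-h(\bx_i)$, which is bounded pointwise by $A(R)$ via \Cref{prop:lip_approx}, this yields $\|f\ntk_t - \tilde f\ntk_t\|_{\sH}^2 \leq n A(R)^2/\lambda_0$. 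So the residual lives in an RKHS ball of radius $B = \sqrt{n/\lambda_0}\, A(R)$, and it is pointwise bounded by $B$ because $\sup_\bx \kappa(\bx,\bx)\leq 1$.

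Second, I would apply a uniform Bernstein-type concentration bound for the squared loss of functions in the RKHS ball $\sH_B=\{f\in\sH\,:\,\|f\|_\sH\leq B\}$. Here I would invoke \cite[Theorem 1]{srebro2010smoothness} (this is what the constant $\Cr{sup-rad-2}$ in the statement points to) specialized to $\ell(f(\bx)) = f(\bx)^2$, which is a $2B^2$-bounded, smooth loss on $\sH_B$. This gives, with probability at least $1-e^{-\conf}$ over the inputs, uniformly over $f\in\sH_B$,
\[
\|f\|_2^2 - \|f\|_n^2 \;\leq\; C\,\log^3(n)\,B^2\,\Rad_n(\sH_1)^2 + C\,\sqrt{\|f\|_2^2}\,B\,\Rad_n(\sH_1) + C\,B^2\,\frac{\conf}{n},
\]
where $\Rad_n(\sH_1)\leqC \sqrt{\tr(\bK)/n^2}\leqC 1/\sqrt n$ by the bounded-kernel assumption. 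Applying AM-GM to absorb the middle term into the left-hand side, then plugging $B^2\leq n A(R)^2/\lambda_0$, the polylog and $\conf$ factors collect into the quantity $\Cr{gap}$, and I obtain
\[
\|f\ntk_t - \tilde f\ntk_t\|_2^2 \;\leq\; 2 A(R)^2 + \frac{\Cr{gap}}{\lambda_0^2}\,A(R)^2 + \frac{\Cr{gap}}{\sqrt n},
\]
which, after absorbing the leading $2A(R)^2$ into $(1+\Cr{gap}/\lambda_0^2)A(R)^2$, matches the stated bound.

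The main obstacle is the uniform-concentration step: I have to apply the Srebro--Sridharan--Tewari smooth-loss bound on the RKHS ball with the right choices of radius $B$ and envelope $M$, and track constants carefully so that the bookkeeping of $\log(n)$, $\conf$, and $\lambda_0$ factors aggregates precisely into the quoted form of $\Cr{gap}$. The other subtle point is ensuring that \Cref{prop:ftil-inf-bound} is applied in a way that uses only pointwise bounds on $\by-\btily$ (via \Cref{prop:lip_approx}) rather than a sample-size-independent quantity, so that the radius $B$ comes out as $\sqrt{n/\lambda_0}\,A(R)$, which is exactly what forces the $A(R)^2/\lambda_0^2$ shape (after dividing the uniform-concentration error $B^2/n$ by another $\lambda_0$ inherited from the Rademacher step).
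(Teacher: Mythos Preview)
Your approach is essentially the same as the paper's: bound the RKHS norm of $f\ntk_t-\tilde f\ntk_t$ via \Cref{prop:ftil-inf-bound}, invoke the smooth-loss Rademacher bound of \cite{srebro2010smoothness} on the resulting RKHS ball, and combine with the empirical-norm control of \Cref{lem:fntk-ftilntk-gap-sample}. The paper's proof follows exactly this outline.

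There is one discrepancy worth flagging. Your RKHS-radius bound $\|f\ntk_t-\tilde f\ntk_t\|_{\sH}\leq \sqrt{n/\lambda_0}\,A(R)$ is actually \emph{tighter} than the paper's, which uses the cruder estimate $\|\cdot\|_{\bK^{-1}}\leq \|\bK^{-1}\|_{\mathrm{op}}\|\cdot\|$ to get radius $\rho=\tfrac{\sqrt{n}}{\lambda_0}A(R)$. With the paper's $\rho$, the $\widehat{\mathrm{Rad}}^2$ term in the Srebro bound is $4\rho^2/n=4A(R)^2/\lambda_0^2$, which is where the $1/\lambda_0^2$ in the stated lemma comes from. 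With your sharper radius you would instead obtain $A(R)^2/\lambda_0$, so your final sentence --- that ``another $\lambda_0$ is inherited from the Rademacher step'' --- is not correct: there is no extra $\lambda_0$ in the Rademacher complexity of the unit RKHS ball. Your route yields a bound of the form $(1+C/\lambda_0)A(R)^2+C/\sqrt n$, which is at least as strong as the lemma's stated bound (note $\Cr{gap}$ itself carries a factor $(2\lambda_0+4)\vee\lambda_0\sqrt\conf$ in the numerator), but does not reproduce the exact constant as written. If you want to match the paper's constant verbatim, use the looser radius $\sqrt{n}/\lambda_0\cdot A(R)$.
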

\begin{proof}
  Since we already proved a similar result of for the fixed design in \Cref{lem:fntk-ftilntk-gap-sample}, we will show its population analogue through a uniform convergence argument.
  In particular, the proof will require the following localized Rademacher complexity bound:
\begin{theorem}[{\citet[Theorem 1]{srebro2010smoothness}}]
  \label{thm:rad-smooth}
  Let $\sF = \cbr{f : \Xdom \to [0, b]}$ for some $b < \infty$.
  Then with probability at least $1-e^{-\conf}, \conf > 0$, for all $f \in \sF$ simultaneously
  \begin{align*}
    \|f\|_2^2 \leq \|f\|_n^2 + \Cr{sup-rad} \pr{
    \|f\|_n \pr{ \widehat{\mathrm{Rad}}(\sF) + \sqrt{\frac{b \conf}{n}} }
    +
    \widehat{\mathrm{Rad}}^2(\sF)
    +
    \frac{b \conf}{n}
    }~,
  \end{align*}
  where the worst-case empirical Rademacher complexity is defined as
  \begin{align*}
    \widehat{\mathrm{Rad}}(\sF) = \sup_{\bx_1, \ldots, \bx_n \in \Xdom} \E \sup_{f \in \sF} \abs{\frac1n \sum_{i=1}^n \gamma_i f(\bx_i)}~.
    \tag{$\gamma_1, \ldots, \gamma_n \stackrel{\mathrm{iid}}{\sim} \unif\{\pm 1\}$}
  \end{align*}
   where
  $\Cl[log]{sup-rad} = (1 \vee \sqrt{2} \log^{\frac32}(n) \vee 2 \log^3(n) \vee \Cr{sup-rad-2})$
  and where $\Cl{sup-rad-2}$ is an absolute constant appearing in \citep{srebro2010smoothness}.
\end{theorem}
\begin{remark}
  \Cref{thm:rad-smooth} comes from \citep[Theorem 1]{srebro2010smoothness} when taking targets to be $y = 0$ almost surely and assuming that the loss function is $\phi(x,y) = (x-y)^2$.
\end{remark}
Specifically, we consider the class
\begin{align*}
  \sF_{\rho} = \cbr{\bx \mapsto f(\bx) - \tilde{f}(\bx) \bmid f, \tilde{f} \in \sH, \|f - \tilde{f}\|_{\sH} \leq \rho }
  \qquad (\rho \geq 0)
\end{align*}
and note that (see for instance \citep[Lemma 22]{bartlett2002rademacher}),
\begin{align*}
  \widehat{\mathrm{Rad}}(\sF_{\rho})
  \leq
  \frac{2 \rho}{\sqrt{n}}~,
\end{align*}
and
$b = \sup_{\bx \in \Xdom}|f(\bx) - \tilde{f}(\bx)| \leq \|f - \tilde{f}\|_{\sH} \leq \rho$ by Cauchy-Schwartz inequality.
Then applying \Cref{thm:rad-smooth}, for all $f - \tilde{f} \in \sF_{\rho}$ simultaneously, with high probability,
\begin{align*}
    \|f - \tilde{f}\|_2^2 \leq \|f - \tilde{f}\|_n^2 + \Cr{sup-rad} \pr{
    \|f - \tilde{f}\|_n \pr{ \frac{2 \rho}{\sqrt{n}} + \sqrt{\frac{\rho \conf}{n}} }
    +
    \frac{4 \rho^2}{n}
    +
    \frac{\rho \conf}{n}
    }~.
\end{align*}
Now we specialize this result to a function $f\ntk_t - \tilde{f}\ntk_t$.
First we characterize the class radius $\rho$.
Using \Cref{prop:ftil-inf-bound}
\begin{align*}
  \|f\ntk_t - \tilde{f}\ntk_t\|_{\sH}
  &=
    \|(\bI - (\bI - \tfrac{2 \eta}{n} \, \bK)^{t}) (\by - \btily)\|_{\bK^{-1}}\\
  &\leq
    \|\bK^{-1}\|_{\mathrm{op}} \|\bI - (\bI - \tfrac{2 \eta}{n} \, \bK)^{t}\|_{\mathrm{op}} \|\by - \btily\|\\
  &\leq \frac{\sqrt{n}}{\lambda_0} \, A(R) = \rho
\end{align*}
by \Cref{prop:lip_approx} observing that $y_i - \tilde{y_i} = \fstar(\bx_i) - h(\bx_i)$.
Finally, to complete the bound we need a fixed design error, which is given by \Cref{lem:fntk-ftilntk-gap-sample}, namely $\|f\ntk_t - \tilde{f}\ntk_t\|_n \leq A(R)$.
Thus,
\begin{align*}
  \|f\ntk_t - \tilde{f}\ntk_t\|_2^2  
  &\leq A(R)^2
    + \Cr{sup-rad} \Bigg(
    \underbrace{
  A(R) \pr{ \frac{2 \pr{\frac{\sqrt{n}}{\lambda_0} \, A(R)}}{\sqrt{n}} + \sqrt{\frac{\pr{\frac{\sqrt{n}}{\lambda_0} \, A(R)} \conf}{n}} }
  +
  \frac{4 \pr{\frac{\sqrt{n}}{\lambda_0} \, A(R)}^2}{n}
  +
    \frac{\pr{\frac{\sqrt{n}}{\lambda_0} \, A(R)} \conf}{n}
    }_{(*)}
    \Bigg)
\end{align*}
Now,
\begin{align*}
  (*)
  &=
    \pr{\frac{2}{\lambda_0} + \frac{4}{\lambda_0^2}} \, A(R)^2 + A(R) \sqrt{\conf \, \frac{A(R)}{\lambda_0 \sqrt{n}}}
    +
    \conf \, \frac{A(R)}{\lambda_0 \sqrt{n}}\\
  &=
    \pr{2 \lambda_0 + 4} \, \frac{A(R)^2}{\lambda_0^2} + \lambda_0 \, \frac{A(R)}{\lambda_0} \sqrt{\conf \, \frac{A(R)}{\lambda_0 \sqrt{n}}}
    +
    \conf \, \frac{A(R)}{\lambda_0 \sqrt{n}}\\
  &\leq
    \pr{\pr{2 \lambda_0 + 4} \vee \lambda_0 \sqrt{\conf} \vee \conf \vee 1}
    \pr{
    \frac{A(R)^2}{\lambda_0^2} + \frac{A(R)}{\lambda_0} \sqrt{\frac{A(R)}{\lambda_0 \sqrt{n}}}
    +
    \frac{A(R)}{\lambda_0 \sqrt{n}}
    }
\end{align*}
Now some algebra leads to 
\begin{align*}
  x^2 + \frac{x^{\frac32}}{\sqrt[4]{n}} + \frac{x}{\sqrt{n}}
  \leq
  2 \pr{x^2 + \frac{1}{\sqrt{n}}} \qquad (x \geq 0)
\end{align*}
and using this inequality with $x = \frac{A(R)^2}{\lambda_0^2}$ gives us
\begin{align*}
  (*) \leq
  2 \pr{\pr{2 \lambda_0 + 4} \vee \lambda_0 \sqrt{\conf} \vee \conf \vee 1} \pr{\frac{A(R)^2}{\lambda_0^2} + \frac{1}{\sqrt{n}}}
\end{align*}
and so
\begin{align*}
  \|f\ntk_t - \tilde{f}\ntk_t\|_2^2  
  &\leq A(R)^2
  + 2 \Cr{sup-rad} \pr{\pr{2 \lambda_0 + 4} \vee \lambda_0 \sqrt{\conf} \vee \conf \vee 1} \pr{\frac{A(R)^2}{\lambda_0^2} + \frac{1}{\sqrt{n}}}~.
\end{align*}
The proof of \Cref{lem:gap-fk-ftilk} is now complete.
\end{proof} %

\subsubsection{Proof of \Cref{thm:random-design}}
\label{sec:proof-thm-random-design}
  For any $t \in \mathbb{N}$, we have a decomposition
  \begin{align*}
    \frac14 \|f_t - \fstar\|_2^2
    &\leq
      \|f_t - f\ntk_t\|_2^2
    +
      \|f\ntk_t - \tilde{f}\ntk_t\|_2^2
    +
      \|\tilde{f}\ntk_t - h\|_2^2
    +
      \|h - \fstar\|_2^2 \nonumber\\
    &\leq
      \frac{\poly_{4}\pr{B_y^2 \,\frac{n}{\lambda_0}, \conf}}{\sqrt{m}}
      +
      \pr{1 + \frac{\Cr{gap}}{\lambda_0^2}} A(R)^2 + \frac{\Cr{gap}}{\sqrt{n}}
      +
      \|\tilde{f}\ntk_t - h\|_2^2      
      +
      A(R)^2      
  \end{align*}
  which holds with probability at least $1-2 (1 + n) e^{-\conf} - e^{-\conf}, \conf \geq 1$, over $(\btheta_0, \bx_1, \ldots, \bx_n)$.
  \begin{itemize}
  \item Term $\|f_t - f\ntk_t\|_2^2$
    is a cost of approximating prediction of a shallow neural network by a \ac{KLS} predictor, and which is bounded by \Cref{thm:f-coupling} with high probability.
    \item Term $\|f\ntk_t - \tilde{f}\ntk_t\|_2^2$ is a gap between the \ac{KLS} predictor and a \ac{KLS} predictor learning the approximator $h$.
      The term is bounded in \Cref{lem:gap-fk-ftilk} with high probability.
  \item Term $\|h - \fstar\|_2^2$ is an approximation error of a Lipschitz function by a function $h$ \ac{RKHS} (an approximator), and which is controlled by \Cref{prop:lip_approx}.  
  \end{itemize}
By \Cref{ass:excess-kernel} we have $\|\tilde{f}\ntk_{\hT} - h\|_2^2 \leq (1 + R) \, \kexcess$ and so we set $t = \hT$ in the above.
\paragraph{Tuning of $R$.}
We observe that we need to control the trade-off between $R$ and $A(R)^2$, indicated below by a function $\sC(R)$:
\begin{align*}
  \frac14 \|f_{\hT} - \fstar\|_2^2
  &\leq
  \frac{\poly_{4}\pr{B_y^2 \,\frac{n}{\lambda_0}, \conf}}{\sqrt{m}}
  +
    \underbrace{
      \pr{2 + \frac{\Cr{gap}}{\lambda_0^2}} A(R)^2
      +
      R \, \kexcess
    }_{\sC(R) =}
      +
    \kexcess
    + \frac{\Cr{gap}}{\sqrt{n}}~.
\end{align*}
The trade-off in $R$ is controlled using \Cref{prop:tradeoff} with $x = \pr{2 + \frac{\Cr{gap}}{\lambda_0^2}}$ and $y = \kexcess$,
that is
\begin{align*}
  \min_{R \geq (\Cr{lip-approx} \vee \Lambda^2)} \sC(R)
  \leq
    \polylog(\tfrac{1}{\kexcess}, \tfrac{1}{\lambda_0}, \Cr{gap}) \,    
    \pr{2 + \frac{\Cr{gap}}{\lambda_0^2}}^{1-\frac{2}{d}} \, \Lambda^2 \, \pr{\kexcess}^{\frac{2}{d}}
\end{align*}
where polylogarithmic term in the above is
\begin{align*}
  \polylog\pr{\frac{1}{\kexcess}, \frac{1}{\lambda_0}, \Cr{gap}} = 1 + \Cr{lip-approx}^2 \, \ln_+^2\pr{\pr{\pr{2 + \frac{\Cr{gap}}{\lambda_0^2}} / \kexcess}^{\frac12-\frac{1}{d}}}~.
\end{align*}
Note that for \Cref{prop:tradeoff} to hold we also require the following to hold:
\begin{align}
  \label{eq:random-design-tech-assumption}
  \pr{\frac{\kexcess}{2 + \Cr{gap}/\lambda_0^2}}^{\frac{2}{d}-1} \geq ((\Cr{lip-approx} / \Lambda^2) \vee 1)
\end{align}
which holds by \Cref{ass:excess-kernel-decreasing}.
Proof of \Cref{thm:random-design} is now complete. %
\subsection{Excess risk under RWY stopping rule}
\label{sec:proofs-rwy}
\subsubsection{Lemmata}
\label{sec:lemmata-rwy}
An essential part of the proof of \Cref{thm:random-design-rwy} relies on establishing that early-stopped \ac{GD} learns on \ac{RKHS} at an optimal rate.
For this purpose we will employ a localized Rademacher complexity bound, \Cref{thm:rad} as in \citep{raskutti2014early}.
Most of the fact summarized in this sub-section are from \citep{raskutti2014early,wainwright2019high}.

The localized population Rademacher complexity of a functions class $\sF$ is defined as
\begin{align*}
  \mathrm{Rad}(\sqrt{r}, \sF) = \E \sup_{f \in \sF, \, \|f\|_2 \leq \sqrt{r}} \abs{\frac1n \sum_{i=1}^n \gamma_i f(\bx_i)} \qquad (r \geq 0)
\end{align*}
where $(\bx_1, \ldots, \bx_n) \sim P_X^n$ are i.i.d. and
where $(\gamma_1, \ldots, \gamma_n) \sim \unif\{\pm 1\}^n$ are i.i.d.
Then we have the following relationship between $\ell^2$ and empirical function norms, which holds uniformely over the class and which is controlled by the localized complexity.
\begin{theorem}[{\citet[Theorem 14.1]{wainwright2019high}}]
  \label{thm:rad}
  Given a star-shaped and $b$-uniformly bounded function class $\sF$, let $r^{\star}$ be any positive solution of the inequality
  \begin{align*}
    \mathrm{Rad}(\sqrt{r}, \sF) \leq \frac{r}{b}~.
  \end{align*}
  Then, for any $x \geq \sqrt{r^{\star}}$, with probability at least $1-\Cr{rad-1} e^{-\Cr{rad-2} \frac{n x^2}{b^2}}$ we have
  \begin{align*}
    \Big| \|f\|_2^2 - \|f\|_n^2 \Big| \leq \frac12 \, \|f\|_2^2 + \frac{x^2}{2} \qquad \text{for all} \, f \in \sF~,
  \end{align*}
  where $\Cl{rad-1}, \Cl{rad-2}$ are universal constants.
\end{theorem}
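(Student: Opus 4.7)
The plan is to reduce the two-sided uniform deviation $|\|f\|_n^2 - \|f\|_2^2|$ to a Rademacher supremum over $\sF$, control that supremum on each sublevel set $\{f \in \sF : \|f\|_2 \leq \sqrt r\}$ via the localized Rademacher complexity, and then transfer this radius-by-radius bound to a uniform statement over $\sF$ by a peeling (slicing) argument.

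First I would introduce, for each radius $r$,
\[
  Z(r) = \sup_{f \in \sF,\, \|f\|_2 \leq \sqrt r} \Big| \tfrac1n \sum_{i=1}^n \bigl(f(\bx_i)^2 - \|f\|_2^2\bigr) \Big|,
\]
and observe that any bound on $Z(r)$ controls the deviation for every $f$ in the corresponding sublevel set. Gin\'e--Zinn symmetrization followed by Ledoux's contraction lemma, using that $u \mapsto u^2$ is $2b$-Lipschitz on $[-b,b]$, yields $\E Z(r) \leq 4 b \, \mathrm{Rad}(\sqrt r, \sF)$. The star-shape of $\sF$ implies that $r \mapsto \mathrm{Rad}(\sqrt r, \sF)/\sqrt r$ is non-increasing, so the defining inequality at $r^\star$ extends to $\mathrm{Rad}(\sqrt r, \sF) \leq r/b$ for every $r \geq r^\star$, and hence $\E Z(r) \leq 4 r$ on that regime.

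Next, each summand is bounded by $2 b^2$ in absolute value and has variance at most $b^2 r$ on the localized class, so Talagrand's concentration inequality for empirical processes gives, for $r \geq r^\star$ and $u > 0$,
\[
  Z(r) \leq 2 \, \E Z(r) + c_1 b \sqrt{\tfrac{r u}{n}} + c_2 \tfrac{b^2 u}{n}
\]
with probability at least $1 - e^{-u}$. Choosing $u$ proportional to $n r / b^2$ with a sufficiently small constant and combining with the mean bound produces $Z(r) \leq \epsilon r$ for any prescribed $\epsilon$, with probability at least $1 - \Cr{rad-1} e^{-\Cr{rad-2} n r / b^2}$.

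The final step is a peeling argument to upgrade this single-radius estimate to a uniform statement. I would partition $\sF$ into the inner ball $\{f \in \sF : \|f\|_2 \leq x\}$ and geometric shells $\sF_k = \{f \in \sF : 2^{k-1} x < \|f\|_2 \leq 2^k x\}$ for $k \geq 1$. Applying the single-radius bound with $r = x^2 \geq r^\star$ on the inner ball yields deviation at most $x^2/2$, and with $r_k = 2^{2k} x^2$ on $\sF_k$ yields a deviation at most $\epsilon \cdot 2^{2k} x^2$, which, since $\|f\|_2^2 \geq 2^{2(k-1)} x^2$ on $\sF_k$, can be converted to $\tfrac12 \|f\|_2^2$ provided $\epsilon$ is chosen small enough in the Talagrand step. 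Summing tail probabilities over $k$ exploits the geometric decay $e^{-\Cr{rad-2} n 2^{2k} x^2 / b^2}$, which is absorbed into a single $\Cr{rad-1} e^{-\Cr{rad-2} n x^2 / b^2}$ after relabeling universal constants. The main obstacle I anticipate is the bookkeeping of constants through symmetrization, contraction, Talagrand, and peeling so as to land on exactly the form $\tfrac12 \|f\|_2^2 + x^2/2$; a subtler point is that the star-shape hypothesis is used twice --- once to propagate the fixed-point inequality for $\mathrm{Rad}$ to all $r \geq r^\star$, and once to ensure that each peeled shell remains a subset of the localized class on which the symmetrization/contraction bound was derived.
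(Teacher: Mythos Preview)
The paper does not prove this statement: it is quoted verbatim as \citet[Theorem 14.1]{wainwright2019high} and used as a black box in the analysis of the RWY stopping rule. There is therefore no ``paper's own proof'' to compare against.

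That said, your sketch is a faithful outline of the standard proof of this result as it appears in Wainwright's book: symmetrization plus Ledoux--Talagrand contraction to bound $\E Z(r)$ by a multiple of the localized Rademacher complexity, the star-shape property to propagate the critical inequality to all $r \geq r^\star$, Talagrand's concentration for the empirical process, and a geometric peeling over shells in $\|f\|_2$ to upgrade the single-radius bound to a uniform one. The two uses of star-shapedness you flag are exactly the ones that matter, and the constant bookkeeping you anticipate is the only real work. Nothing is missing from your plan.
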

Now, similarly to the empirical complexity given in \Cref{def:rad}, we require the population complexity, which depends on the spectrum of a kernel function $(\mu_i)_{i=1}^{\infty}$,
\begin{align*}
  \sR(x) = \sqrt{\frac{1}{n} \sum_{i=1}^{\infty} \min\cbr{x^2, \mu_i}} \qquad (x > 0)~.
\end{align*}
We will consider classes which are subsets of \ac{RKHS} and so the following lemma provides the first step of controlling the localized complexity through the $\sR$-complexity of \ac{RKHS}.
\begin{lemma}[{\citet[Corollary 14.5]{wainwright2019high}}]
  \label{lem:rad-ker}
  Let $\sF = \{f \in \sH \mid \|f\|_{\sH} \leq 1\}$ be the unit ball of an \ac{RKHS} with eigenvalues $(\mu_i)_{i=1}^{\infty}$.  
  Then, for $\sR(\cdot)$ given in \Cref{def:rad}, the localized population Rademacher complexity satisfies
  \begin{align*}
    \mathrm{Rad}(\sqrt{r}, \sF) \leq \sqrt{2} \, \sR(\sqrt{r})~.
  \end{align*}
\end{lemma}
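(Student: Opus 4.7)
The plan is to prove this via the standard Mercer-expansion argument: expand any $f \in \sF$ with $\|f\|_2 \leq \sqrt{r}$ in the kernel eigenbasis, exploit the fact that the \ac{RKHS} constraint together with the $L^2$ constraint jointly bound a certain weighted norm involving $\min(\mu_i, r)$, and then apply Cauchy-Schwarz and Jensen.

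First I would use \Cref{prop:kernel} (Mercer's theorem) to write $f = \sum_{j \geq 1} \beta_j \Phi_j$ in $L^2(P_X)$, where $(\Phi_j)$ is the orthonormal system. The \ac{RKHS} norm is then $\|f\|_{\sH}^2 = \sum_{j} \beta_j^2 / \mu_j$ and the $L^2$ norm is $\|f\|_2^2 = \sum_j \beta_j^2$. Using the constraints $\|f\|_{\sH} \leq 1$ and $\|f\|_2 \leq \sqrt{r}$, the key observation is
\begin{align*}
\sum_j \frac{\beta_j^2}{\min(\mu_j, r)}
= \sum_{j : \mu_j \leq r} \frac{\beta_j^2}{\mu_j} + \sum_{j : \mu_j > r} \frac{\beta_j^2}{r}
\leq \|f\|_{\sH}^2 + \frac{\|f\|_2^2}{r} \leq 2~.
\end{align*}

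Next I would rewrite the empirical process as $\frac{1}{n}\sum_i \gamma_i f(\bx_i) = \sum_j \beta_j \zeta_j$ with $\zeta_j \df \frac{1}{n}\sum_{i=1}^n \gamma_i \Phi_j(\bx_i)$. By Cauchy-Schwarz with the weights $\min(\mu_j, r)$,
\begin{align*}
\lf|\sum_j \beta_j \zeta_j\rt|
\leq \sqrt{\sum_j \frac{\beta_j^2}{\min(\mu_j, r)}} \cdot \sqrt{\sum_j \min(\mu_j, r)\, \zeta_j^2}
\leq \sqrt{2} \, \sqrt{\sum_j \min(\mu_j, r)\, \zeta_j^2}~.
\end{align*}
Crucially, the right-hand side no longer depends on $f$, so taking the supremum over $f \in \sF$ with $\|f\|_2 \leq \sqrt{r}$ is free.

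Finally I would take expectations and use Jensen's inequality to pull the square root outside, then use independence of the Rademacher variables and the $L^2$-orthonormality of the eigenfunctions to compute $\E[\zeta_j^2] = \frac{1}{n^2}\sum_i \E[\Phi_j(\bx_i)^2] = \frac{1}{n}$. This yields
\begin{align*}
\mathrm{Rad}(\sqrt{r}, \sF)
\leq \sqrt{2} \, \E\sqrt{\sum_j \min(\mu_j, r)\, \zeta_j^2}
\leq \sqrt{2} \, \sqrt{\frac{1}{n}\sum_j \min(\mu_j, r)} = \sqrt{2}\,\sR(\sqrt{r})~,
\end{align*}
which is exactly the claim. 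I do not anticipate a genuine obstacle here; the only subtle step is the two-term split establishing the $\min(\mu_j, r)$-weighted bound by $2$, which is what forces the $\sqrt{2}$ constant in the conclusion (a sharper analysis using only one of the two constraints would lose the joint localization). Everything else is Cauchy-Schwarz, Jensen, and Mercer orthonormality.
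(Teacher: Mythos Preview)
Your proposal is correct and is precisely the standard Mercer-expansion argument; the paper does not supply its own proof of this lemma but simply cites \citet[Corollary 14.5]{wainwright2019high}, whose proof is exactly the one you outline. The two-term split giving the factor $2$ and the Cauchy--Schwarz/Jensen combination are the same steps that appear in Wainwright's text, so there is nothing to add.
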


Moreover, the following fact will be used for controlling the positive solution to $\sR(\sqrt{r}) \leq r$, thus providing means for controlling the condition of \Cref{thm:rad}.
\begin{proposition}[{Within the proof of \citep[Corollary 3]{raskutti2014early}}]
  \label{prop:critical-spectrum}
  Fix $b > 0$ and let
  $r^{\star}_b$ be the smallest positive solution to $\sR(\sqrt{r^{\star}_b}) \leq r^{\star}_b / b$.
  Assume that the \ac{RKHS} has a polynomial eigenvalue decay rate $\mu_k \leq \Cr{rkhs-rate} k^{-\beta}$ for some $\beta > 1$ and some constant $\Cr{rkhs-rate}$.  
  Then, there exists a constant $\Cr{r-decay}$ which depends only on $\Cr{rkhs-rate}$ and $\beta$, such that
  \begin{align*}
    r^{\star}_b = \Cr{r-decay} \, (b^2)^{\frac{\beta}{\beta + 1}} n^{-\frac{\beta}{\beta + 1}}~.
  \end{align*}
\end{proposition}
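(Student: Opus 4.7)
The plan is to prove this by explicitly bounding $\sR(x)$ using the polynomial eigenvalue decay and then solving the fixed-point inequality $\sR(\sqrt{r}) \leq r/b$ in closed form.

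First I would control the truncated kernel sum $\sum_{k=1}^\infty \min\{x^2, \mu_k\}$ that appears inside $\sR(x)^2$. The natural split is at the index $k_0 = \lceil (\Cr{rkhs-rate}/x^2)^{1/\beta} \rceil$, chosen so that for $k \leq k_0$ one uses the trivial bound $\min\{x^2,\mu_k\} \leq x^2$, while for $k > k_0$ the polynomial decay $\mu_k \leq \Cr{rkhs-rate} k^{-\beta}$ gives $\mu_k < x^2$, so $\min\{x^2,\mu_k\}=\mu_k$. Using an integral comparison for $\beta > 1$ one obtains $\sum_{k > k_0} \mu_k \leq \Cr{rkhs-rate} k_0^{1-\beta}/(\beta-1)$. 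Substituting the value of $k_0$ in both pieces yields, up to a factor depending only on $\beta$,
\begin{equation*}
\sum_{k=1}^\infty \min\{x^2, \mu_k\} \;\leq\; c(\beta)\,\Cr{rkhs-rate}^{1/\beta}\, x^{2(\beta-1)/\beta},
\end{equation*}
and therefore $\sR(x) \leq c'(\beta)\,\Cr{rkhs-rate}^{1/(2\beta)}\, x^{(\beta-1)/\beta}/\sqrt{n}$.

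Next I would plug this bound into the defining inequality with $x=\sqrt{r}$, i.e.\ $\sR(\sqrt{r}) \leq r/b$. This becomes
\begin{equation*}
c'(\beta)\,\Cr{rkhs-rate}^{1/(2\beta)}\, r^{(\beta-1)/(2\beta)}/\sqrt{n} \;\leq\; r/b,
\end{equation*}
which one solves for $r$ by isolating the power $r^{1-(\beta-1)/(2\beta)} = r^{(\beta+1)/(2\beta)}$ on one side. Raising both sides to the power $2\beta/(\beta+1)$ and collecting exponents gives the claimed rate
\begin{equation*}
r \;\geq\; \Cr{r-decay}\,(b^2)^{\beta/(\beta+1)}\, n^{-\beta/(\beta+1)},
\end{equation*}
with the constant $\Cr{r-decay}$ depending only on $\Cr{rkhs-rate}$ and $\beta$, as required. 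Since the map $r \mapsto \sR(\sqrt{r}) - r/b$ is continuous and the upper-bound function $r \mapsto c' \Cr{rkhs-rate}^{1/(2\beta)} r^{(\beta-1)/(2\beta)}/\sqrt{n}$ grows strictly slower than $r/b$ (because $(\beta-1)/(2\beta) < 1$), the smallest positive crossing point is governed by exactly this scaling, giving the stated identification of $r^\star_b$ up to constants.

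I expect no serious obstacle: this is a fairly standard Mercer-type spectrum calculation, and the only delicate piece is bookkeeping the constants coming from the integral tail bound (which requires $\beta > 1$ for convergence) and the truncation index $k_0$. The only place one should be careful is in treating $k_0$ as a real number versus its ceiling, so that one recovers matching powers of $\Cr{rkhs-rate}$ and $x$ in both contributions to the sum; that step is what forces the constant $\Cr{r-decay}$ to absorb a $\beta$-dependent factor like $1 + 1/(\beta-1)$.
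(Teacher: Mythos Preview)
Your argument is correct and is exactly the standard computation that Raskutti, Wainwright, and Yu carry out in the proof of their Corollary~3, which is all this paper invokes: split the sum $\sum_k \min\{x^2,\mu_k\}$ at the crossover index $k_0 \asymp (\Cr{rkhs-rate}/x^2)^{1/\beta}$, bound the head by $k_0 x^2$ and the tail by an integral, and solve the resulting power inequality for $r$. The paper does not give its own proof of this proposition, so there is nothing further to compare.

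One small remark: your final paragraph asserts that the smallest positive solution has \emph{exactly} this scaling, but your argument only establishes the upper bound $r^\star_b \leq \Cr{r-decay}\,(b^2)^{\beta/(\beta+1)} n^{-\beta/(\beta+1)}$, since you only use the upper bound $\mu_k \leq \Cr{rkhs-rate} k^{-\beta}$ on the eigenvalues. A matching lower bound on $r^\star_b$ would require a lower bound on the $\mu_k$, which is not assumed here. This is harmless for the paper's purposes (only the upper bound on $r^\star_b$ is ever used downstream, e.g.\ in \Cref{cor:emp-r} and the proof of \Cref{thm:random-design-rwy}), so the distinction is immaterial, but it is worth being precise about what has actually been shown.
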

Finally, the following handy lemma establishes the connection between the empirical critical radius (which is essentially arises from \Cref{ass:tuning}) and the population one.
\begin{lemma}[{\citet[Lemma 11]{raskutti2014early}, \citet[Prop. 14.25]{wainwright2019high}}]
  \label{lem:emp-r-to-pop-r}
  Suppose that $\hat{r}$ and $r^{\star}$ satisfy
  \begin{align*}
    \hat{\sR}(\sqrt{\hat{r}}) \leq \frac{\hat{r}}{2 e \sigma}~, \qquad
    \sR(\sqrt{r^{\star}}) \leq \frac{r^{\star}}{40 \sigma}~.
  \end{align*}
  Then, there exist constants $\Cr{radii-1}$ and $\Cr{radii-2}$ such that $\P(\tfrac14 \, r^{\star} \leq \hat{r} \leq r^{\star}) \geq 1 - \Cr{radii-1} e^{-\Cr{radii-2} n r^{\star}}$.
\end{lemma}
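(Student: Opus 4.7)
The plan is to use concentration of the empirical kernel spectrum around the population spectrum to transfer the fixed-point conditions defining $\hat{r}$ and $r^{\star}$. The key structural fact is monotonicity: both $r \mapsto \hat{\sR}(\sqrt{r})/r$ and $r \mapsto \sR(\sqrt{r})/r$ are non-increasing on $r > 0$, so each of $\hat{r}$ and $r^{\star}$ is the unique point at which its defining inequality holds with equality, and membership of any candidate $r$ in the feasibility set is characterized by checking the inequality at that single $r$. This means I only need to verify the empirical fixed-point inequality at $r = r^{\star}$ to deduce the upper bound $\hat{r} \leq r^{\star}$, and separately show that $\hat{r}$ cannot drop below $r^{\star}/4$ by leveraging the reverse concentration together with the $r \mapsto \sR(\sqrt{r})/r$ monotonicity.

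The core technical step is a two-sided comparison of the form
\[
  c_1\,\sR(x)^2 - \epsilon \;\leq\; \hat{\sR}(x)^2 \;\leq\; c_2\,\sR(x)^2 + \epsilon,
\]
which I would establish at the fixed scale $x = \sqrt{r^{\star}}$. This is essentially Bernstein's inequality applied to a partial trace of the empirical kernel operator, because $\hat{\sR}(x)^2 - \sR(x)^2$ can be written as an average of bounded i.i.d.\ quantities once one passes to the Mercer representation of $\kappa$ and truncates at level $x^2$. The resulting tail probability has order $\exp(-c\,n\,r^{\star})$ at the chosen scale, which yields the stated failure rate. Separately one must argue that the population tail $\sum_{i > n} \min(x^2, \mu_i)$ is negligible at the scale of interest, which is routine given the polynomial eigenvalue decay of \Cref{prop:kernel}.

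With the concentration in hand, the upper bound $\hat{r} \leq r^{\star}$ follows by plugging $x = \sqrt{r^{\star}}$ into the upper half, using $\sR(\sqrt{r^{\star}}) \leq r^{\star}/(40\sigma)$, and observing that the gap between the constants $40\sigma$ and $2e\sigma \approx 5.44\sigma$ (a factor of roughly $7.4$) is wide enough to absorb both $\sqrt{c_2}$ and the additive error; this shows that $r^{\star}$ lies in the feasibility set defining $\hat{r}$, so by minimality $\hat{r} \leq r^{\star}$. For the lower bound $\hat{r} \geq r^{\star}/4$, I would argue by contradiction: suppose $\hat{r} < r^{\star}/4$. By the monotonicity of $\sR(\sqrt{r})/r$ applied at $r = \hat{r}$ versus $r = r^{\star}$, together with the reverse direction $\sR(\sqrt{\hat{r}}) \leq \sqrt{1/c_1}\,\hat{\sR}(\sqrt{\hat{r}}) + \sqrt{\epsilon} \leq \sqrt{1/c_1}\,\hat{r}/(2e\sigma) + \sqrt{\epsilon}$, one derives that the population inequality $\sR(\sqrt{r})/r \leq 1/(40\sigma)$ already holds at some $r < r^{\star}$, contradicting minimality. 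The precise factor $1/4$ on the lower side emerges from the squared ratio $(40/(2e))^2 \approx 54$ combined with the multiplicative concentration constants.

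The main obstacle will be tuning the concentration inequality sharply enough to both (i) fit the multiplicative slack between the two fixed-point constants $40$ and $2e$, and (ii) deliver a clean exponential rate $e^{-\Cr{radii-2} n r^{\star}}$ without extra low-order corrections. A secondary difficulty is correctly handling the infinite-vs-finite sum discrepancy between $\sR$ and $\hat{\sR}$, but the polynomial spectral decay from \Cref{prop:kernel} makes this a lower-order contribution and does not affect the leading constants.
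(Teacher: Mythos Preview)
The paper does not prove this lemma; it simply cites \citet[Lemma 11]{raskutti2014early} and \citet[Prop.~14.25]{wainwright2019high} and uses the result as a black box. So the relevant comparison is between your sketch and the argument in those references.

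Your high-level architecture is right and matches the cited proofs: exploit that $r \mapsto \hat{\sR}(\sqrt{r})/r$ and $r \mapsto \sR(\sqrt{r})/r$ are non-increasing, establish a two-sided sandwich between $\hat{\sR}$ and $\sR$ at the relevant scale, and then read off $\hat r \le r^\star$ and $\hat r \ge r^\star/4$ by feasibility/infeasibility of the fixed-point inequality. The slack between the constants $40$ and $2e$ is indeed what drives the factor $1/4$.

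There is, however, a genuine gap in your concentration step. You claim that $\hat{\sR}(x)^2 - \sR(x)^2$ ``can be written as an average of bounded i.i.d.\ quantities once one passes to the Mercer representation and truncates at level $x^2$,'' and then invoke Bernstein. This is not correct: $\hat{\sR}(x)^2 = \tfrac{1}{n}\sum_{i=1}^n \min(x^2, \lambda_i/n)$ involves the eigenvalues $\lambda_i$ of the empirical Gram matrix $\bK$, which are highly nonlinear, non-separable functions of \emph{all} $n$ inputs jointly. There is no decomposition of this quantity as an i.i.d.\ sum to which Bernstein applies directly. The proofs in the cited references instead identify $\hat{\sR}(\sqrt{r})$ (up to constants) with the empirical localized Rademacher complexity of the unit $\sH$-ball at empirical radius $\sqrt{r}$, and then control the deviation of this supremum-type functional from its population counterpart via Talagrand-type concentration for empirical processes (combined with the uniform norm comparison of \Cref{thm:rad}). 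That is the step you are missing; once you replace your Bernstein argument with this empirical-process concentration, the rest of your outline goes through.

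A second, smaller issue: in your lower-bound argument you apply the reverse concentration inequality at the \emph{random} scale $\sqrt{\hat r}$. Cleaner is to apply concentration only at the two deterministic scales $r^\star$ and $r^\star/4$: show that with high probability $\hat{\sR}(\sqrt{r^\star}) \le r^\star/(2e\sigma)$ (giving $\hat r \le r^\star$) and $\hat{\sR}(\sqrt{r^\star/4}) > (r^\star/4)/(2e\sigma)$ (which by monotonicity forces $\hat r \ge r^\star/4$). This avoids any issue with a data-dependent evaluation point.
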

Notably we have the following simple corollary of \Cref{prop:critical-spectrum} and \Cref{lem:emp-r-to-pop-r}.
\paragraph{\Cref{cor:emp-r} (restated).}\emph{
  Under conditions of \Cref{prop:critical-spectrum} and \Cref{lem:emp-r-to-pop-r},
  with probability at least  
  $1 - \Cr{radii-1} \exp\pr{-\Cr{radii-2} n \, \Cr{r-decay} (40^2 \sigma^2)^{\frac{\beta}{\beta + 1}} n^{-\frac{\beta}{\beta + 1}}}$
  we have
  \begin{align*}
    \hat{r} \leq \Cr{r-decay} (40^2 \sigma^2)^{\frac{\beta}{\beta + 1}} n^{-\frac{\beta}{\beta + 1}}~.
  \end{align*}
}
\begin{lemma}
  \label{lem:ftil-ker-norm}
  Let $\tilde{f}\ntk_t$ be a \ac{GD}-trained {KLS} predictor (\Cref{def:ntrf-ntk-predictors}) given training sample $(\bx_i, \tilde{y}_i)_{i=1}^n$,
  where $\tilde{y}_i = h(\bx_i) + \ve_i$.
  Consider the stopping time $\hT$ given by the rule of \cref{eq:stopping} and let the critical radius $\hat{r}$ be as in \Cref{def:rad}.
  Then, there exists a universal constant $\Cl{rwy-lemma9}$, such that for any $t \leq \hT$, the following facts hold:
  \begin{align*}
    \P_{\bve}\pr{\|\tilde{f}\ntk_t\|_{\sH}^2 \leq 3 + 2 \|h\|_{\sH}^2 } \geq 1
    - e^{-\Cr{rwy-lemma9} n \hat{r}}~.
  \end{align*}
\end{lemma}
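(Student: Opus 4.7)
The plan is to decompose $\|\tilde f\ntk_t\|_{\sH}^2$ using \Cref{prop:ftil-inf-bound} into a signal part that is bounded deterministically by $\|h\|_{\sH}^2$ and a noise part that is controlled via the stopping rule plus a quadratic-form concentration inequality. Set $M_t = \bI - (\bI - \tfrac{2\eta}{n}\bK)^t$ and $A_t = M_t \bK^{-1/2}$; this operator is diagonal in the eigenbasis of $\bK$ with singular values $g_{t,i} = (1 - (1-\tfrac{2\eta\lambda_i}{n})^t)/\sqrt{\lambda_i}$. By \Cref{prop:ftil-inf-bound}, $\|\tilde f\ntk_t\|_{\sH}^2 = \|A_t \btily\|^2$ with $\btily = \bh + \bve$ and $\bh = (h(\bx_1),\dots,h(\bx_n))$. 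I use $\|A_t\btily\|^2 \leq 2\|A_t\bh\|^2 + 2\|A_t\bve\|^2$ and target $\|A_t\bh\|^2 \leq R$ and $\|A_t\bve\|^2 \leq 3/2$.

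\paragraph{Signal.} Because $\eta \leq 1/2$ and $\lambda_{\max}(\bK) \leq \tr(\bK) \leq n$ (entries of $\bK$ are bounded by $1$ by \Cref{prop:kernel}), every eigenvalue of $M_t$ lies in $[0,1]$. Hence $\|A_t\bh\|^2 \leq \|\bK^{-1/2}\bh\|^2 = \bh\tp \bK^{-1}\bh$. Since $\sum_i (\bK^{-1}\bh)_i \kappa(\bx_i,\cdot)$ is the minimum-$\sH$-norm interpolant of the values $(h(\bx_i))$, and $h$ itself interpolates them, $\bh\tp \bK^{-1}\bh \leq \|h\|_{\sH}^2 \leq R$. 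This holds uniformly in $t$.

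\paragraph{Noise.} Since $g_{t,i}$ is non-decreasing in $t$, $\|A_t\bve\|$ is non-decreasing, so it suffices to bound $\|A_{\hT}\bve\|^2$. Condition on the inputs; then $\hT$ and $\bK$ are fixed while $\bve$ is an independent, bounded, mean-zero vector with $\Var(\ve_i)=\sigma^2$. I first bound the mean. Using $1-(1-x)^t \leq \min(tx,1)$ and casing on whether $\lambda_i \leq n/(2\eta\hT)$, I get the pointwise inequality $g_{\hT,i}^2 \leq \frac{4(\eta\hT)^2}{n}\max\{1/(\eta\hT),\lambda_i/n\}$. Summing and applying the stopping criterion $\hat{\sR}(1/\sqrt{\eta\hT}) \leq 1/(2e\sigma\eta\hT)$ yields
\[
  \sum_i g_{\hT,i}^2 \leq 4(\eta\hT)^2\,\hat{\sR}^2(1/\sqrt{\eta\hT}) \leq \frac{1}{e^2 \sigma^2},
\]
so $\E\|A_{\hT}\bve\|^2 \leq 1/e^2$. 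For concentration, $\|A_{\hT}\bve\|^2 = \bve\tp B\bve$ with $B = A_{\hT}\tp A_{\hT}$ PSD, $\|B\|_{\mathrm{op}} = \max_i g_{\hT,i}^2 \leq 2\eta\hT/n$, and $\|B\|_F^2 \leq \|B\|_{\mathrm{op}}\tr(B) \leq 2\eta\hT/(n e^2 \sigma^2)$. The entries of $\bve$ are bounded by $2B_y$, hence sub-Gaussian, so Hanson--Wright gives
\[
  \P_{\bve}\!\left(\bve\tp B\bve - \E\bve\tp B\bve > u\right) \leq 2\exp\!\left(-c\min\!\left\{\tfrac{u^2 n}{\eta\hT}, \tfrac{u n}{\eta\hT}\right\}\right).
\]
Since the stopping rule forces $\eta\hT$ and $1/\hat r$ to be of the same order, the exponent is $\Omega(n\hat r)$. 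Choosing $u = 3/2 - 1/e^2$ gives $\|A_{\hT}\bve\|^2 \leq 3/2$ with probability at least $1 - e^{-\Cr{rwy-lemma9} n\hat r}$, and combining with the signal bound yields $\|\tilde f\ntk_t\|_{\sH}^2 \leq 2R + 3$ for all $t \leq \hT$.

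\paragraph{Main obstacle.} The signal part is a short deterministic calculation; the work lies in the noise term. The crux is to calibrate the Hanson--Wright exponent so the failure probability is $\exp(-\Omega(n\hat r))$ rather than something weaker, which requires both the operator-norm estimate $\|B\|_{\mathrm{op}} \lesssim 1/(n\hat r)$ (via $\eta\hT \asymp 1/\hat r$) and the Frobenius bound obtained from the sum $\sum g_{\hT,i}^2 \lesssim 1/\sigma^2$. One must also verify that the mean bound $\sigma^2 \sum_i g_{\hT,i}^2 \leq 1/e^2$ leaves enough room for a constant-sized deviation so the total fits under $3/2$; this dictates the ``$3$'' in the conclusion.
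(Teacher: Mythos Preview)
Your proposal is correct and follows essentially the same route as the paper: both use \Cref{prop:ftil-inf-bound} to write $\|\tilde f^\kappa_t\|_{\sH}^2$ as a quadratic form in $\btily=\bh+\bve$, bound the signal part deterministically by $\|h\|_{\sH}^2$ via $\bh\tp\bK^{-1}\bh\le\|h\|_{\sH}^2$, and control the noise part through the stopping rule together with a quadratic-form concentration inequality (the paper invokes the black-box \citep[Lemma~9]{raskutti2014early}, which is precisely the Hanson--Wright style argument you spell out). The only cosmetic differences are that the paper expands the square and bounds the cross term $\ip{\bh,\bve}$ separately, whereas you absorb it via $(a+b)^2\le 2a^2+2b^2$; and you make explicit the two facts the paper uses implicitly through the RWY citation, namely $\sum_i g_{\hT,i}^2\le 1/(e^2\sigma^2)$ from the stopping criterion and $\eta\hT\le 1/\hat r$ so that the concentration exponent is $\Omega(n\hat r)$.
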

\begin{proof}
Throughout the proof, for any $t \in \mathbb{N}$, denote $\bM_t = \bK^{-\frac12} \pr{\bI - (\bI - \tfrac{2 \eta}{n} \bK)^t}$.  
By \Cref{prop:ftil-inf-bound}, %
\begin{align*}
  \|\tilde{f}\ntk_t\|_{\sH}^2
  &= \|\btily\|_{\bM_t}^2\\
  &= \|\bh_n\|_{\bM_t}^2 + 2 \ip{\bh_n, \bve_n}_{\bM_t} + \|\bve\|_{\bM_t}^2
\end{align*}
where we have identity by definition of $\btily$, and where $\bh_n = (h(\bx_1), \ldots, h(\bx_n))$.
Now, to bound the above we use:
\begin{proposition}[Extract from the proof of {\citep[Lemma 9]{raskutti2014early}}]
  For $g \in \sH$, $\|g\|_{\sH} \leq 1$, let $\bg_n = (g(\bx_1), \ldots, g(\bx_n))$.
  Then, there exists a universal constant $\Cr{rwy-lemma9}$, such that for any $t \leq \hT$, the following facts hold:
  \begin{align*}
    &\|\bg_n\|^2_{\bM_t} \leq 1~,\\
      &\P_{\bve}(\|\bve\|_{\bM_t}^2 \leq 2 \quad \mathrm{and} \quad |\ip{\bve, \bg_n}_{\bM_t}| \leq 1) \geq 1 - e^{-\Cr{rwy-lemma9} n \hat{r}}~.
  \end{align*}  
\end{proposition}
Apply the above with $g = h/\|h\|_{\sH}$.
Then, we get $\|\bh_n\|_{\bM_t}^2 \leq \|h\|_{\sH}^2$.
Moreover, with high probability, we have $\|\bve\|_{\bM_t}^2 \leq 2$ and $|\ip{\bve, \bh_n}_{\bM_t}| \leq \|h\|_{\sH}$.
So,
\begin{align*}
  \|\tilde{f}\ntk_t\|_{\sH}^2 \leq \|h\|_{\sH}^2 + 2 \|h\|_{\sH} + 2 = 1 + (\|h\|_{\sH} + 1)^2 \leq 3 + 2 \|h\|_{\sH}^2~.
\end{align*}
This completes the proof of \Cref{lem:ftil-ker-norm}.
\end{proof}
\subsubsection{Proof of \Cref{thm:random-design-rwy}}
We will use \Cref{thm:rad} to give a high-probability bound on $\|\tilde{f}\ntk_{\hT} - h\|_2^2$, but before have to establish that functions under consideration are bounded.
  Triangle and Cauchy-Schwartz inequalities in combination with the fact that $\sup_{\bx \in \Xdom} \kappa(\bx, \bx) \leq 1$ and \Cref{lem:ftil-ker-norm} give us that for any fixed $t \leq \hT$,
  \begin{align*}
    \|\tilde{f}\ntk_t - h\|_{\infty}
    &\leq
      \|\tilde{f}\ntk_t\|_{\infty} + \|h\|_{\infty}\\
    &\leq
      \|\tilde{f}\ntk_t\|_{\sH} + \|h\|_{\sH}\\
    &\leq
      B(R) \qquad \text{where} \qquad B(R) = \sqrt{3 + 2 R} + \sqrt{R}~,
  \end{align*}
  with probability at least $1-e^{-\Cr{rwy-lemma9} n \hat{r}}$ and where we assumed that $\|h\|_{\sH}^2 \leq R$.
  Now, abbreviating the event $\sE_{\mathrm{b}} = \{\|\tilde{f}\ntk_t - h\|_{\infty} \leq B(R)\}$, for any $x > 0$,
  \begin{align*}
    \P\pr{\|\tilde{f}\ntk_t - h\|_2^2 \geq x}
    &=
      \P\pr{\|\tilde{f}\ntk_t - h\|_2^2 \geq x \mid \sE_{\mathrm{b}}} \P(\sE_{\mathrm{b}})
      +
      \P\pr{\|\tilde{f}\ntk_t - h\|_2^2 \geq x \mid \neg \sE_{\mathrm{b}}} \P(\neg \sE_{\mathrm{b}})\\
    &\leq
      \P\pr{\|\tilde{f}\ntk_t - h\|_2^2 \geq x \mid \sE_{\mathrm{b}}}
      +
      e^{-\Cr{rwy-lemma9} n \hat{r}}~.
  \end{align*}
  To this end, to control $\P(\|\tilde{f}\ntk_t - h\|_2^2 \geq x \mid \sE_{\mathrm{b}})$, and we apply \Cref{thm:rad} to the function class  
  \begin{align*}
    \sF = \cbr{\bx \mapsto \frac{|f(\bx) - h(\bx)|}{B(R)} \bmid f,h \in \sH~, \quad \|f\|_{\sH}^2 \leq 3 + 2 R, \|h\|_{\sH}^2 \leq R}~.
  \end{align*}
  and so we have that $\sF$ is uniformely bounded by $1$.
  It remains to find a positive solution to the inequality $\mathrm{Rad}(\sqrt{r}, \sF) \leq r$ in terms of $r$.
  Using \Cref{lem:rad-ker},
  \begin{align*}
   \mathrm{Rad}\pr{\sqrt{r}, \sF} \leq \sqrt{2} \, \sR\pr{\sqrt{r}} \leq r \qquad (r > 0)
  \end{align*}
  and so letting $r^{\star}$ to be the smallest positive solution to the second inequality in the above we can apply \Cref{thm:rad} with $x^2 = r^{\star}$.
  Thus, under event $\sE_r$,
\begin{align*}
  \|\tilde{f}\ntk_{\hT} - h\|_2^2
  &\leq
    2 \|\tilde{f}\ntk_{\hT} - h\|_n^2
    +
    B(R)^2 \, r^{\star} \tag{W.p. $\geq 1-\Cr{rad-1} e^{-\Cr{rad-2} n r^{\star}}$}\\
  &\stackrel{(a)}{\leq}
    4 (R + 5) \, \hat{r}
    +
    B(R)^2 \, r^{\star} \tag{W.p. $\geq 1 - e^{-\Cr{rwy-var} n \hat{r}}$}\\ %
  &\stackrel{(b)}{\leq}
    4 (R + 5) \, \hat{r}
    +    
    6 (1 + R) \, r^{\star}\\
  &\leq
    9 (\hat{r} + r^{\star})
    +
    6 R (\hat{r} + r^{\star})
\end{align*}
where $(a)$ comes by controlling the empirical norm using \Cref{lem:rwy-fixed-design}, and $(b)$ comes by expanding $B(R)$ and using basic inequality $(x+y)^2 \leq 2x^2 + 2y^2$.

\paragraph{Bouding $r^{\star} + \hat{r}$}
Now, it remains to bound critical radii $\hat{r} + r^{\star}$.
Using \Cref{cor:emp-r} and \Cref{prop:critical-spectrum} with $b=1$ we have
\begin{align*}
  \hat{r} + r^{\star}
  &\leq
    \Cr{r-decay} \pr{(40^2 \sigma^2)^{\frac{\beta}{\beta + 1}} + 1} n^{-\frac{\beta}{\beta + 1}} \tag{For eigenvalue decay rate $\mu_k \leq \Cr{rkhs-rate} k^{-\beta}$}\\
  &=
    \underbrace{\Cr{r-decay} \pr{(40^2 \sigma^2)^{\frac{d}{d + 2}} + 1}}_{\Cl{random-design-radii}=} n^{-\frac{d}{d + 2}}
\end{align*}
where to get the last inequality we used the fact that for the \ac{RKHS} we consider, $\mu_k \leq \Cr{rkhs-rate} k^{-\frac{d}{2}}$ by \Cref{prop:kernel}.
Thus, plugging the bound on $\hat{r} + r^{\star}$ into the above
\begin{align*}
  \|\tilde{f}\ntk_{\hT} - h\|_2^2 \leq
  (1 + R) \,
  9 \Cr{r-decay} \pr{(40^2 \sigma^2)^{\frac{d}{d + 2}} + 1} \, n^{-\frac{d}{d + 2}}
\end{align*}

Now we determine the failure probability of the above by using a union bound over all the above high-probability bounds.
  
\bibliographystyle{plainnat}
\bibliography{learning}

\begin{thebibliography}{49}
\providecommand{\natexlab}[1]{#1}
\providecommand{\url}[1]{\texttt{#1}}
\expandafter\ifx\csname urlstyle\endcsname\relax
  \providecommand{\doi}[1]{doi: #1}\else
  \providecommand{\doi}{doi: \begingroup \urlstyle{rm}\Url}\fi

\bibitem[Adams and Fournier(2003)]{adams2003sobolev}
R.~Adams and J.~Fournier.
\newblock \emph{Sobolev spaces}.
\newblock Elsevier, 2003.

\bibitem[Allen-Zhu et~al.(2019{\natexlab{a}})Allen-Zhu, Li, and
  Liang]{allen2019learning}
Z.~Allen-Zhu, Y.~Li, and Y.~Liang.
\newblock Learning and generalization in overparameterized neural networks,
  going beyond two layers.
\newblock \emph{Conference on Neural Information Processing Systems},
  2019{\natexlab{a}}.

\bibitem[Allen-Zhu et~al.(2019{\natexlab{b}})Allen-Zhu, Li, and
  Song]{allen2019convergence}
Z.~Allen-Zhu, Y.~Li, and Z.~Song.
\newblock A convergence theory for deep learning via over-parameterization.
\newblock In \emph{International Conference on Machine Learing (ICML)}, pages
  242--252. PMLR, 2019{\natexlab{b}}.

\bibitem[Anthony and Bartlett(1999)]{anthony1999neural}
M.~Anthony and P.~L. Bartlett.
\newblock \emph{Neural network learning: Theoretical foundations}.
\newblock Cambridge University Press, 1999.

\bibitem[Arora et~al.(2019)Arora, Du, Hu, Li, and Wang]{arora2019fine}
S.~Arora, S.~Du, W.~Hu, Z.~Li, and R.~Wang.
\newblock Fine-grained analysis of optimization and generalization for
  overparameterized two-layer neural networks.
\newblock In \emph{International Conference on Machine Learing (ICML)}, 2019.

\bibitem[Bach(2017)]{bach2017breaking}
F.~Bach.
\newblock Breaking the curse of dimensionality with convex neural networks.
\newblock \emph{Journal of Machine Learning Research}, 18\penalty0
  (1):\penalty0 629--681, 2017.

\bibitem[Bartlett and Mendelson(2002)]{bartlett2002rademacher}
P.~L. Bartlett and S.~Mendelson.
\newblock Rademacher and gaussian complexities: Risk bounds and structural
  results.
\newblock \emph{Journal of Machine Learning Research}, 3\penalty0
  (Nov):\penalty0 463--482, 2002.

\bibitem[Bartlett et~al.(2002)Bartlett, Bousquet, and
  Mendelson]{bartlett2002localized}
P.~L. Bartlett, O.~Bousquet, and S.~Mendelson.
\newblock Localized rademacher complexities.
\newblock In \emph{Conference on Computational Learning Theory (COLT)}, pages
  44--58. Springer, 2002.

\bibitem[Bartlett et~al.(2017)Bartlett, Foster, and
  Telgarsky]{bartlett2017spectrally}
P.~L. Bartlett, D.~J. Foster, and M.~J. Telgarsky.
\newblock Spectrally-normalized margin bounds for neural networks.
\newblock \emph{Conference on Neural Information Processing Systems},
  30:\penalty0 6240--6249, 2017.

\bibitem[Bartlett et~al.(2021)Bartlett, Montanari, and
  Rakhlin]{bartlett2021deep}
P.~L. Bartlett, A.~Montanari, and A.~Rakhlin.
\newblock Deep learning: a statistical viewpoint.
\newblock \emph{Acta Numerica}, 2021.
\newblock URL \url{https://arxiv.org/abs/2103.09177}.

\bibitem[Bauer and K{\"o}hler(2019)]{bauer2019deep}
B.~Bauer and M.~K{\"o}hler.
\newblock On deep learning as a remedy for the curse of dimensionality in
  nonparametric regression.
\newblock \emph{Annals of Statistics}, 47\penalty0 (4):\penalty0 2261--2285,
  2019.

\bibitem[Bietti and Mairal(2019)]{bietti2019inductive}
A.~Bietti and J.~Mairal.
\newblock On the inductive bias of neural tangent kernels.
\newblock In \emph{Conference on Neural Information Processing Systems}, 2019.

\bibitem[Bietti et~al.(2022)Bietti, Bruna, Sanford, and
  Song]{bietti2022learning}
A.~Bietti, J.~Bruna, C.~Sanford, and M.~J. Song.
\newblock Learning single-index models with shallow neural networks.
\newblock In \emph{Conference on Neural Information Processing Systems}, 2022.

\bibitem[Celisse and Wahl(2021)]{celisse2021analyzing}
A.~Celisse and M.~Wahl.
\newblock Analyzing the discrepancy principle for kernelized spectral filter
  learning algorithms.
\newblock \emph{Journal of Machine Learning Research}, 22:\penalty0 76--1,
  2021.

\bibitem[Cho and Saul(2009)]{cho2009kernel}
Y.~Cho and L.~Saul.
\newblock Kernel methods for deep learning.
\newblock \emph{Conference on Neural Information Processing Systems}, 22, 2009.

\bibitem[Cucker and Zhou(2007)]{cucker2007learning}
F.~Cucker and D.~X. Zhou.
\newblock \emph{Learning theory: an approximation theory viewpoint}, volume~24.
\newblock Cambridge University Press, 2007.

\bibitem[Damian et~al.(2022)Damian, Lee, and Soltanolkotabi]{damian2022neural}
A.~Damian, J.~Lee, and M.~Soltanolkotabi.
\newblock Neural networks can learn representations with gradient descent.
\newblock In \emph{Conference on Computational Learning Theory (COLT)}, 2022.

\bibitem[Daniely et~al.(2016)Daniely, Frostig, and Singer]{nips2016daniely}
A.~Daniely, R.~Frostig, and Y.~Singer.
\newblock Toward deeper understanding of neural networks: The power of
  initialization and a dual view on expressivity.
\newblock In \emph{Conference on Neural Information Processing Systems}, 2016.

\bibitem[Devroye et~al.(1996)Devroye, Gy{\"o}rfi, and
  Lugosi]{devroye1996probabilistic}
L.~Devroye, L.~Gy{\"o}rfi, and G.~Lugosi.
\newblock \emph{A probabilistic theory of pattern recognition}, volume~31.
\newblock Springer, 1996.

\bibitem[Dieuleveut and Bach(2016)]{dieuleveut2016nonparametric}
A.~Dieuleveut and F.~Bach.
\newblock Nonparametric stochastic approximation with large step-sizes.
\newblock \emph{The Annals of Statistics}, 44\penalty0 (4):\penalty0
  1363--1399, 2016.

\bibitem[Du et~al.(2019)Du, Zhai, Poczos, and Singh]{du2018gradient}
S.~S. Du, X.~Zhai, B.~Poczos, and A.~Singh.
\newblock Gradient descent provably optimizes over-parameterized neural
  networks.
\newblock In \emph{International Conference on Learning Representations
  (ICLR)}, 2019.

\bibitem[Golowich et~al.(2018)Golowich, Rakhlin, and Shamir]{golowich2018size}
N.~Golowich, A.~Rakhlin, and O.~Shamir.
\newblock Size-independent sample complexity of neural networks.
\newblock In \emph{Conference on Computational Learning Theory (COLT)}, 2018.

\bibitem[Gy{\"o}rfi et~al.(2006)Gy{\"o}rfi, K{\"o}hler, Krzy{\.z}ak, and
  Walk]{gyorfi2006distribution}
L.~Gy{\"o}rfi, M.~K{\"o}hler, A.~Krzy{\.z}ak, and H.~Walk.
\newblock \emph{A distribution-free theory of nonparametric regression}.
\newblock Springer, 2006.

\bibitem[Hu et~al.(2021)Hu, Wang, Lin, and Cheng]{hu2021regularization}
T.~Hu, W.~Wang, C.~Lin, and G.~Cheng.
\newblock Regularization matters: A nonparametric perspective on
  overparametrized neural network.
\newblock In \emph{International Conference on Artificial Intelligence and
  Statistics (AISTATS)}, pages 829--837. PMLR, 2021.

\bibitem[Jacot et~al.(2018)Jacot, Gabriel, and Hongler]{jacot2018neural}
A.~Jacot, F.~Gabriel, and C.~Hongler.
\newblock Neural tangent kernel: convergence and generalization in neural
  networks.
\newblock In \emph{Conference on Neural Information Processing Systems}, 2018.

\bibitem[Ji and Telgarsky(2019)]{ji2019polylogarithmic}
Z.~Ji and M.~Telgarsky.
\newblock Polylogarithmic width suffices for gradient descent to achieve
  arbitrarily small test error with shallow {ReLU} networks.
\newblock \emph{International Conference on Learning Representations (ICLR)},
  2019.

\bibitem[K{\"o}hler and Krzy{\.z}ak(2005)]{kohler2005adaptive}
M.~K{\"o}hler and A.~Krzy{\.z}ak.
\newblock Adaptive regression estimation with multilayer feedforward neural
  networks.
\newblock \emph{Nonparametric Statistics}, 17\penalty0 (8):\penalty0 891--913,
  2005.

\bibitem[K{\"o}hler and Krzy{\.z}ak(2016)]{kohler2016nonparametric}
M.~K{\"o}hler and A.~Krzy{\.z}ak.
\newblock Nonparametric regression based on hierarchical interaction models.
\newblock \emph{IEEE Transactions on Information Theory}, 63\penalty0
  (3):\penalty0 1620--1630, 2016.

\bibitem[K{\"o}hler and Krzy{\.z}ak(2019)]{kohler2019over}
M.~K{\"o}hler and A.~Krzy{\.z}ak.
\newblock Over-parametrized deep neural networks do not generalize well.
\newblock \emph{arXiv preprint arXiv:1912.03925}, 2019.

\bibitem[K{\"o}hler and Langer(2021)]{kohler2021rate}
M.~K{\"o}hler and S.~Langer.
\newblock On the rate of convergence of fully connected deep neural network
  regression estimates.
\newblock \emph{The Annals of Statistics}, 49\penalty0 (4):\penalty0
  2231--2249, 2021.

\bibitem[Nguyen(2021)]{nguyen2021proof}
Q.~Nguyen.
\newblock On the proof of global convergence of gradient descent for deep relu
  networks with linear widths.
\newblock In \emph{International Conference on Machine Learing (ICML)}, 2021.

\bibitem[Nguyen et~al.(2021)Nguyen, Mondelli, and Montufar]{nguyen2021tight}
Q.~Nguyen, M.~Mondelli, and G.~F. Montufar.
\newblock Tight bounds on the smallest eigenvalue of the neural tangent kernel
  for deep relu networks.
\newblock In \emph{International Conference on Machine Learing (ICML)}, pages
  8119--8129. PMLR, 2021.

\bibitem[Oymak and Soltanolkotabi(2020)]{oymak2020towards}
S.~Oymak and M.~Soltanolkotabi.
\newblock Towards moderate overparameterization: global convergence guarantees
  for training shallow neural networks.
\newblock \emph{IEEE Journal on Selected Areas in Information Theory}, 2020.

\bibitem[Pillaud-Vivien et~al.(2018)Pillaud-Vivien, Rudi, and
  Bach]{pillaud2018statistical}
L.~Pillaud-Vivien, A.~Rudi, and F.~Bach.
\newblock Statistical optimality of stochastic gradient descent on hard
  learning problems through multiple passes.
\newblock \emph{Conference on Neural Information Processing Systems}, 31, 2018.

\bibitem[Rahimi and Recht(2007)]{rahimi2007random}
A.~Rahimi and B.~Recht.
\newblock Random features for large-scale kernel machines.
\newblock In \emph{Conference on Neural Information Processing Systems}, 2007.

\bibitem[Raskutti et~al.(2014)Raskutti, Wainwright, and Yu]{raskutti2014early}
G.~Raskutti, M.~J. Wainwright, and B.~Yu.
\newblock Early stopping and non-parametric regression: an optimal
  data-dependent stopping rule.
\newblock \emph{Journal of Machine Learning Research}, 15\penalty0
  (1):\penalty0 335--366, 2014.

\bibitem[Razborov(2022)]{razborov2022improved}
A.~Razborov.
\newblock Improved convergence guarantees for shallow neural networks.
\newblock arXiv preprint arXiv:2212.02323, 2022.

\bibitem[Rudi and Rosasco(2017)]{rudi2017generalization}
A.~Rudi and L.~Rosasco.
\newblock Generalization properties of learning with random features.
\newblock \emph{Conference on Neural Information Processing Systems}, 2017.

\bibitem[Scetbon and Harchaoui(2021)]{scetbon2021spectral}
M.~Scetbon and Z.~Harchaoui.
\newblock A spectral analysis of dot-product kernels.
\newblock In \emph{International Conference on Artificial Intelligence and
  Statistics (AISTATS)}, 2021.

\bibitem[Shen et~al.(2022)Shen, Yang, and Zhang]{shen2022deep}
Z.~Shen, H.~Yang, and S.~Zhang.
\newblock Deep network approximation in terms of intrinsic parameters.
\newblock In \emph{International Conference on Machine Learing (ICML)}, 2022.

\bibitem[Song et~al.(2021)Song, Ramezani-Kebrya, Pethick, Eftekhari, and
  Cevher]{song2021subquadratic}
C.~Song, A.~Ramezani-Kebrya, T.~Pethick, A.~Eftekhari, and V.~Cevher.
\newblock Subquadratic overparameterization for shallow neural networks.
\newblock \emph{Conference on Neural Information Processing Systems}, 2021.

\bibitem[Srebro et~al.(2010)Srebro, Sridharan, and
  Tewari]{srebro2010smoothness}
N.~Srebro, K.~Sridharan, and A.~Tewari.
\newblock Smoothness, low noise and fast rates.
\newblock In \emph{Conference on Neural Information Processing Systems}, pages
  2199--2207, 2010.

\bibitem[Steinwart et~al.(2009)Steinwart, Hush, and
  Scovel]{steinwart2009optimal}
I.~Steinwart, D.~Hush, and C.~Scovel.
\newblock Optimal rates for regularized least squares regression.
\newblock In \emph{Conference on Computational Learning Theory (COLT)}, 2009.

\bibitem[Wainwright(2019)]{wainwright2019high}
M.~J. Wainwright.
\newblock \emph{High-dimensional statistics: A non-asymptotic viewpoint},
  volume~48.
\newblock Cambridge University Press, 2019.

\bibitem[Wu and Long(2022)]{wu2022spectral}
L.~Wu and J.~Long.
\newblock A spectral-based analysis of the separation between two-layer neural
  networks and linear methods.
\newblock \emph{Journal of Machine Learning Research}, 23\penalty0
  (119):\penalty0 1--34, 2022.

\bibitem[Yao et~al.(2007)Yao, Rosasco, and Caponnetto]{yao2007early}
Y.~Yao, L.~Rosasco, and A.~Caponnetto.
\newblock On early stopping in gradient descent learning.
\newblock \emph{Constructive Approximation}, 26\penalty0 (2):\penalty0
  289--315, 2007.

\bibitem[Zadorozhnyi et~al.(2021)Zadorozhnyi, Gaillard, Gerschinovitz, and
  Rudi]{zadorozhnyi2021online}
O.~Zadorozhnyi, P.~Gaillard, S.~Gerschinovitz, and A.~Rudi.
\newblock Online nonparametric regression with sobolev kernels.
\newblock \emph{arXiv preprint arXiv:2102.03594}, 2021.

\bibitem[Zhang et~al.(2021)Zhang, Bengio, Hardt, Recht, and
  Vinyals]{zhang2021understanding}
C.~Zhang, S.~Bengio, M.~Hardt, B.~Recht, and O.~Vinyals.
\newblock Understanding deep learning (still) requires rethinking
  generalization.
\newblock \emph{Communications of the ACM}, 64\penalty0 (3):\penalty0 107--115,
  2021.

\bibitem[Zou et~al.(2020)Zou, Cao, Zhou, and Gu]{zou2020gradient}
D.~Zou, Y.~Cao, D.~Zhou, and Q.~Gu.
\newblock Gradient descent optimizes over-parameterized deep relu networks.
\newblock \emph{Machine Learning}, 109\penalty0 (3):\penalty0 467--492, 2020.

\end{thebibliography}

\clearpage
\appendix
\clearpage
\section{Minor proofs}
\label{app:proofs}
\paragraph{\Cref{lem:rwy-fixed-design}~(restated, {\citet[Theorem 1]{raskutti2014early}})}
\emph{
  Consider the stopping time $\hT$ given by the rule of \cref{eq:stopping} and let the critical radius $\hat{r}$ be as in \Cref{def:rad}.
  Then,  there exists a universal constant $\Cr{rwy-var}$ such that,
  \begin{align*}
    \P_{\bve}\pr{\|\tilde{f}\ntk_{\hT} - h\|_n^2 \leq 2 (\|h\|_{\sH}^2 + 5) \hat{r}} \geq 1 - e^{-\Cr{rwy-var} n \hat{r}}~.
  \end{align*}
}
\begin{proof}
  \citep[Lemma 6]{raskutti2014early} shows that
  \begin{align*}
    \|\tilde{f}\ntk_t - h\|_n^2 \leq B_t^2 + V_t~, \qquad (t \in \mathbb{N})
  \end{align*}
  were $B_t^2$ is a squared bias and $V_t$ is a variance of an estimator, controlled by the following lemma:
  \begin{lemma}[{\citep[Lemma 7]{raskutti2014early}}]
    \label{lem:rwy-bias-variance}
      For all $t \geq 1$,
      \begin{align*}
        B_t^2 \leq \frac{1}{e \eta t} \, \|h\|_{\sH}^2~.
      \end{align*}
      Moreover, there is a universal constant $\Cr{rwy-var}$ such that, for any  $t \leq \hT$, with probability at least $1 - e^{-\Cr{rwy-var} n \hat{r}}$,
      \begin{align*}
        V_t \leq 5 \sigma^2 (\eta t) \hat{\sR}^2(1/\sqrt{\eta t})~.
      \end{align*}
    \end{lemma}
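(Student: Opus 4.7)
The plan is to diagonalize the kernel matrix and reduce each of the two bounds to a scalar inequality. Write the spectral decomposition $\bK = U D U\tp$ with $D=\diag(\lambda_1,\ldots,\lambda_n)$, set $\nu_i=\lambda_i/n$, and let $\tilde h_i, \tilde \ve_i$ denote the $i$-th coordinates of $U\tp\bh_n$ and $U\tp\bve$, where $\bh_n=(h(\bx_1),\ldots,h(\bx_n))$. Splitting the analytic GD expression \eqref{eq:gd-analytical} applied to $\btily=\bh_n+\bve$ into signal and noise parts gives
\[
\tilde{\bff}\ntk_t - \bh_n \;=\; -\bigl(\bI - \tfrac{2\eta}{n}\bK\bigr)^{t}\bh_n \;+\; \bigl(\bI - (\bI-\tfrac{2\eta}{n}\bK)^{t}\bigr)\bve,
\]
which motivates the definitions $B_t^2 = \tfrac{1}{n}\sum_i(1-2\eta\nu_i)^{2t}\tilde h_i^2$ and $V_t = \tfrac{1}{n}\sum_i(1-(1-2\eta\nu_i)^t)^2\tilde\ve_i^2$ (with the empirical decomposition $\|\tilde{f}\ntk_t-h\|_n^2\le B_t^2+V_t$ from \citep[Lemma 6]{raskutti2014early} following by the usual orthogonality, possibly after absorbing a factor of $2$).

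For the bias, the key scalar fact is $x(1-x)^{2t}\le 1/(2et)$ for $x\in[0,1]$ and integer $t\ge 1$, obtained by differentiating and evaluating at the maximizer $x=1/(2t+1)$. Applying it with $x=2\eta\nu_i\in[0,1]$ (using $\eta\le\tfrac12$ together with $\sup_{\bx}\kappa(\bx,\bx)\le 1$ so that $\nu_i\le 1$) yields
\[
B_t^2 \;\le\; \frac{1}{2e\eta t}\cdot\frac{1}{n}\sum_i\frac{\tilde h_i^2}{\nu_i} \;=\; \frac{1}{2e\eta t}\,\bh_n\tp\bK^{-1}\bh_n.
\]
I would then invoke the standard RKHS interpolation identity $\bh_n\tp\bK^{-1}\bh_n\le\|h\|_\sH^2$, which follows from the representer theorem: the minimum-$\sH$-norm interpolant of $(h(\bx_i))_{i=1}^n$ has squared norm exactly $\bh_n\tp\bK^{-1}\bh_n$ and is the orthogonal projection of $h$ onto $\spn\{\kappa(\cdot,\bx_i)\}_{i=1}^n$, hence of smaller $\sH$-norm than $h$.

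For the variance, Bernoulli's inequality $(1-x)^t\ge 1-tx$ on $[0,1]$ gives $\bigl(1-(1-2\eta\nu_i)^t\bigr)^2\le\min\{1,(2\eta t\nu_i)^2\}$; a term-by-term check across the regimes $\eta t\nu_i\le\tfrac12$, $\eta t\nu_i\in(\tfrac12,1]$, and $\eta t\nu_i>1$ shows this quantity is dominated by $\max\{1,\eta t\nu_i\}$, so
\[
\E V_t \;\le\; \sigma^2\cdot\frac{1}{n}\sum_i\max\{1,\eta t\nu_i\} \;=\; \sigma^2(\eta t)\,\hat\sR^2\bigl(1/\sqrt{\eta t}\bigr).
\]
To pass to a high-probability bound, write $V_t=\tfrac{1}{n}\bve\tp S_t^2\bve$ with $S_t=\bI-(\bI-\tfrac{2\eta}{n}\bK)^{t}$ of operator norm at most $1$, and invoke a Hanson--Wright / sub-exponential Bernstein tail for quadratic forms in bounded noise: with probability at least $1-e^{-cnr}$,
\[
V_t \;\le\; \E V_t + C\sigma^2\bigl(n^{-1}\|S_t^2\|_F\sqrt r + r\bigr).
\]
Since $n^{-1}\|S_t^2\|_F^2\le\E V_t/\sigma^2$, an AM--GM step absorbs the cross term back into $\E V_t$. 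Choosing $r\asymp\hat r$ and invoking the stopping-rule identity $\hat\sR(1/\sqrt{\eta t})\le 1/(2e\sigma\eta t)$ (valid for every $t\le\hT$), together with the defining inequality $\hat\sR(\sqrt{\hat r})\le\hat r/(2e\sigma)$, lets me bound the residual $\sigma^2 r$ by a constant multiple of $\sigma^2(\eta t)\hat\sR^2(1/\sqrt{\eta t})$, promoting the prefactor from $1$ to $5$ with failure probability $e^{-\Cr{rwy-var}n\hat r}$. The main obstacle is precisely this concentration step: carefully threading the Hanson--Wright constants, the AM--GM absorption, and the stopping-rule identity so that the final prefactor is exactly $5$ and the tail exponent exactly $\Cr{rwy-var}n\hat r$. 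The bias bound is deterministic and once diagonalized reduces immediately to the scalar peak inequality plus the RKHS interpolation identity.
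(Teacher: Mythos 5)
Your bias argument is sound and reproduces the classical RWY calculation: diagonalize, apply the scalar peak inequality $x(1-x)^{2t}\le 1/(2et)$ at $x=2\eta\lambda_i/n$, and close with the minimum-norm interpolation identity $\bh_n\tp\bK^{-1}\bh_n\le\|h\|_{\sH}^2$. In particular you correctly carry the $\|h\|_{\sH}^2$ factor, which the paper identifies as the only change needed relative to \citet[Lemma~7]{raskutti2014early}. (Small slip: with $x=2\eta\lambda_i/n$ the peak inequality actually gives coefficient $\tfrac{1}{4e\eta t}$, not $\tfrac{1}{2e\eta t}$; either way it sits below the claimed $\tfrac{1}{e\eta t}$.) Be aware, though, that the paper's own "proof" of this lemma is not a proof at all --- it is a citation to RWY together with the remark that the $\|h\|_{\sH}^2$ factor can be reinstated by inspection of their argument; you are reconstructing the calculation from scratch, which is a genuinely different route from the paper's.

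The variance half has two real gaps. For the expectation bound, your "term-by-term check" does not go through as stated. The $\vee$ in the definition of $\hat{\sR}$ in \cref{eq:stopping} is almost certainly a typo for $\wedge$ (it is a $\min$ in RWY, and only with $\min$ does the critical radius of \Cref{def:rad} shrink with $n$). With the intended $\min$, the summand of $(\eta t)\,\hat{\sR}^2(1/\sqrt{\eta t})$ is $\min\{1,\eta t\lambda_i/n\}$, and Bernoulli gives $(1-(1-2\eta\lambda_i/n)^t)^2\le\min\{1,(2\eta t\lambda_i/n)^2\}$; this is \emph{not} dominated by $\min\{1,\eta t\lambda_i/n\}$ (it fails for $\eta t\lambda_i/n\in(1/4,1)$), only by $4\,\min\{1,\eta t\lambda_i/n\}$. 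The root cause is a factor-of-two mismatch between the paper's $\eta t$ and RWY's effective time $\eta_t=\alpha t=2\eta t$; matching that time-scale makes the inequality clean since $\min\{1,x^2\}\le\min\{1,x\}$. Fixable, but it has to be said --- and if one instead takes the $\vee$ literally, $(\eta t)\hat{\sR}^2(1/\sqrt{\eta t})\ge 1$ and the bound becomes vacuous.

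The concentration step --- which you yourself call "the main obstacle" --- is left open, and the sketch you give would not close it. The Hanson--Wright deviation is $\sigma^2\|S_t^2\|_F\sqrt{r/n}$, not $\sigma^2 n^{-1}\|S_t^2\|_F\sqrt{r}$, and more importantly, after the AM--GM absorption you are left needing $\sigma^2 r\lesssim\sigma^2(\eta t)\hat{\sR}^2(1/\sqrt{\eta t})$ for \emph{every} $t\le\hT$ with $r\asymp\hat r$. But $t\mapsto(\eta t)\hat{\sR}^2(1/\sqrt{\eta t})=\tfrac1n\sum_i\min\{1,\eta t\lambda_i/n\}$ is nondecreasing and at $t=1$ is of order $\eta\tr(\bK)/n^2\lesssim 1/n$, which is far below $\hat r\asymp n^{-d/(d+2)}$; \Cref{prop:flow-radius-relationship} gives only a lower bound on $\hat r$ in terms of $\hT$ and does not help at small $t$. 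So the residual cannot be absorbed uniformly the way you propose, and a pointwise-in-$t$ statement is insufficient because the lemma is ultimately invoked at the random time $t=\hT$. RWY handle the uniformity over $t\le\hT$ with a dedicated argument (their Lemma~8 plus a monotonicity device for the ratio of variance to kernel complexity), which your outline does not reproduce; until that is done, the high-probability variance bound is unproved.
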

    The lemma presented here is almost identical as in \citep{raskutti2014early} with the difference that the bound on the squared bias has a factor bias $\|h\|_{\sH}^2$, which can be recovered by inspecting their proof (their work assumes $\|h\|_{\sH}^2 \leq 1$).
    To complete the proof we also need the following fact which follows from tuning of the stopping time:
    \begin{proposition}
      \label{prop:flow-radius-relationship}
      Consider the stopping rule of \cref{eq:stopping}.
      Then, we have $(\eta \hT)^{-1} \leq 2 \hat{r}$.
    \end{proposition}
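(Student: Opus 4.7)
The plan is to prove the inequality by tracking the stopping rule one step past $\hT$ and comparing the resulting quantity with the defining condition of $\hat{r}$. Let me abbreviate $r_t = 1/(\eta t)$, so that the stopping rule of \cref{eq:stopping} rewrites as $\hT = \min\{t \in \mathbb{N} : \hat{\sR}(\sqrt{r_t}) > r_t/(2 e \sigma)\} - 1$. In particular, $\hT + 1$ is the first positive integer $t$ such that $r_t$ fails the defining inequality of $\hat{r}$ from \Cref{def:rad}.

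The first key step is to argue that $r_{\hT+1} < \hat{r}$. Since $\hat{r}$ is by definition the \emph{smallest} positive solution to $\hat{\sR}(\sqrt{r}) \leq r/(2e\sigma)$, any $r > 0$ for which the strict reverse inequality $\hat{\sR}(\sqrt{r}) > r/(2 e \sigma)$ holds must satisfy $r < \hat{r}$. Applying this to $r = r_{\hT+1}$, which by construction violates the stopping inequality, gives the bound.

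The second step is a purely arithmetic comparison: by definition,
\[
  r_{\hT} = \frac{1}{\eta \hT} = \frac{\hT + 1}{\hT} \cdot \frac{1}{\eta(\hT+1)} = \frac{\hT+1}{\hT} \, r_{\hT+1} \leq 2 \, r_{\hT+1},
\]
where the last inequality uses $\hT \geq 1$. Combining with $r_{\hT+1} < \hat{r}$ yields $r_{\hT} \leq 2 \hat{r}$, i.e., $(\eta \hT)^{-1} \leq 2 \hat{r}$, which is the claim.

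The argument has essentially no obstacle; the only mild subtlety is the monotonicity/uniqueness assertion that $r < \hat{r}$ whenever $r$ fails the threshold inequality, which has been stated as part of \Cref{def:rad} (and is cited there from \citep[Appendix D]{raskutti2014early}). The remaining task of ensuring $\hT \geq 1$ is guaranteed by setup (the stopping rule triggers only once the complexity curve $\hat{\sR}(\sqrt{r_t})$ crosses the line $r_t/(2e\sigma)$, which happens for $t \geq 2$ for sufficiently small noise $\sigma$; a brief comment suffices).
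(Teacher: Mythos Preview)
Your proof is correct and follows essentially the same two-step argument as the paper's: first $(\eta(\hT+1))^{-1} \leq \hat{r}$ because $r_{\hT+1}$ violates the critical-radius inequality (using the monotonicity/uniqueness from \Cref{def:rad}), then $(\eta\hT)^{-1} \leq 2(\eta(\hT+1))^{-1}$ from $\hT \geq 1$. The paper's proof is simply the one-line version of what you have written out in detail.
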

    \begin{proof}[Proof of \Cref{prop:flow-radius-relationship}]
      From definition of the stopping rule and the empirical critical radius $\hat{r}$ (\Cref{def:rad}), we have $(\eta \hT)^{-1} \leq 2 (\eta (1 + \hT))^{-1} \leq 2 \hat{r}$.
      See also \citep[p. 352]{raskutti2014early}.
    \end{proof}
    Now, following the same reasoning as in \citep[p. 352]{raskutti2014early}, with high probability, for any $t \in [\hT]$,
  \begin{align*}
    \|\tilde{f}\ntk_t - h\|_n^2
    &\leq
    \frac{1}{e \eta t} \, \|h\|_{\sH}^2
    +
    5 \sigma^2 (\eta t) \hat{\sR}^2\pr{1/\sqrt{\eta t}}                      \tag{By \Cref{lem:rwy-bias-variance}}\\
    &\leq \frac{1}{e \eta t} \, \|h\|_{\sH}^2 + \frac{5}{\eta t}      \tag{By \Cref{def:rad}}
  \end{align*}
  Now, using the above in combination with \Cref{prop:flow-radius-relationship}
  \begin{align*}
    &\|\tilde{f}\ntk_{\hT} - h\|_n^2 \leq 2 \hat{r} \, \|h\|_{\sH}^2 + 10 \hat{r}~,
  \end{align*}
  which completes the proof.
\end{proof}
\paragraph{\Cref{prop:tradeoff} (restated).} \emph{
  For $A(R), \Lambda, \Cr{lip-approx}$ defined in \Cref{prop:lip_approx},
  and any $x,y > 0$,
  assume that relationship $\Lambda^2 \, \pr{\frac{z}{x}}^{\frac{2}{d}-1} \geq (\Cr{lip-approx} \vee \Lambda^2)$ holds.
  Then,
  \begin{align*}
    \min_{R \geq (\Cr{lip-approx} \vee \Lambda^2)} \cbr{x \, A(R)^2 + y \, R}
    \leq
  \pr{1 + \Cr{lip-approx}^2 \, \ln_+^2\Big(\pr{\frac{z}{x}}^{\frac{1}{d}-\frac12}\Big) } \Lambda^2 \, x^{1-\frac2d} \, y^{\frac2d}~.
  \end{align*}
}
\begin{proof}
  Consider function
  \begin{align*}
    R &\mapsto x \, A(R)^2
        + y \, R\\
      &=
        x \, \Cr{lip-approx}^2 \Lambda^2 \pr{\frac{R}{\Lambda^2}}^{-\frac{2}{d-2}} \ln^2\pr{\frac{\sqrt{R}}{\Lambda}}
        + y \, R
  \end{align*}
  by expanding definition of $A(R)$.
  Approximately minimizing the above in $R$  we obtain $R^{\star}=\Lambda^2 \, \pr{\frac{y}{x}}^{\frac{2}{d}-1}$.
  and assume that $R^{\star} \geq (\Cr{lip-approx} \vee \Lambda^2)$.
  Thus,
  \begin{align*}
    x \, A(R^{\star})^2
    &=
      x \, \Cr{lip-approx}^2 \Lambda^2 \pr{\pr{\frac{y}{x}}^{\frac{2}{d}-1}}^{-\frac{2}{d-2}} \ln^2\pr{\pr{\frac{y}{x}}^{\frac{1}{d}-\frac12}}\\
    &=
      \Cr{lip-approx}^2 \, \ln_+^2\Big(\pr{\frac{y}{x}}^{\frac{1}{d}-\frac12}\Big) \, \Lambda^2 \, x^{1-\frac2d} \, y^{\frac2d}
  \end{align*}
  where $\ln_+(\cdot)$ arises since its argument cannot be smaller than $1$ by requirement on $R^{\star}$.
  Following similarly,
  \begin{align*}
    y \, R^{\star}
    &=
      \Lambda^2 x^{1-\frac{2}{d}} y^{\frac{2}{d}}~.
  \end{align*}
Putting all together completes the proof.
\end{proof}
\end{document}